\title{Near-optimal Bayesian Active Learning \\ with Correlated and Noisy Tests}
\author{% Authors with different addresses:
  Yuxin Chen \\ ETH Zurich \\ \texttt{yuxin.chen@inf.ethz.ch}
  \And
  S. Hamed Hassani \\ ETH Zurich \\ \texttt{hamed@inf.ethz.ch}
  \And
  Andreas Krause \\ ETH Zurich \\ \texttt{krausea@ethz.ch}
}
\DeclarePairedDelimiter\abs{\lvert}{\rvert}%
\DeclarePairedDelimiter\norm{\lVert}{\rVert}%
\let\oldabs\abs
\def\abs{\@ifstar{\oldabs}{\oldabs*}}
\let\oldnorm\norm
\def\norm{\@ifstar{\oldnorm}{\oldnorm*}}
\newcommand{\circled}[1]{\tikz[inner sep=.25ex,baseline=-.75ex] \node[circle,draw,color=white,fill=black!60] {#1};}
\newcommand{\pto}{\mathrel{\ooalign{\hfil$\mapstochar$\hfil\cr$\to$\cr}}}
\newcommand{\at}[2][]{#1|_{#2}}
\newcommand\setiplus{\Hiddenvar_i^+}
\newcommand\setiminus{\Hiddenvar_i^-}
\newcommand{\hplus}{h_+}
\newcommand{\hminus}{h_-}
\newcommand{\eqrefcus}[1]{(\ref{#1})}
\newcommand{\argmax}{\operatornamewithlimits{argmax}}
\newcommand{\argmin}{\operatornamewithlimits{argmin}}
\newcommand{\unit}[1]{\mathds{1}\left\{#1\right\}}
\newcommand{\policy}{\pi}
\newcommand{\Obs}{X}
\newcommand{\obs}{x}
\newcommand{\Targetvar}{Y}
\newcommand{\targetvar}{y}
\newcommand{\numtar}{t}
\newcommand{\numtest}{m}
\newcommand{\numrc}{n}
\newcommand{\Hiddenvar}{\Theta}
\newcommand{\hiddenvar}{\theta}
\newcommand{\HvarSupp}{\supp(\Theta)}
\newcommand{\nr}{\epsilon} % noise rate
\newcommand{\nrbar}{\bar{\epsilon}} % noise rate
\newcommand{\gsz}{k'}
\newcommand{\optsz}{k}
\newcommand{\entropy}[1]{\mathbb{H}\paren{#1}}
\newcommand{\bientropy}[1]{\mathbb{H}_2\paren{#1}}
\newcommand{\errorprob}{p_\textsc{err}}
\newcommand{\errormap}{p^{\textsc{MAP}}_\textsc{err}}
\newcommand{\errorst}{p_\textsc{e}}
\newcommand{\epnoisybot}{p_{\textsc{e}, \text{noisy}}^\bot}
\newcommand{\epnlessbot}{p_{\textsc{e}, \text{noiseless}}^\bot}
\newcommand{\epnoisybotatl}{p_{\textsc{e}, \text{noisy}, \psi_\ell}^\bot}
\newcommand{\epnlessbotatl}{p_{\textsc{e}, \text{noiseless}, \psi_\ell}^\bot}
\newcommand{\eptop}{p_{\textsc{e}}^\top }
\newcommand{\eptopatl}{p_{\textsc{e},\psi_\ell}^\top }
\newcommand{\expct}[1]{\mathbb{E}\left[#1\right]}
\newcommand{\expctover}[2]{\mathbb{E}_{#1}\!\left[#2\right]}
\newcommand{\given}{\mid}
\renewcommand{\expct}{\mathbb{E}\expectarg}
\DeclarePairedDelimiterX{\expectarg}[1]{[}{]}{%
  \ifnum\currentgrouptype=16 \else\begingroup\fi
  \activatebar#1
  \ifnum\currentgrouptype=16 \else\endgroup\fi
}
\newcommand{\innermid}{\nonscript\;\delimsize\vert\nonscript\;}
\newcommand{\activatebar}{%
  \begingroup\lccode`\~=`\|
  \lowercase{\endgroup\let~}\innermid
  \mathcode`|=\string"8000
}
\newcommand{\ecedgraph}{\raisebox{-.5ex}{\includegraphics[height=11pt]{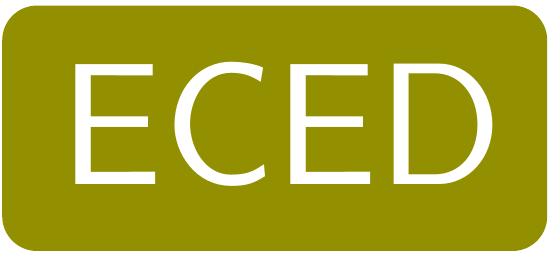}}\xspace}
\newcommand{\optgraph}{\raisebox{-.5ex}{\includegraphics[height=11pt]{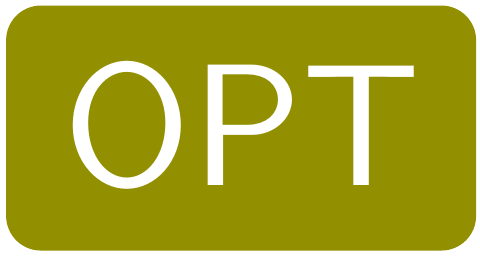}}\xspace}
\newcommand{\ecedgaingraph}{\raisebox{-.5ex}{\includegraphics[height=11pt]{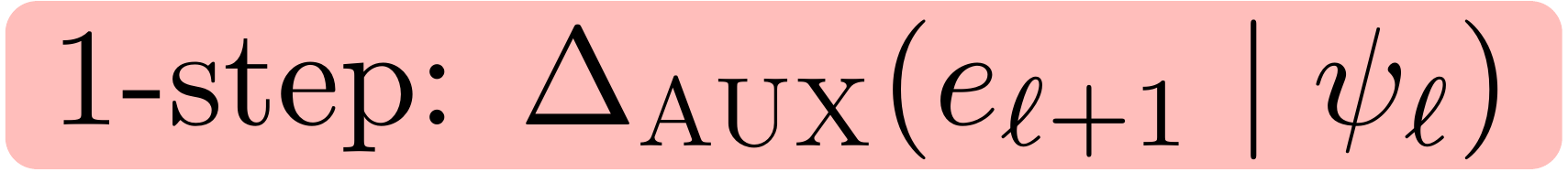}}\xspace}
\newcommand{\optkgaingraph}{\raisebox{-.5ex}{\includegraphics[height=11pt]{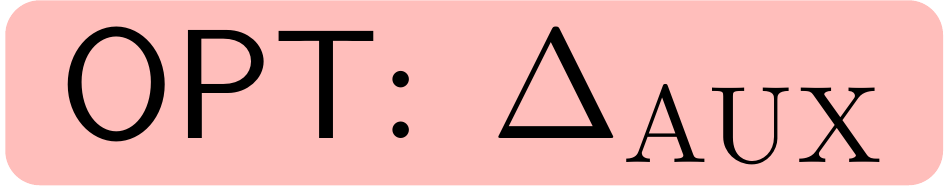}}\xspace}
\newcommand{\fvsperrubgraph}{\raisebox{-.5ex}{\includegraphics[height=12pt]{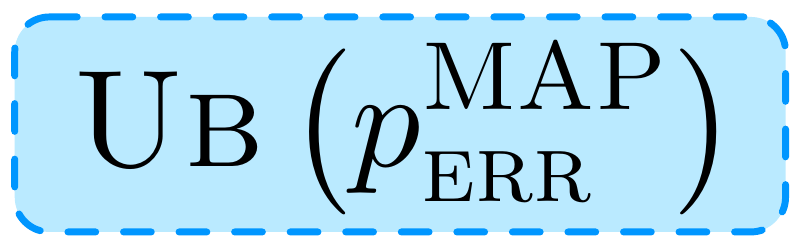}}\xspace}
\newcommand{\fvsperrlbgraph}{\raisebox{-.5ex}{\includegraphics[height=12pt]{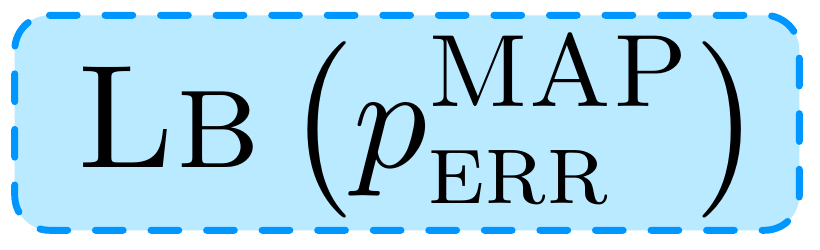}}\xspace}
\newcommand{\adasubmgraph}{\raisebox{-.5ex}{\includegraphics[height=11pt]{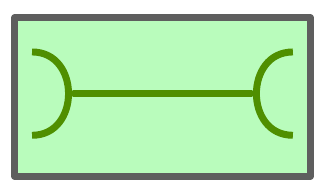}}\xspace}
\newcommand{\ectgainggraph}{\raisebox{-.7ex}{\includegraphics[height=12pt]{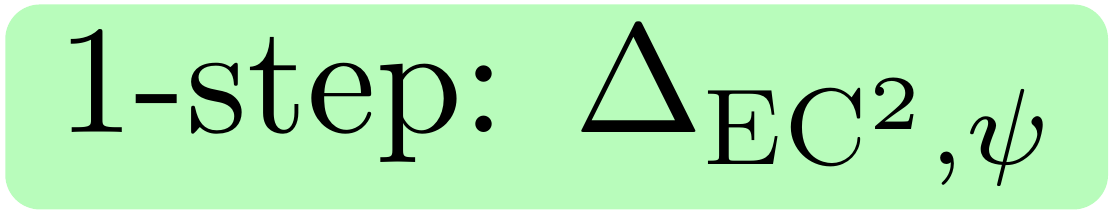}}\xspace}
\newcommand{\optgainectggraph}{\raisebox{-.7ex}{\includegraphics[height=12pt]{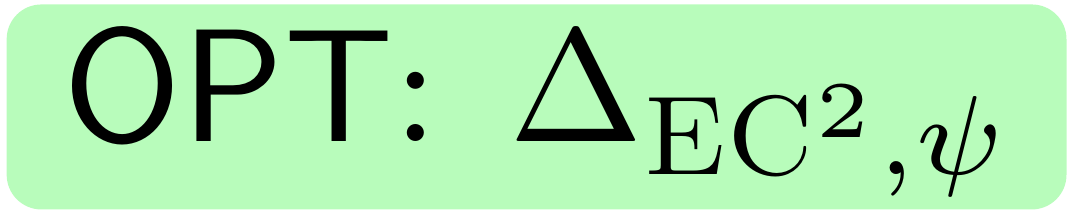}}\xspace}
\newcommand{\auxgainggraph}{\raisebox{-.6ex}{\includegraphics[height=11pt]{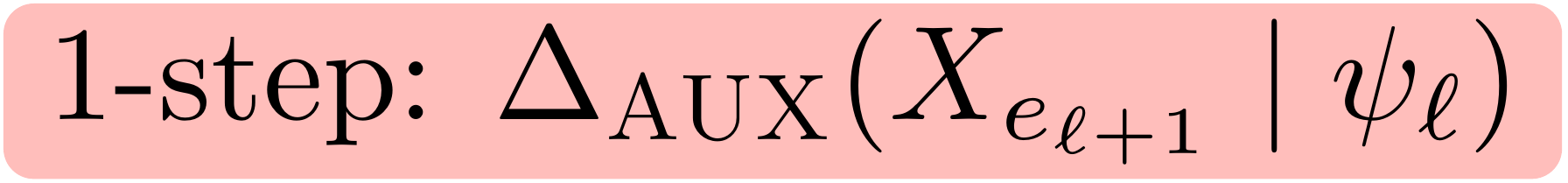}}\xspace}
\newcommand{\auxvsecarrowgraph}{\raisebox{-.3ex}{\includegraphics[height=8pt]{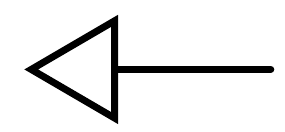}}\xspace}
\newcommand{\auxvsecforkgraph}{\raisebox{-.1ex}{\includegraphics[height=6pt]{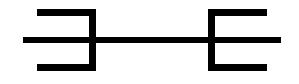}}\xspace}
\newcommand{\ecobj}{f_{\textsc{EC}^2}}
\newcommand{\ecobjatl}{f_{\textsc{EC}^2, \psi_\ell}}
\newcommand{\auxobj}{f_{\textsc{aux}}}
\newcommand{\auxgain}{\Delta_{\textsc{aux}}}
\newcommand{\ecedgainPeroutcomeoffset}{\delta_{\textsc{offset}}}
\newcommand{\ecedgainPerOutcome}{\delta_{\textsc{bs}}}
\newcommand{\eced}{{\sc ECED}}
\newcommand{\auxavg}{f^{\text{avg}}_{\textsc{aux}}}
\newcommand{\ecedgain}{\Delta_{\textsc{ECED}}}
\newcommand{\ecedgainat}[2]{\Delta_{\textsc{ECED},{#1}}\!\left(#2\right)}
\newcommand{\ectat}[1]{f_{\textsc{EC}^2,{#1}}}
\newcommand{\ectgain}{\Delta_{\textsc{EC}^2}}
\newcommand{\ectgainat}[2]{\Delta_{\textsc{EC}^2,{#1}}\!\left(#2\right)}
\newcommand{\cV}{{\mathcal{V}}}
\newcommand{\cX}{{\mathcal{X}}}
\newcommand{\cY}{{\mathcal{Y}}}
\newcommand{\cO}{{\mathcal{O}}}
\newcommand{\paren} [1] {\ensuremath{ \left( {#1} \right) }}
\newcommand{\bigO}[1]{\ensuremath{O\paren{#1}}}
\newcommand{\bigOmega}[1]{\ensuremath{\Omega\paren{#1}}}
\newcommand{\OPT}{{\sf OPT}\xspace}
\newcommand{\opt}{{\sf opt}}
\newcommand{\deltag}{\delta_{\sf g}}
\renewcommand{\Pr}[1]{\ensuremath{\mathbb{P}\left[#1\right] }}
\def \argmax {\mathop{\rm arg\,max}}
\def \argmin {\mathop{\rm arg\,min}}
\newcommand{\NonNegativeReals}{\ensuremath{\mathbb{R}_{\ge 0}}}
\newcommand{\Testset}{\cV}
\newcommand{\Test}{X}
\newcommand{\test}{x}
\newcommand{\targetVarDom}{\cY}
\newcommand{\hiddenVarDom}{\HvarSupp}
\newcommand{\obsDom}{\cO}
\DeclareMathOperator{\cost}{cost}
\DeclareMathOperator{\supp}{supp}
\newcommand{\MAP}{{\textsc{MAP}}\xspace}
\newcommand{\ECT}{{$\textsc{EC}^2$}\xspace}
\newcommand{\VOI}{{\sc VoI}\xspace}
\newcommand{\GBS}{{\sc GBS}\xspace}
\newcommand{\ECED}{{\sc ECED}\xspace}
\newtheorem{theorem}{Theorem}%[section]
\newtheorem{lemma}[theorem]{Lemma}
\theoremstyle{definition}
\theoremstyle{remark}
\numberwithin{equation}{section}
\newcommand{\tableref}[1]{Table~\ref{#1}}
\newcommand{\figref}[1]{Fig.~\ref{#1}}
\newcommand{\secref}[1]{\S\ref{#1}}
\newcommand{\thmref}[1]{Theorem~\ref{#1}}
\newcommand{\lemref}[1]{Lemma~\ref{#1}}
\newcommand{\algref}[1]{Algorithm~\ref{#1}}
\begin{document}

\maketitle

\begin{abstract}
  We consider the Bayesian active learning and experimental design problem, where the goal is to learn the value of some unknown target variable through a sequence of informative, noisy tests. In contrast to prior work, we focus on the challenging, yet practically relevant setting where test outcomes can be conditionally \emph{dependent} given the hidden target variable. Under such assumptions, common heuristics, such as greedily performing tests that maximize the reduction in uncertainty of the target, often perform poorly.

In this paper, we propose \eced, a novel, computationally efficient active learning algorithm, and prove strong theoretical guarantees that hold with correlated, noisy tests. Rather than directly optimizing the prediction error, at each step, \ECED picks the test that maximizes the gain in a surrogate objective, which takes into account the dependencies between tests. Our analysis relies on an information-theoretic auxiliary function to track the progress of \ECED, and utilizes adaptive submodularity to attain the near-optimal bound. We demonstrate strong empirical performance of \eced~on two problem instances, including a Bayesian experimental design task intended to distinguish among economic theories of how people make risky decisions, and an active preference learning task via pairwise comparisons.

\end{abstract}

% !TEX root =  noisyec2_main.tex
\section{Introduction}

Optimal information gathering, i.e., selectively acquiring the most useful data, is one of the central challenges in machine learning. The problem of optimal information gathering has been studied in the context of active learning % \citep{dasgupta04,hanneke2007bound,settles.book12},
\citep{dasgupta04,settles.book12}, Bayesian experimental design \citep{chaloner1995bayesian}, policy making \citep{Runge20111214}, optimal control \citep{smallwood1973optimal}, and numerous other domains. In a typical set-up for these problems, there is some unknown \emph{target variable} $\Targetvar$ of interest, and a set of \emph{tests} which correspond to observable variables defined through a probabilistic model. The goal is to determine the value of the target variable with a sequential \emph{policy} -- which adaptively selects the next test based on previous observations -- such that the cost of performing these tests is minimized.

Deriving the optimal testing policy is NP-hard in general \citep{chakaravarthy2007decision}; however, under certain conditions, some approximation results are known. In particular, if test outcomes are deterministic functions of the target variable (i.e., in the \emph{noise-free} setting), a simple greedy algorithm, namely Generalized Binary Search (GBS), is guaranteed to provide a near-optimal approximation of the optimal policy \citep{kosaraju1999optimal}.
On the other hand, if test outcomes are noisy, but the outcomes of different tests are \emph{conditionally independent} given $\Targetvar$ (i.e., under the Na{\"i}ve Bayes assumption), then using the most informative selection policy, which greedily selects the test that maximizes the expected reduction in uncertainty of the target variable (quantified in terms of Shannon entropy), is guaranteed to perform near-optimally \citep{chen15mis}.

However, in many practical problems, due to the effect of noise or complex structural assumptions in the probabilistic model (beyond Na{\"i}ve Bayes), we only have access to tests that are \emph{indirectly informative} about the target variable $\Targetvar$ (i.e., test outcomes depend on $\Targetvar$ through another hidden random variable. See \figref{fig:model_ec2}.) -- as a consequence, the test outcomes become conditionally dependent given $\Targetvar$.
Consider a medical diagnosis example, where a doctor wants to predict the best treatment for a patient, by carrying out a series of medical tests, each of which reveals some information about the patient's physical condition. Here, outcomes of medical tests are conditionally independent given the patient's condition, but are \emph{not} independent given the treatment, which is made based on the patient's condition. It is known that in such cases, both GBS and the most informative selection policy (which myopically maximizes the information gain w.r.t. the distribution over $\Targetvar$) can perform arbitrarily poorly. \citet{golovin10near} then formalize this problem as an \emph{equivalence class determination} problem (See \secref{sec:ec2}), and show that if the tests' outcomes are noise-free, then one can obtain near-optimal expected cost, by running a greedy policy based on a surrogate objective function. Their results rely on the fact that the surrogate objective function exhibits \emph{adaptive submodularity} \citep{golovinjair2011}, a natural diminishing returns property that generalizes the classical notion of submodularity to adaptive policies. Unfortunately, in the more general setting where tests are noisy, no efficient policies are known to be provably competitive with the optimal policy.

\vspace{-2mm}
\paragraph{Our Contribution.}
In this paper, we introduce \textbf{E}quivalence \textbf{C}lass \textbf{E}dge \textbf{D}iscounting (\ECED), % \footnote{\yuxin{we still need to discuss the name}}, % \hamed{What does it stand for?}
a novel algorithm for practical Bayesian active learning and experimental design problems, and prove strong theoretical guarantees with correlated, noisy tests. In particular, we focus on the setting where the tests' outcomes  indirectly depend on the target variable %indirectly through some other hidden random variable % provide only \emph{indirect information} about the target variable $\Targetvar$,
(and hence conditionally dependent given $\Targetvar$), and we assume that the outcome of each test can be corrupted by some random, \emph{persistent} noise (\secref{sec:model}). % The goal is to find a minimal-cost policy that attains some certain error rate in predicting $\Targetvar$.
We prove that when the test outcomes are binary, and the noise on test outcomes are mutually independent, then \ECED is guaranteed to obtain near-optimal cost, %in the worst case,  % in order to attain error rate below some certain threshold,
compared with an optimal policy that achieves a lower prediction error (\secref{sec:eced}). We develop a theoretical framework for analyzing such sequential policies, where we leverage an information-theoretic auxiliary function to reason about the effect of noise, and combine it with % a generalization \ak{?} of
the theory of adaptive submodularity to attain the near-optimal bound (\secref{sec:analysis}). The key insight is to show that \ECED is effectively making progress in the long run as it picks more tests, even if the myopic choices of tests do not have immediate gain in terms of reducing the uncertainty of the target variable. % \hamed{Suggestion: Put the sentence starting with "the key insight.." here (instead of its current position)}
We demonstrate the compelling performance of \eced~on two real-world problem instances, % including two pool-based active leaning tasks for binary classification,
a Bayesian experimental design task intended to distinguish among economic theories of how people make risky decisions, and an active preference learning task via pairwise comparisons (\secref{sec:exp}). %To facilitate better understanding, provide table of notations, full proofs, in to the supp
To facilitate better understanding, we provide the detailed proofs, illustrative examples and a third application on pool-based active learning in the supplemental material.

%%% Local Variables:
%%% mode: latex
%%% TeX-master: "noisyec2_main"
%%% End:

% !TEX root =  noisyec2_main.tex
\vspace{-2mm}
\section{Preliminaries and Problem Statement}\label{sec:model}
\vspace{-2mm}
% In this section, we introduce notations and problem setup, and formally state the problem to be addressed in the paper.
% \subsection{The Basic Model}
% \ak{Need a better motivation.}
\paragraph{The Basic Model}
Let $\Targetvar$ be the target random variable whose value we want to learn. The value of $\Targetvar$, which ranges among set $\targetVarDom = \{\targetvar_1, \dots, \targetvar_\numtar\}$, depends deterministically on another random variable $\Hiddenvar \in \HvarSupp = \{\hiddenvar_1, \dots, \hiddenvar_n\}$ with some known distribution $\Pr{\Hiddenvar}$. Concretely, % we consider the case where
% the outcome $\targetvar$ of $\Targetvar$ is modeled as a deterministic function of $\Hiddenvar$ (i.e., $r: \hiddenVarDom \rightarrow \targetVarDom$), $\Targetvar = r(\Hiddenvar)$.
there is a deterministic mapping $r: \hiddenVarDom \rightarrow \targetVarDom$ that gives $\Targetvar = r(\Hiddenvar)$.
Let $\cX = \{X_1, \dots, X_m\}$ be a collection of discrete observable variables that are statistically dependent on $\Hiddenvar$ (see \figref{fig:model_ec2}). % In our medical diagnosis example,
\begin{wrapfigure}{r}{0.3\textwidth}
  \vspace{-.3cm}
  \begin{center}
    \includegraphics[width=.25\textwidth]{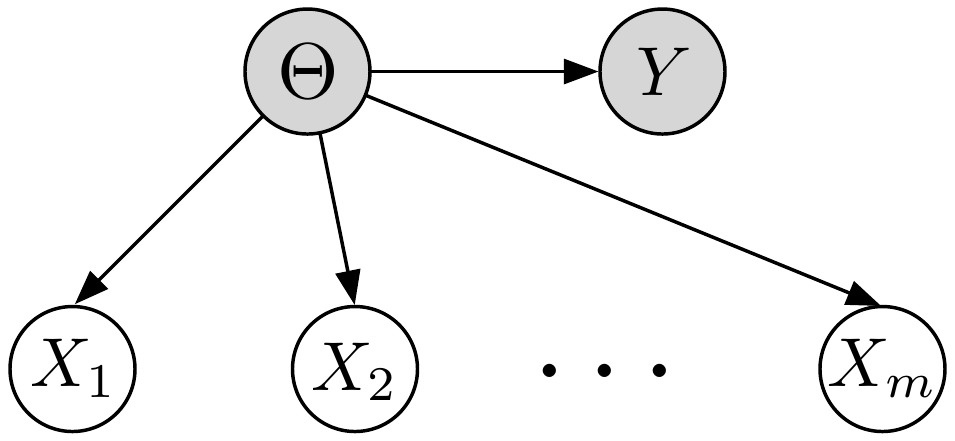}
  \end{center}
  \vspace{-.4cm}
  \caption{The basic model} \label{fig:model_ec2}
  \vspace{-.4cm}
\end{wrapfigure}
We use $e \in \Testset \triangleq \{1,\dots,m\}$ as the indexing variable of a test. Performing each test $\Obs_e$ produces an outcome $x_e \in \obsDom$ (here, $\obsDom$ encodes the set of possible outcomes of a test), and incurs a \emph{unit cost}.  % some cost \yuxin{unit cost?}, which is given by $c: [m] \rightarrow \NonNegativeReals$.
We can think of $\Hiddenvar$ as representing the underlying ``root-cause'' among a set of $n$ possible root-causes of the joint event $\{X_1, \dots, X_m\}$, and $\Targetvar$ as representing the optimal ``target action'' to be taken for root-cause $\Hiddenvar$. Also, each of the $\Obs_e$'s is a ``test'' that we can perform, whose observation reveals some information about $\Hiddenvar$. In our medical diagnosis example (see \figref{fig:medicaldiagnosis}),  $\Test_e$'s encode tests' outcomes, $\Targetvar$ encodes the treatment, and $\Hiddenvar$ encodes the patient's physical condition. %(which can be represented by a vector of outcomes of all medical tests).

Crucially, we assume that $\Obs_e$'s are \emph{conditionally independent} given $\Hiddenvar$, i.e., $\Pr{\Hiddenvar,\Obs_1,\dots,\Obs_m} = \Pr{\Hiddenvar} \prod_{i=1}^m \Pr{X_i \given \Hiddenvar}$ with known parameters.
% and that one can perform efficient inference\footnote{We assume that one can efficiently compute marginal and conditional distributions.} on the joint probability distribution $\Pr{\Hiddenvar,\Obs_1,\dots,\Obs_m}$.
Note that noise is implicitly encoded in our model, as we can equivalently assume that $\Test_e$'s are first generated from a deterministic mapping of $\Hiddenvar$, and then perturbed by some random noise. As an example, if test outcomes are binary, then we can think of $\Test_e$ as resulting from flipping the deterministic outcome of test $e$ given $\Hiddenvar$ with some probability, % \hamed{why small?}
and the flipping events of the tests are mutually independent.
\paragraph{Problem Statement}\label{sec:problemdef}
We consider sequential, adaptive policies for picking the tests. Denote a policy by $\pi$. In words, a policy specifies which test to pick next, as well as when to stop picking tests, based on the tests picked so far and their corresponding outcomes. After each pick, our observations so far can be represented as a partial realization % \ak{inconsistent notation with 3.3 }
$\Psi\in 2^{\Testset\times \obsDom}$ (e.g., $\Psi$ encodes what tests have been performed and what their outcomes are). Formally, a policy $\policy:2^{\Testset \times \obsDom} \pto \Testset$ is defined to be a partial mapping from partial realizations $\Psi$ to tests.
% \hamed{This paragraph is a bit dense. CAn you please expand a bit? Especially, we should make the previous two sentences easily understandable by the reader. We can perhaps talk about test-observation pairs?}
% We adopt the notation $\pi_{[k]}$ for policy $\pi$ of length $k$.
Suppose that running $\pi$ till termination returns a sequence of test-observation pairs of length $k$, denoted by $\psi_\pi$, i.e., $\psi_\pi \triangleq \{(e_{\pi, 1}, \obs_{e_{\pi,1}}), (e_{\pi, 2}, \obs_{e_{\pi, 2}}), \cdots, (e_{\pi, \optsz}, \obs_{e_{\pi, \optsz}}) \}$. This can be interpreted as a random path\footnote{What $\pi$ returns in the end is random, dependent on the outcomes of selected tests. % (as well as the decisions that $\pi$ has made).
} taken by policy $\pi$. Once $\psi_\pi$ is observed, we obtain a new posterior on $\Hiddenvar$ (and consequently on $\Targetvar$). After observing $\psi_\pi$,
% we define the expected error probability $\errorprob({\psi_\pi})$ in predicting $Z$ as
% the error probability $\errorprob^\MAP({\psi_\pi})$ of the \MAP estimator of  $\Targetvar$ is
the \MAP estimator of $\Targetvar$ has error probability
% \begin{align}
%   \errorprob({\psi_\pi}) = \sum_z p(z \given \psi_\pi) (1 - p(z \given \psi_\pi)) \label{eq:errorprob_psipi}
% \end{align}
% \begin{align}
%   \errorprob^{\MAP}({\psi_\pi}) = \max_z (1 - p(z \given \psi_\pi)) \label{eq:errorprob_psipi}
% \end{align}
$\errorprob^{\MAP}({\psi_\pi}) \triangleq 1 - \max_{\targetvar \in \targetVarDom} p(\targetvar \given \psi_\pi)$.
The expected error probability after running policy $\pi$ is then defined as $ \errorprob({\pi}) \triangleq \expctover{\psi_\pi}{\errorprob^\MAP({\psi_\pi})}$. In words, $\errorprob({\pi})$ is the expected error probability w.r.t. the posterior, given the final outcome of $\pi$.
Let the (worst-case) cost of a policy $\pi$ be % $\cost(\pi) \triangleq \max_{\psi_{\pi}} \sum_{(e,x_e) \in \psi_\pi} c(e)$.
$\cost(\pi) \triangleq \max_{\psi_{\pi}} {\left|\psi_{\pi}\right|}$, i.e., the maximum number of tests performed by $\pi$ over all possible paths it takes.
Given some small tolerance $\delta \in [0,1]$, we seek a policy with the minimal cost, such that upon termination, it will achieve expected error probability less than $\delta$. Denote such policy by $\OPT(\delta)$. Formally, we seek
\begin{align}
  \OPT(\delta) \in \argmin_{\policy}{\cost(\policy)},\quad
  \text{s.t.} ~\errorprob({\pi}) < \delta.
  \label{eq:drd}
\end{align}

% \vspace{-2mm}
\subsection{Special Case: The Equivalence Class Determination Problem}\label{sec:ec2}
\vspace{-1mm}
% \ak{Refer to the figure}
\begin{figure*}[t]
  \centering
  \subfigure[Medical diagnosis example]{%
    \includegraphics[width=.28\textwidth]{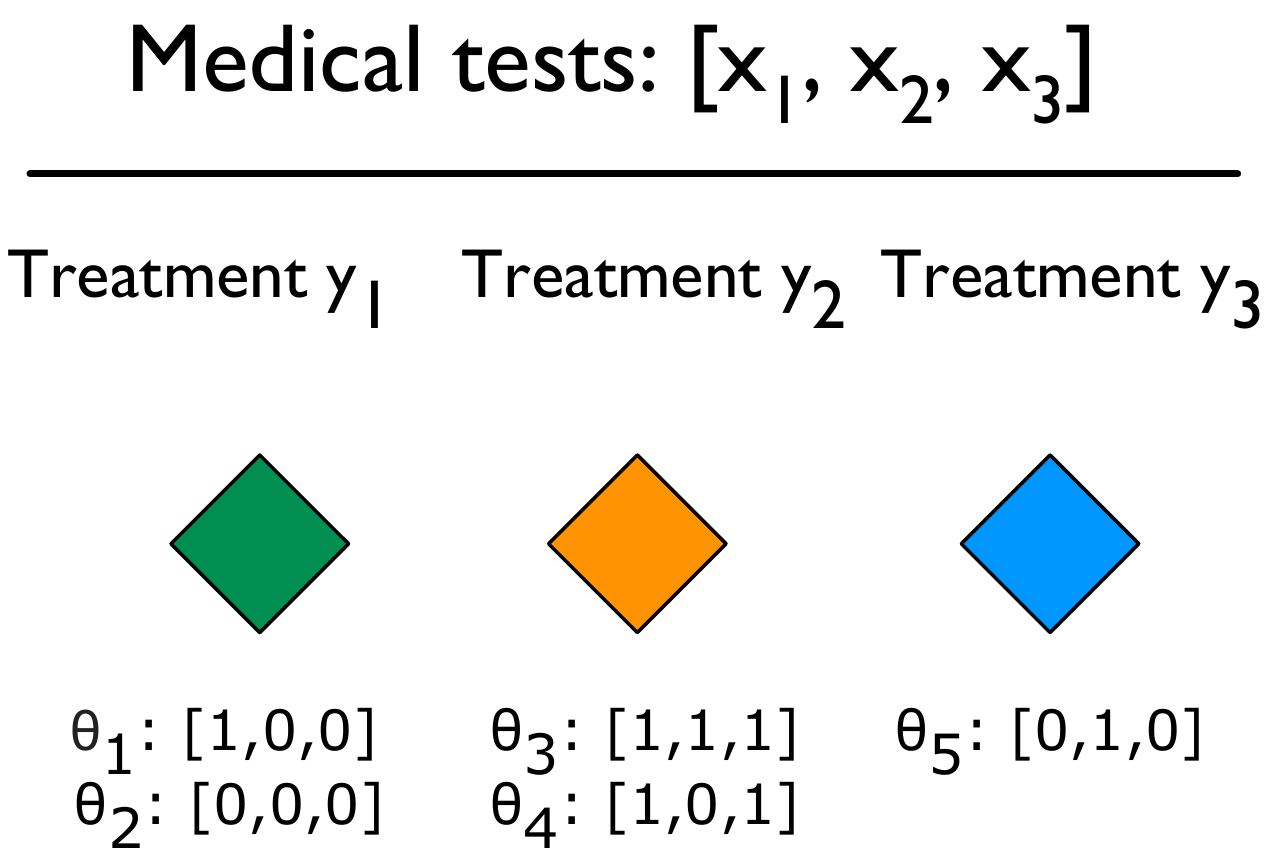}
    \label{fig:medicaldiagnosis}
  }\qquad
  \subfigure[Initialization]{%
    \includegraphics[width=.18\textwidth]{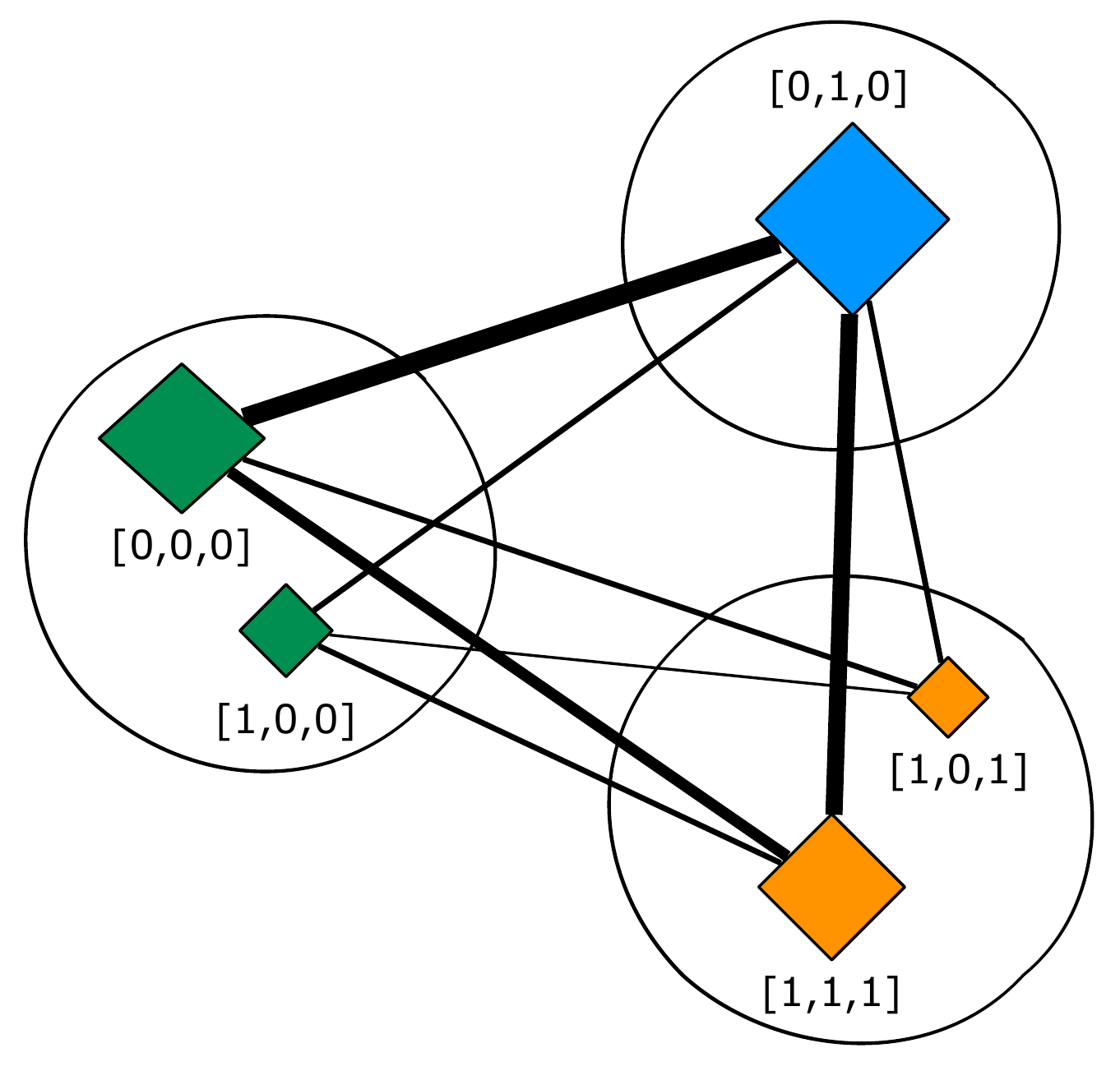}
    \label{fig:ecd}
  }\quad
  \subfigure[\ECT]{%
    \includegraphics[width=.18\textwidth]{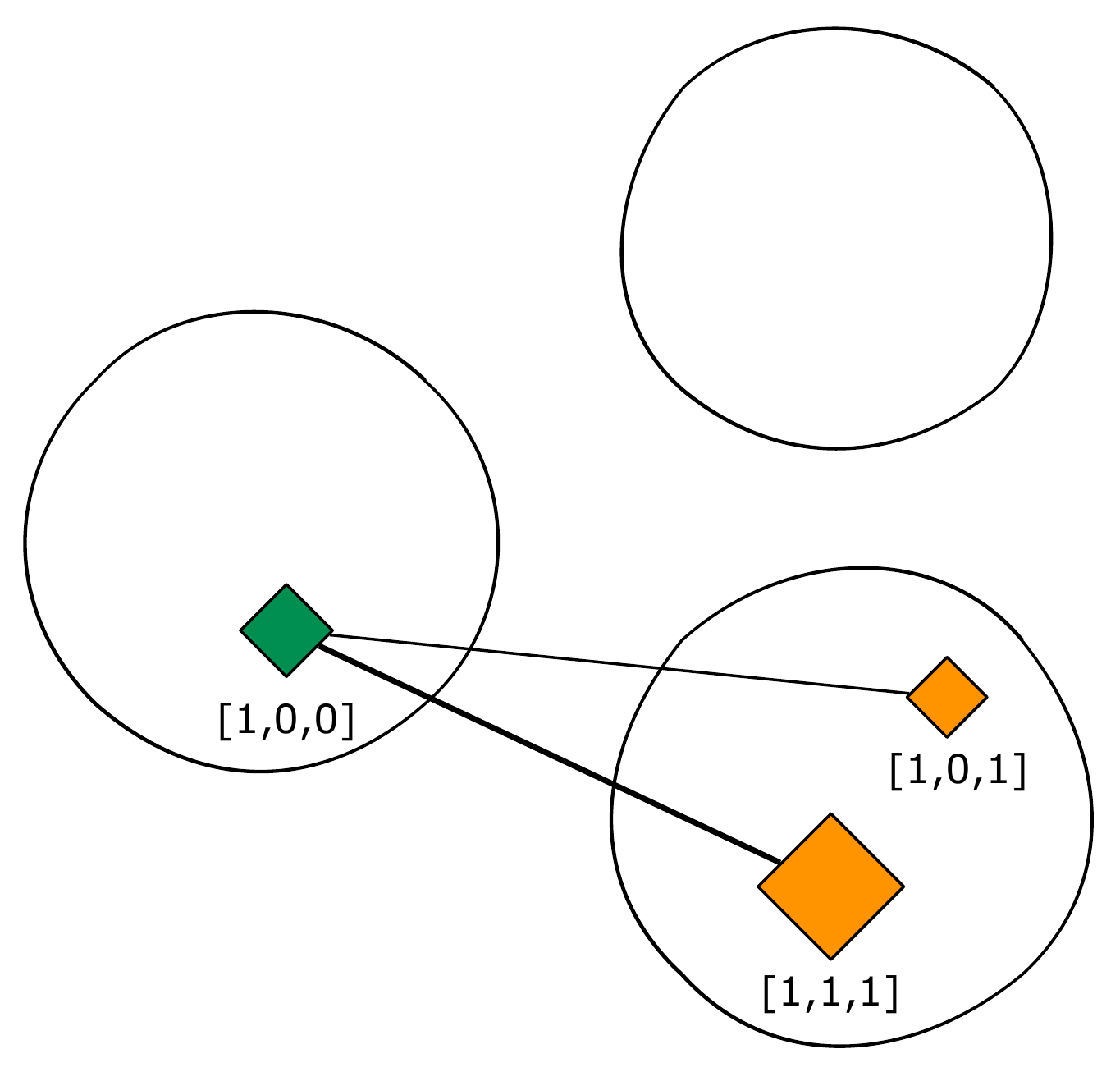}
    \label{fig:ecd_cut}
  }\quad
  \subfigure[\ECED]{%
    \includegraphics[width=.18\textwidth]{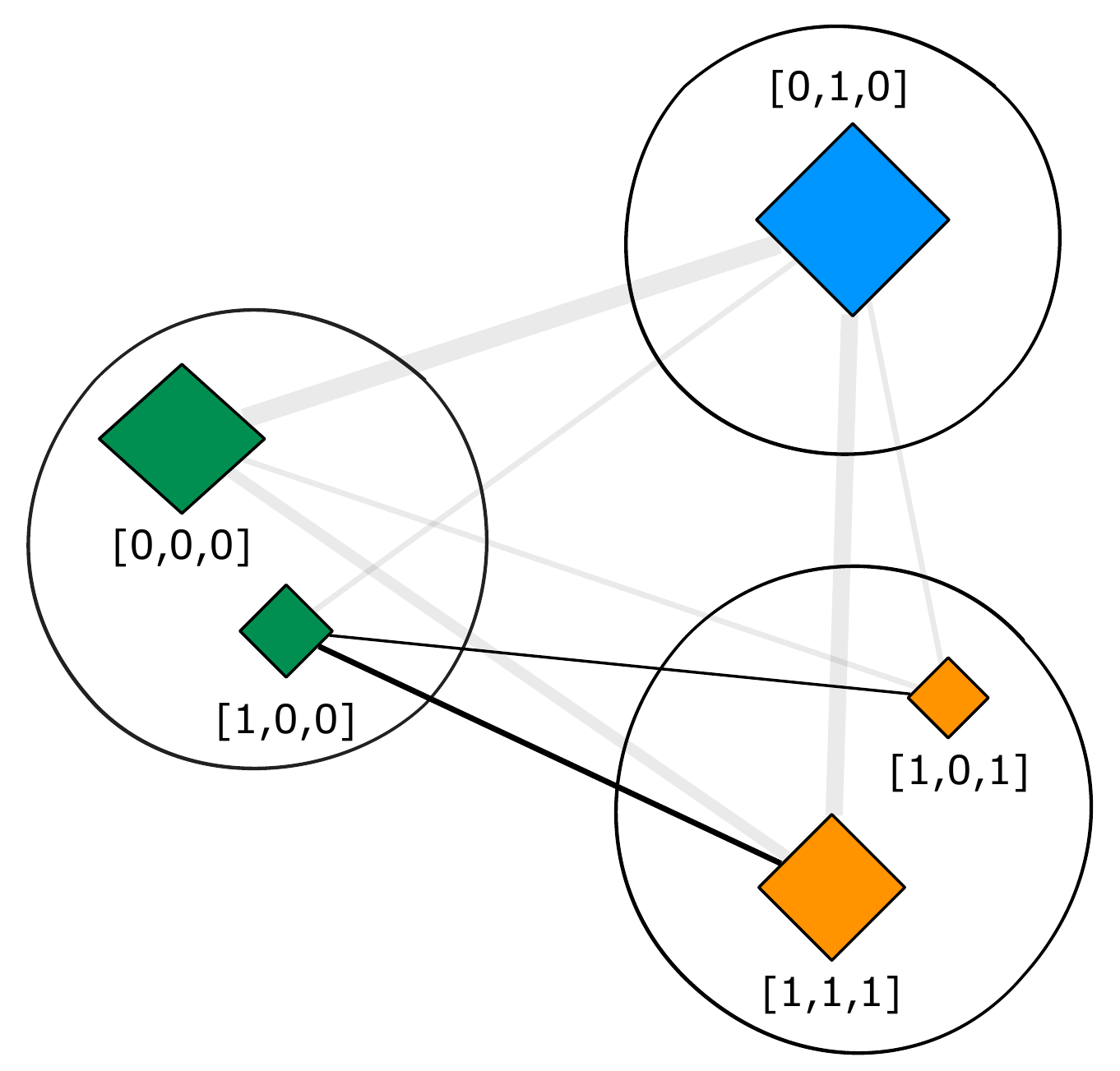}
    \label{fig:eced}
  }
  \caption{% The equivalence class determination problem, the \ECT and \eced~algorithms.
    (a) shows an illustrative example of the medical diagnosis problem. In (b), we initialize \ECT, by drawing edges between all pairs of root-causes (diamonds) that are mapped into different treatments (circles). In (c), we run \ECT and remove all the edges incident to root-causes $\hiddenvar_2 [0,0,0]$ and $\hiddenvar_5 [0,1,0]$ if we observe $X_1=1$. (d) \ECED, instead, discounts the edge weights accordingly.} \label{fig:example_eced}
\vspace{-3mm}
\end{figure*}

Note that computing the optimal policy for Problem~(\ref{eq:drd}) is intractable in general. % Clearly, the outcomes of tests are no longer conditional independent given the target variable $Z$.
When $\delta = 0$, this problem reduces to the equivalence class determination problem \citep{golovin10near, bellala2010extensions}. Here, the target variables are referred to as \emph{equivalence classes}, % (or \emph{meta-objects}\footnote{\yuxin{\cite{bellala2010extensions}}}),
since each $\targetvar \in \targetVarDom$ corresponds to a subset of root-causes in $\hiddenVarDom$ that (equivalently) share the same ``action''.

If tests are noise-free, i.e., $\forall e,~\Pr{\Test_e \given \Hiddenvar} \in \{0,1\}$, this problem can be solved near-optimally by the \emph{equivalence class edge cutting} (\ECT) algorithm \citep{golovin10near}. As is illustrated in \figref{fig:example_eced}, \ECT employs an edge-cutting strategy based on a weighted graph $G = (\hiddenVarDom, E)$, where vertices represent root-causes, and edges link root-causes that we want to distinguish between. Formally, $E \triangleq \{\{ \hiddenvar, \hiddenvar' \}: r(\hiddenvar)\neq r(\hiddenvar')\}$ consists of all (unordered) pairs of root-causes corresponding to different target values (see \figref{fig:ecd}). We define a weight function $w: E \rightarrow \NonNegativeReals$ by $w(\{\hiddenvar, \hiddenvar'\}) \triangleq \Pr{\hiddenvar} \cdot \Pr{\hiddenvar'}$, i.e., as the product of the probabilities of its incident root-causes.
We extend the weight function on sets of edges $E' \subseteq E$, as the sum of weight of all edges $\{\hiddenvar,\hiddenvar'\}\in E'$, i.e., $w(E') \triangleq \sum_{\{\hiddenvar, \hiddenvar'\}\in E'}w(\{\hiddenvar, \hiddenvar'\})$. % \hamed{What is $E'$?}

Performing test $e \in \Testset$ with outcome $\test_e$ is said to ``\emph{cut}'' an edge, if at least one of its incident root-causes is inconsistent with $x_e$ (See \figref{fig:ecd_cut}). Denote $E(\test_e) \triangleq \{\{ \hiddenvar, \hiddenvar' \}\in E: \Pr{\test_e\given \hiddenvar} = 0~\vee~\Pr{\test_e\given \hiddenvar'} = 0\}$ as the set of edges cut by observing $\test_e$. The \ECT objective (which is greedily maximized per iteration of \ECT), is then defined as the total weight of edges cut by the current partial observation $\psi_\pi$: $ \ecobj (\psi_\pi) \triangleq w\Big( \bigcup_{(e,x_e)\in \psi_\pi} E(x_e) \Big).$

% We define $ \ecobj (\psi_\pi) \triangleq w\Big( \bigcup_{(e,x_e)\in \psi_\pi} E(x_e) \Big)$ as the weight of edges cut by partial realization $\psi_\pi$. The \ECT algorithm, then, at iteration, picks the test that cuts the most weight of edges.
% \vspace{2cm}
% The \ECT algorithm \cite{golovin10near} considers hypotheses as nodes in a graph $G = (V, E)$, and defines weighted edges between hypotheses in different decision regions: $E = \cup_{i \neq j} \{\{ \hypothesis, \hypothesis' \}: \hypothesis \in \region_i, \hypothesis' \in \region_j\}$, where the weight of an edge is defined as $w(\{\hypothesis, \hypothesis'\}) = \Pr{\hypothesis} \cdot \Pr{\hypothesis'}$; similarly, the weight of a set of edges is $w(E') = \sum_{e\in E'}w(e)$. An edge is consistent with the observation iff both hypotheses incident to the edge are consistent. Hence, a test $t$ with outcome $x_t$ is said to cut edges $E(x_t) = \{\{ \hypothesis, \hypothesis' \}\in E: f_t(\hypothesis)\neq x_t \vee f_t(\hypothesis')\neq x_t\}$. Performing tests will cut edges inconsistent with the observed test outcomes, and we aim to eliminate all inconsistent edges while minimizing the expected cost incurred.
% The \ECT objective is defined as the total weight of edges cut:
% $
% \ecobj (\bobs_\cA) := w\Big( \bigcup_{t\in \cA} E(x_t) \Big).
% $
% Note that $\ecobj (\bobs_\cA)$ can be efficiently computed in linear time (in the number of hypotheses) without explicitly constructing the graph (see supplement for details).

The \ECT objective function is \emph{adaptive submodular}, and \emph{strongly adaptive monotone} \citep{golovin10near}.
Formally, let $\psi_1, \psi_2 \in 2^{\Testset \times \obsDom}$ be two partial realizations of tests' outcomes. We call $\psi_1$ a {\em subrealization} of $\psi_2$, denoted as  $\psi_1\preceq\psi_2$, if every test seen by $\psi_1$ is also seen by $\psi_2$, and $\Pr{\psi_2 \given \psi_1} > 0$. A function $f:2^{\Testset \times \obsDom} \rightarrow \mathbb{R}$ is called {\em adaptive submodular} w.r.t.~a distribution $\mathbb{P}$, if for any $\psi_1\preceq\psi_2$ and any $\Obs_e$ it holds that $\Delta(\Obs_e \given \psi_1) \geq \Delta(\Obs_e \given \psi_2)$, where $\Delta(\Obs_e\given \psi) := \expctover{\obs_e}{f(\psi \cup \{(e, x_e)\})- f(\psi) \given \psi}$ (i.e., ``adding information earlier helps more''). Further, function $f$ is called {\em strongly adaptively monotone} w.r.t.~$\mathbb{P}$, if for all $\psi$, test $e$ not seen by $\psi$, and $\obs_e \in \obsDom$, it holds that $ f(\psi) \leq f(\psi \cup \{(e, x_e)\})$ (i.e., ``adding new information never hurts''). For sequential decision problems satisfying adaptive submodularity and strongly adaptive monotonicity, the policy that greedily, upon having observed $\psi$, % \hamed{$\psi$ instead of $\psi_1$?},
selects the test $e^*\in\argmax_e \Delta(X_e\given\psi)$, % \textcolor{blue}{/ c(e)},
is guaranteed to attain near-minimal cost \citep{golovinjair2011}. %\hamed{reference}

% \ak{Perhaps explain what breaks in the noisy setting to motivate what is to come. E.g.,  state that the expected error reduction is not adaptive submodular, etc.?}

In the noisy setting, however, we can no longer attain 0 error probability (or equivalently, cut all the edges constructed for \ECT), even if we exhaust all tests. A natural approach to solving Problem~\eqrefcus{eq:drd} for $\delta>0$ would be to pick tests greedily maximizing the expected reduction in the error probability $\errorprob$. However, this objective is not adaptive submodular; in fact, as we show in the supplemental material, such policy can perform arbitrarily badly if there are complementaries among tests, i.e., the gain of a set of tests can be far better than sum of the individual gains of the tests in the set. % a set of tests can be far better than the sum of its parts.
Therefore, motivated by the \ECT objective in the noise-free setting, we would like to optimize a surrogate objective function which captures the effect of noise, while being amenable to greedy optimization. % free of complementaries.

%%% Local Variables:
%%% mode: latex
%%% TeX-master: "noisyec2_main"
%%% End:

% !TEX root =  noisyec2_main.tex
\section{The \eced~Algorithm}\label{sec:eced}
We now introduce \eced~for Bayesian active learning under correlated noisy tests, which strictly generalizes \ECT to the noisy setting, while preserving the near-optimal guarantee.

% Here we focus on the algorithmic aspects of the algorithm, and defer our theoretical analysis to the next section.
% We analyze the \eced~algorithm which does multiplicative weight update on the edges between hypotheses without removing any hypotheses.
\vspace{-1mm}
\paragraph{\ECT with Bayesian Updates on Edge Weights}
% To gain intuition, assume that tests have binary outcomes.
In the noisy setting, the test outcomes are not necessarily deterministic given a root-cause, i.e., $\forall \hiddenvar,~ \Pr{\Test_e \given \hiddenvar} \in [0,1]$. Therefore, one can no longer ``cut away'' a root-cause $\hiddenvar$ by observing $x_e$,  % (i.e., set $\Pr{\Hiddenvar=\hiddenvar \given \test_e} := 0$), \hamed{I think the phrase in parenthesis is confusing and we can just omit it}
as long as $\Pr{\Test_e = \test_e \given \hiddenvar} > 0$. In such cases, a natural extension of the edge-cutting strategy will be -- instead of cutting off edges -- to \emph{discount} the edge weights through Bayesian updates: % Assume that tests have binary outcomes. % W.l.o.g.\footnote{Otherwise, we can use its negation so that the label is flipped.}, we can assume that tests' outcomes are defined in such a way that $\Pr{x_e=1\given y}\geq 0.5$.
After observing $x_e$, we can discount the weight of an edge $\{\hiddenvar, \hiddenvar'\}$, by multiplying the probabilities of its incident root-causes with the likelihoods of the observation\footnote{Here we choose \emph{not} to normalize the probabilities of $\hiddenvar, \hiddenvar'$ to their posterior probabilities. Otherwise, we can end up having 0 gain in terms of edge weight reduction, even if we perform a very informative test.}: $w(\{\hiddenvar, \hiddenvar'\} \given \test_e) := \Pr{\hiddenvar} \Pr{\hiddenvar'} \cdot \Pr{\test_e \given \hiddenvar} \Pr{\test_e \given \hiddenvar' } = \Pr{\hiddenvar, \test_e} \cdot \Pr{\hiddenvar', \test_e}$. This gives us a greedy policy that, at every iteration, picks the test that has the maximal expected reduction in total edge weight. %\hamed{total edge weight?}
We call such policy \ECT-Bayes. % as it follows \ECT with Bayesian updates on edge weight.
% Therefore, we define the objective function as $ \ecbayesobj(\psi_\pi) = \sum_{\{\hiddenvar, \hiddenvar'\}\in E}w(\{\hiddenvar, \hiddenvar' \given \psi_\pi\})$.
Unfortunately, as we demonstrate later in \secref{sec:exp}, this seemingly promising update scheme is not ideal for solving our problem: it tends to pick tests that are very noisy, which do not help facilitate differentiation among different target values. %\hamed{do we use equivalence classes or target values?}
Consider a simple example with three root-causes distributed as $\Pr{\hiddenvar_1} = 0.2, \Pr{\hiddenvar_2} = \Pr{\hiddenvar_3} = 0.4$, and two target values $r(\hiddenvar_1) = r(\hiddenvar_2) = \targetvar_1, r(\hiddenvar_3) = \targetvar_2$. We want to evaluate two tests: (1) a purely noisy test $\Test_1$, i.e., $\forall \hiddenvar,~\Pr{\Test_1 = 1 \given \hiddenvar} = 0.5$, and (2) a noiseless test $\Test_2$ with $\Pr{\Test_2 = 1 \given \hiddenvar_1} = 1$ and $\Pr{\Test_2 = 1 \given \hiddenvar_2} = \Pr{\Test_2 = 1 \given \hiddenvar_3} = 0$. One can easily verify that by running \ECT-Bayes, one actually prefers $\Test_1$ (with expected reduction in edge weight 0.18, as opposed to 0.112 for $\Test_2$).% the above described \ECT algorithm with Bayesian updates on edge weights, one actually favors $\Test_1$ (with gain $f(\Test_1)=0.156 > f(\Test_2) = 0.142$).

\vspace{-1mm}
\paragraph{The \eced~Algorithm}
The example above hints us on an important principle of designing proper objective functions for this task: as the noise rate increases, one must take reasonable precautions when evaluating the informativeness of a test, such that the undesired contribution by noise is accounted for. Suppose we have performed test $e$ and observed $x_e$. We call a root-cause $\hiddenvar$ to be ``consistent'' with observation $x_e$, if $x_e$ is the most likely outcome of $X_e$ given $\hiddenvar$ (i.e., $x_e \in \argmax_x\Pr{X_e = x \given \hiddenvar}$). Otherwise, we say $\hiddenvar$ is inconsistent. %with $x_e$
Now, instead of discounting the weight of all root-causes by the likelihoods $\Pr{X_e = x_e \given \hiddenvar}$ (as \ECT-Bayes does), we choose to discount the root-causes by the \emph{likelihood ratio}: $\lambda_{\hiddenvar,x_e} \triangleq \frac{\Pr{X_e = x_e \given \hiddenvar}}{\max_{x_e'} \Pr{X_e = x_e' \given \hiddenvar}}$. Intuitively, this is because we want to ``penalize'' a root-cause (and hence the weight of its incident edges), only if it is \emph{inconsistent} with the observation (See \figref{fig:eced}). When $x_e$ is consistent with root-cause $\hiddenvar$, then $\lambda_{\hiddenvar, x_e} = 1$ and we do not discount $\hiddenvar$; otherwise, if $x_e$ is inconsistent with $\hiddenvar$, we have $\lambda_{\hiddenvar, x_e} < 1$. When a test is not informative for root-cause $\hiddenvar$, i.e. $\Pr{X_e \given \hiddenvar}$ is uniform, then $\lambda_{\hiddenvar,e} = 1$, so that it neutralizes the effect of such test in terms of edge weight reduction.
% very noisy tests on their marginal gain w.r.t. the objective function. Formally, given observations $\psi_\pi$, we define the gain of observing $\test_e$ as $\ecedgainPerOutcome(\test_e \given \psi_\pi) = \sum_{\{\hiddenvar, \hiddenvar'\}\in E} \Pr{\hiddenvar, \psi_\pi} \Pr{\hiddenvar', \psi_\pi} (1-\lambda_{\hiddenvar,e}\lambda_{\hiddenvar',e})$, where
% With this in mind, we propose a novel objective function, where we update the weight of an edge as follows: we \emph{discount} the weight (i.e., joint probability with previously observed outcomes) of its incident root-causes only when their favorable outcomes for test $e$ are \emph{inconsistent} with the observation $\test_e$; when such case happens, we discount the weight of the corresponding (inconsistent) root-cause by a factor of $\rho_{\hiddenvar,e} \triangleq \epsilon_{\hiddenvar,e} / (1-\epsilon_{\hiddenvar,e})$, where $\epsilon_{\hiddenvar,e} = 1-\max_{\test_e} \Pr{\Test_e = \test_e \given \hiddenvar}$ indicates the noise rate of test $e$ (See \figref{fig:eced}).
% As a result, when $\epsilon_{\hiddenvar,e} \to 1/2$, i.e., the test becomes very noisy, the discount factor $\rho_{\hiddenvar,e} \to 1$, so that it neutralizes the effect of very noisy tests on their marginal gain w.r.t. the objective function.
Formally, given observations $\psi_\pi$, we define the value of observing $\test_e$ as the total amount of edge weight discounted: $\ecedgainPerOutcome(\test_e \given \psi_\pi) \triangleq \sum_{\{\hiddenvar, \hiddenvar'\}\in E} \Pr{\hiddenvar, \psi_\pi} \Pr{\hiddenvar', \psi_\pi} \cdot  (1-\lambda_{\hiddenvar,x_e}\lambda_{\hiddenvar',x_e})$. % where
% \begin{align*}
%   \lambda_{\hiddenvar,e} =
%   \begin{cases}
%     1       & \text{if } \test_e = \argmax_{x} \Pr{\Test_e = \test \given \hiddenvar}  \\
%     \rho_{\hiddenvar,e} & \text{o.w. }
%   \end{cases}
% \end{align*}
% Then, the expected value of test $e$ is $\ecedgainbasic(\Test_e \given \psi_\pi) \triangleq \expctover{\psi_\pi}{\ecedgainPerOutcome(\test_e \given \psi_\pi)} = \sum_{\test_e} \Pr{\test_e \given\psi_\pi} \ecedgainPerOutcome(\test_e \given \psi_\pi)$.

Further, we call test $e$ to be \emph{non-informative}, if its outcome does not affect the distribution of $\Hiddenvar$, i.e., $\forall~\hiddenvar,\hiddenvar'\in \hiddenVarDom$ and $\test_e \in \obsDom$, $\Pr{\Test_e = \test_e \given \hiddenvar} = \Pr{\Test_e = \test_e \given \hiddenvar'}$. Obviously, performing a non-informative test does not reveal any useful information of $\Hiddenvar$ (and hence $\Targetvar$). Therefore, we should augment our basic value function $\ecedgainPerOutcome$, such that the value of a non-informative test is 0. %Following this principle, we define $\ecedgainoffset(\Test_e \given \psi_\pi) \triangleq \sum_{\{\hiddenvar, \hiddenvar'\}\in E} \epsilon_{e} \cdot \Pr{\hiddenvar, \psi_\pi} \Pr{\hiddenvar', \psi_\pi} \cdot (1- \rho_{e}^2)$, %\hamed{We should certainly explain where the offset term comes from} as the offset term for a test being ``non-informative''.
Following this principle, we define $\ecedgainPeroutcomeoffset(\test_e \given \psi_\pi) \triangleq \sum_{\{\hiddenvar, \hiddenvar'\}\in E} \Pr{\hiddenvar, \psi_\pi} \Pr{\hiddenvar', \psi_\pi} \cdot (1- \max_\hiddenvar\lambda_{\hiddenvar,x_e}^2)$,
as the \emph{offset} value for observing outcome $x_e$. It is easy to check that if test $e$ is non-informative, then it holds that $\ecedgainPerOutcome(\test_e \given \psi_\pi) - \ecedgainPeroutcomeoffset(\test_e \given \psi_\pi) = 0$ for all $x_e \in \obsDom$; otherwise $\ecedgainPerOutcome(\test_e \given \psi_\pi) - \ecedgainPeroutcomeoffset(\test_e \given \psi_\pi) \geq 0$.  This motivates us to use the following objective function:
% the offset of a non-informative $\ecedgainPeroutcomeoffset(\test_e \given \psi_\pi)$
% expected amount of edge weight discounted, assuming that test $e$ is ``non-informative''. Hereby, $\epsilon_{e} \triangleq \max_\hiddenvar\epsilon_{\hiddenvar, x_e}$, and $\rho_{e} \triangleq \max_\hiddenvar\rho_{\hiddenvar,e}$.
% Our final objective function is then defined as
\begin{align}
  \ecedgain(\Test_e \given \psi_\pi) \triangleq \expctover{\test_e}{\ecedgainPerOutcome(\test_e \given \psi_\pi) - \ecedgainPeroutcomeoffset(\test_e \given \psi_\pi)},\label{eq:ecedgain}
\end{align}
as the expected amount of edge weight that is effectively reduced by performing test $e$.
% \ak{$\Delta$ vs $\Delta_1$ above}
We call the algorithm that greedily maximizes $\ecedgain$ the \emph{Equivalence Class Edge Discounting} (\eced) algorithm, and present the pseudocode in \algref{alg:main}. % (note that in Line~\ref{alg:ln:argmin} we drop the term representing the current edge weight before performing test $e$, which is constant for different tests).

\setlength{\textfloatsep}{14pt}
\begin{algorithm}[t!]
  \nl {\bf Input}: {$[\lambda_{\hiddenvar,\test}]_{\numrc\times \numtest}$ (or Conditional Probabilities $\Pr{\Test \given \Hiddenvar}$), Prior $\Pr{\Hiddenvar}$, Mapping $r: \hiddenVarDom \rightarrow \targetVarDom$}; \\
  % \nl {\bf Output}: {}; \\
  % \nl {\bf Parameters}: {}; \\
  \Begin{
    \nl  $\psi_\pi \leftarrow \emptyset$; \\
    % \nl at each iteration, discounting the inconsistent hypotheses, instead of removing them, and update the weights.
    \ForEach{$(\hiddenvar,\hiddenvar') \in E$}{
      \nl $w_{\hiddenvar,\hiddenvar'} \leftarrow \Pr{\hiddenvar}\Pr{\hiddenvar'}$; \\
    }
    \While{$\errorprob(\psi_\pi) > \delta$}
    {
      % \nl $\gamma_{y,y'} = $ \\
      \vspace{-.5cm}
      \nl $e^* \leftarrow \argmax_{e} \mathbb{E}_{\test_e} \Big[\sum_{\{\hiddenvar, \hiddenvar'\}\in E} w_{\hiddenvar,\hiddenvar'}
      \big(\overbrace{1 - \lambda_{\hiddenvar,x_e}\lambda_{\hiddenvar',x_e}}^{\footnotesize \substack{\text{weight}\\\text{discounted}}} - \overbrace{(1- \textstyle {\max_{\hiddenvar} \lambda_{\hiddenvar,\test_e}^2})}^{\footnotesize \substack{\text{offset term} }} \big) \Big]$; \label{alg:ln:argmin}\\
      % \left(\lambda_{\hiddenvar,e}\lambda_{\hiddenvar',e} \right)+ \max_\theta\epsilon_{e}(1- \rho_{e}^2)\cdot \left( \right)}$; \\
      % \nl $e^* \leftarrow \argmax_{e} \ecedgain(X_e\given \psi_\pi)$; \label{algln:local_while_loop_begin}\\
      \nl Observe $x_{e^*}$; \quad $w_{\hiddenvar,\hiddenvar'} \leftarrow w_{\hiddenvar,\hiddenvar'} \cdot \Pr{x_{e^*} \given \hiddenvar}\Pr{x_{e^*} \given \hiddenvar'}$; \\
      \nl $\psi_\pi \leftarrow \psi_\pi \cup \{(e^*,x_{e^*})\}$; \label{algln:local_while_loop_end}\\
    }
    \nl {\bf Output}: $\targetvar^* = \argmax_\targetvar \Pr{\targetvar \given \psi_\pi}$.\\
  }
  \caption{The Equivalence Class Edge Discounting (\eced) Algorithm}\label{alg:main}
\end{algorithm}

%%% Local Variables:
%%% mode: latex
%%% TeX-master: "noisyec2_main"
%%% End:

Similar with \ECT, the efficiency (in terms of computation complexity as well as the query complexity) of \ECED depends on the number of root-causes. Let $\epsilon_{\theta,e} \triangleq 1-\max_x \Pr{X_e = x \given \hiddenvar}$ be the noise rate for test $e$. % In the following, we assume that $\epsilon_{\theta,e}$ in only a function of $e$, i.e., the test noise is \emph{independent} of the underlying root-causes.
As our main theoretical result, we show that under the basic setting where test outcomes are \emph{binary}, and the test noise is \emph{independent} of the underlying root-causes (i.e., $\forall \hiddenvar \in \hiddenVarDom,~\epsilon_{\hiddenvar,e} = \epsilon_e$), %(i.e., $\epsilon_{\hiddenvar,e}$ is only a function of $e$: $\forall \hiddenvar \in \hiddenVarDom,~\epsilon_{\hiddenvar,e} = \epsilon_e$),
\ECED is competitive with the optimal policy that achieves a lower error probability for Problem~\eqrefcus{eq:drd}:
% , we assume that each test $e$ has a constant error rate $\nr_e$, independent of the underlying root-cause.
% We provide an upper bound on the worst-case cost of the \eced~algorithm:
\begin{theorem}\label{thm:mainresults}
  Fix $\delta \in (0, 1)$. To achieve expected error probability less than $\delta$, %we need to run \ak{we don't have lower bounds!  Rather say: it suffices to run} the
  it suffices to run \ECED for %$\bigO{k \cdot \left( \frac{\log k \cdot \log (n/\delta)}{1-2\epsilon_{\max}}\right)^2}$ steps,
  $\bigO{\frac{k}{c_\varepsilon} \left(\log \frac{kn}{\delta}\log \frac{n}{\delta}\right)^2 }$ steps
  % $\bigO{\frac{k}{c_\varepsilon} \left(\log \frac{kn}{\delta}\right)^2 \log \frac{n}{\delta} }$ steps,
  % \begin{align*}
  %   \ell \geq k\left(\frac{4}{1-2\epsilon}\right)^2 c_\delta \log c_\delta,
  % \end{align*}
  where $n \triangleq |\hiddenVarDom|$ denotes the number of root-causes, %$\epsilon_{\max} \triangleq \max_e \epsilon_e$, and
  $c_\varepsilon \triangleq \min_{e\in \Testset}(1 - 2\epsilon_e)^2$ characterizes the severity of noise, and
  % $c_\epsilon \triangleq \min_e(1 - 2\epsilon_e)^2$, and
  $k\triangleq \cost \left( \OPT(\delta_\opt)\right)$ is the worst-case cost of the
  optimal policy that achieves expected error probability $\delta_\opt \triangleq \bigO{\frac{\delta}{\left(\log n \cdot \log (1/\delta)\right)^2}}$.
\end{theorem}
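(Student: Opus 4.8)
The plan is to analyze \ECED through an information-theoretic auxiliary functional $f_{\textsc{aux}}$ that is sandwiched between the one-step surrogate $\ecedgain$ and the true prediction error $\errorprob$, and that --- unlike the cumulative \ECED objective itself, which need not be adaptive submodular because of the offset term and the deliberate non-normalization of the edge weights --- \emph{is} adaptive submodular and strongly adaptive monotone with respect to $\Pr{\Hiddenvar}$ (under the binary-outcome, independent-noise assumptions of the theorem). Informally, $f_{\textsc{aux}}$ will be a weighted pairwise ``distinguishability'' functional over the equivalence classes, read off from the Bayes-updated edge weights of \figref{fig:eced}. Given such an $f_{\textsc{aux}}$, three ingredients suffice: (i) a \emph{progress lemma} relating a single greedy step of \ECED to the gain of $f_{\textsc{aux}}$; (ii) a \emph{certificate lemma} relating, up to polylogarithmic factors and in both directions, the residual $\max f_{\textsc{aux}} - f_{\textsc{aux}}(\psi)$ to the \MAP error $\errorprob^\MAP(\psi)$; and (iii) the standard coverage bound for $\alpha$-approximately-greedy maximization of adaptive-submodular, strongly adaptive monotone functions \citep{golovinjair2011}. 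Chaining (ii) applied to $\OPT(\delta_\opt)$ $\to$ (iii) with the $\alpha$ from (i) $\to$ (ii) applied to the \ECED run produces the step count, and carefully tracking the logarithmic factors accumulated along this chain --- together with the $1/c_\varepsilon$ contributed by the noise inside the progress lemma --- yields the bound in the statement.

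\textbf{The crux: the progress lemma.} I expect step (i) to be the main obstacle, precisely because of the phenomenon flagged in the introduction: a test that \ECED rates highly under $\ecedgain$ can produce \emph{zero} immediate reduction in the uncertainty of $\Targetvar$, so there is no clean per-step inequality and the argument must be amortized, treating $f_{\textsc{aux}}$ as a potential and showing that \ECED can never get stuck. Concretely, I would decompose the per-edge discount $1 - \lambda_{\hiddenvar,x_e}\lambda_{\hiddenvar',x_e}$ into a ``signal'' part attributable to the deterministic component of $X_e$ given $\Hiddenvar$ and a ``pure-noise'' part, show that the offset term $1 - \max_\hiddenvar \lambda_{\hiddenvar,x_e}^2$ exactly absorbs the pure-noise part (so a non-informative test indeed has value $0$, as already noted in the paper), and conclude that $\ecedgain$ --- hence, since \ECED maximizes it, the $\ecedgain$ of the test \ECED selects --- retains a constant fraction of the signal, which in turn lower-bounds the $f_{\textsc{aux}}$-gain of the best available test up to a $\bigOmega{c_\varepsilon/\mathrm{polylog}}$ factor. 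The binary-outcome and root-cause-independent-noise assumptions enter exactly here: for a binary test with noise rate $\epsilon_e$ an inconsistent root-cause has likelihood ratio $\epsilon_e/(1-\epsilon_e)$, so $1 - \lambda_{\hiddenvar,x_e}^2 = \bigTheta{1-2\epsilon_e}$, which is what surfaces as $c_\varepsilon = \min_e (1-2\epsilon_e)^2$, the square reflecting that each edge couples a pair of root-causes. The polylog overhead attached to this bound comes from a union bound over the $n$ root-causes and an $\bigO{\log(kn/\delta)}$ bound on the number of ``effective phases'' before the potential is exhausted.

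\textbf{Closing the loop.} For step (ii), I would compare $\max f_{\textsc{aux}} - f_{\textsc{aux}}(\psi)$ with the posterior uncertainty of $\Targetvar$ via Fano-type inequalities, obtaining both ``$f_{\textsc{aux}}$ within $\eta$ of its maximum $\Rightarrow$ $\errorprob^\MAP(\psi) = \bigO{\eta\cdot\mathrm{polylog}}$'' (used to convert the coverage guarantee for \ECED into an error bound) and the reverse direction (used to certify that $\OPT(\delta_\opt)$, which by definition attains expected error $\delta_\opt$ within $k$ steps, has driven $f_{\textsc{aux}}$ to within the required $\eta$ of its maximum in those $k$ steps). Passing between conditional entropy and \MAP error costs $\bigO{\log n}$, and applying the argument at both the $\delta$ and $\delta_\opt$ scales costs a further $\bigO{\log(1/\delta)}$ --- precisely the $\mathrm{polylog}^2$ gap encoded in $\delta_\opt = \bigO{\delta/(\log n\cdot\log(1/\delta))^2}$. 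Feeding the $\OPT$ guarantee into the coverage bound for $\alpha$-approximately-greedy maximization with $\alpha = \bigOmega{c_\varepsilon/\mathrm{polylog}}$ from the progress lemma, and converting the resulting $f_{\textsc{aux}}$-guarantee back through the certificate lemma, then shows that $\bigO{\frac{k}{c_\varepsilon}\paren{\log\frac{kn}{\delta}\log\frac{n}{\delta}}^2}$ steps suffice for \ECED to reach expected error below $\delta$: one logarithm comes from the $\log(Q/\eta)$ term of the coverage bound, the other from the progress lemma's phase/union-bound overhead over the $n$ root-causes, and each is squared by the round trip through the certificate lemma.
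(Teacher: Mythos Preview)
Your high-level architecture --- an auxiliary potential $\auxobj$, a progress lemma linking \ECED's one-step gain to the gain in $\auxobj$, and a two-sided certificate lemma linking $\auxobj$ to $\errormap$ via Fano-type bounds --- matches the paper's. The tracking of $c_\varepsilon$ and the polylog budget is also broadly right. But one structural assumption in your plan does not match and, as stated, is a gap.

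You postulate that $\auxobj$ \emph{itself} is adaptive submodular and strongly adaptive monotone w.r.t.\ $\Pr{\Hiddenvar}$, so that you can invoke the $\alpha$-approximately-greedy coverage bound directly for $\auxobj$. The paper does \emph{not} establish (or claim) this; its auxiliary function
\[
\auxobj(\psi)=\sum_{\{\hiddenvar,\hiddenvar'\}\in E}\Pr{\hiddenvar\mid\psi}\Pr{\hiddenvar'\mid\psi}\log\frac{1}{\Pr{\hiddenvar\mid\psi}\Pr{\hiddenvar'\mid\psi}}+c\sum_{\targetvar}\bientropy{\Pr{\targetvar\mid\psi}}
\]
mixes log-weighted edge terms with binary entropies, and there is no reason to expect the expected marginal gain of such an object to be monotone in $\psi$ under noisy Bayesian updates. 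Your proposal offers no construction or verification for this property, and ``weighted pairwise distinguishability read off from the Bayes-updated edge weights'' describes essentially the same family of functionals, for which the same difficulty arises.

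The paper sidesteps this by \emph{not} applying adaptive submodularity to $\auxobj$ at all. Instead it inserts a second surrogate, the \emph{noise-free} \ECT objective $\ectat{\psi}$, which \emph{is} adaptive submodular. The chain is: (a) for \ECED's choice $e$, \lemref{lm:hi_eced_hi_faux} gives $\auxgain(X_e\mid\psi)+c_{\eta,\nr}\ge c_\nr\cdot\ectgainat{\psi}{X_e}$, and since $\ecedgainat{\psi}{\cdot}=\bigl(\tfrac{1-2\nr}{1-\nr}\bigr)^2\ectgainat{\psi}{\cdot}$ the \ECED-greedy test is also \ECT-greedy, so the right side equals $c_\nr\cdot\max_{e'}\ectgainat{\psi}{X_{e'}}$; (b) adaptive submodularity of $\ecobj$ in the noise-free setting bounds this maximum below by $(\ecobjatl^\top-\expct{\ecobjatl^\bot})/k$ against any length-$k$ policy; (c) $\ecobjatl^\top$ equals the stochastic-estimator error and $\expct{\ecobjatl^\bot}$ is dominated by the noise-free error of \OPT, which by a separate ``noise can only hurt'' lemma (\lemref{lm:nyvsnl_supp}) is dominated by \OPT's noisy error $\delta_\opt$; (d) the certificate lemma (\lemref{lm:aux_vs_perr}) then converts the error gap back into $\auxavg(\ell)/c_\delta$. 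This yields the recursion $\auxavg(\ell)-\auxavg(\ell+1)\ge \tfrac{c_\nr}{k c_\delta}\bigl(\auxavg(\ell)-\delta_\opt c_\delta\bigr)-c_{\eta,\nr}$, which is solved directly; the standard adaptive-submodular coverage theorem is never invoked for $\auxobj$.

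In short: your plan would go through if an adaptive-submodular $\auxobj$ with the right sandwich properties existed, but you have not exhibited one, and the paper's detour through the noise-free \ECT objective (plus the noisy-vs-noiseless comparison lemma) is exactly the mechanism that replaces that missing piece. If you want to repair your outline, replace ``$\auxobj$ is adaptive submodular'' by ``$\auxgain$ of \ECED's test lower-bounds $c_\nr$ times the \emph{\ECT} gain, and \ECT is adaptive submodular in the noise-free problem instance at $\psi$,'' and add the noisy-vs-noiseless inequality to bridge back to \OPT's actual (noisy) error.
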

Note that a pessimistic upper bound for $k$ is the total number of tests $m$, and hence the cost of \ECED is at most $\bigO{\left(\log (mn/\delta) \log (n/\delta)\right)^2/c_\varepsilon}$ times the worst-case cost of the optimal algorithm, which achieves a lower error probability $\bigO{\delta/ (\log n \cdot \log (1/\delta))^2}$. Further, as one can observe, the upper bound on the cost of \ECED degrades as we increase the maximal noise rate of the tests. When $c_\varepsilon = 1$, we have $\epsilon_e = 0$ for all test $e$, and \ECED reduces to the \ECT algorithm. \thmref{thm:mainresults} implies that running \ECT for $\bigO{k \left(\log \frac{kn}{\delta} \log \frac{n}{\delta}\right)^2}$ in the noise-free setting is sufficient to achieve $\errorprob \leq \delta$. Finally, notice that by construction \ECED never selects any non-informative test. Therefore, we can always remove purely noisy tests (i.e., $\{e: \forall \hiddenvar,~\Pr{X_e = 1 \given \hiddenvar} =\Pr{X_e = 0 \given \hiddenvar}= 1/2\}$), so that $c_\varepsilon > 0$, and the upper bound in \thmref{thm:mainresults} becomes non-trivial.

\section{Theoretical Analysis}\label{sec:analysis}
\vspace{-1mm}

\paragraph{Information-theoretic Auxiliary Function}
We now present the main idea behind the proof of \thmref{thm:mainresults}. In general, an effective way to relate the performance (measured in terms of the gain in the target objective function) of the greedy policy to the optimal policy is by showing that, the \emph{one-step} gain of the greedy policy always makes effective progress towards approaching the cumulative gain of \OPT~\emph{over $k$ steps}. One powerful tool facilitating this is the \emph{adaptive submodularity} theory, which imposes a lower bound on the one-step greedy gain against the optimal policy, given that the objective function in consideration exhibits a natural diminishing returns condition. Unfortunately, in our context, the target function to optimize, i.e., the expected error probability of a policy, does not satisfy adaptive submodularity. Furthermore, it is nontrivial to understand how one can directly relate the two objectives: the \ECED objective of \eqrefcus{eq:ecedgain}, which we utilize for selecting informative tests, and the gain in the reduction of error probability, which we use for evaluating a policy.

We circumvent such problems by introducing surrogate functions, % with specific properties,
as a proxy to connect the \ECED objective $\ecedgain$ with the expected reduction in error probability $\errorprob$.
Ideally, we aim to find some auxiliary objective $\auxobj$, such that the tests with the maximal $\ecedgain$ also have a high gain in $\auxobj$; meanwhile, $\auxobj$ should also be comparable with the error probability $\errorprob$, such that minimizing $\auxobj$ itself is sufficient for achieving low error probability. % We refer the reader to Appendix B.1 for a more elaborated proof outline.

% \yuxin{Do not discuss the stochastic error in the main body. Move it to appendix}
% To facilitate discussion, we first introduce a proxy to the target objective $\errormap$, denoted by $\errorstoc$ as the error probability if we are using a \emph{stochastic} decoder: $\errorstoc({\psi_\pi}) = \sum_z p(z \given \psi_\pi) (1 - p(z \given \psi_\pi))$. We show \ak{isn't this well known?} that $\errorstoc$ provides a constant factor approximation of $\errormap$. More precisely, we have
% \begin{lemma}\label{lm:perr_stoc_vs_map}
%   For any distribution, $\errorprob^\text{MAP} \leq \errorstoc \leq 2\errorprob^\text{MAP}$.
% \end{lemma}
We consider the function $\auxobj: 2^{\Testset\times \obsDom} \rightarrow \NonNegativeReals$, defined as
% \begin{align}
%   \auxobj(\psi) = \sum_{(\hiddenvar,\hiddenvar')\in E} \Pr{\hiddenvar \given \psi}\Pr{\hiddenvar' \given \psi} \left(c_1 + \log{ \frac{1}{\Pr{y \given \psi}\Pr{\hiddenvar' \given \psi}} }\right) + c_2 \sum_\targetvar \bientropy{\Pr{\targetvar \given \psi}} \label{eq:auxfunction}
% \end{align}
\begin{align}
  \auxobj(\psi) = \sum_{(\hiddenvar,\hiddenvar')\in E} \Pr{\hiddenvar \given \psi} \Pr{\hiddenvar' \given \psi} \cdot \log{ \frac{1}{\Pr{\hiddenvar \given \psi}\Pr{\hiddenvar' \given \psi}} } + c \sum_{\targetvar\in\targetVarDom} \bientropy{\Pr{\targetvar \given \psi}}. \label{eq:auxfunction}
\end{align}
Here $\bientropy{x} := - x \log x - (1-x) \log(1-x)$, and $c$ is a constant that will be made concrete shortly (in \lemref{lm:hi_eced_hi_faux}). % The term involving $\sum p \log \frac{1}{p}$ can be interpreted as the % unnormalized \ak{negative?}
% entropy over the unnormalized posterior weights of the edges.
Interestingly, we show that function $\auxobj$ is intrinsically linked to the error probability:
\begin{lemma}\label{lm:aux_vs_perr}
  We consider the auxiliary function defined in Equation~\eqrefcus{eq:auxfunction}. % with $c_1 = 3/2$, and $c = 8 \left(\log (n^2/\eta)\right)^2$.
  Let $n \triangleq |\hiddenVarDom|$ be the number of root-causes, and $\errorprob^{\text{MAP}}(\psi)$ be the error probability given partial realization $\psi$. Then
  % \begin{align*}
  %   (2c_2 + 2) \errorstoc \leq \auxobj(\psi) \leq  (3c_2 + 3)(\bientropy{\errorstoc} + \errorstoc \log n)
  % \end{align*}
  \begin{align*}
    2c \cdot \errorprob^{\text{MAP}}(\psi) \leq \auxobj(\psi) \leq  (3c + 4) \cdot \left(\bientropy{\errorprob^{\text{MAP}}(\psi)} + \errorprob^{\text{MAP}}(\psi) \log n\right).
  \end{align*}
\end{lemma}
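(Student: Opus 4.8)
The plan is to prove the two inequalities in \lemref{lm:aux_vs_perr} separately, treating the two summands of $\auxobj$ — call them the \emph{edge-entropy term} $T_1(\psi) = \sum_{(\hiddenvar,\hiddenvar')\in E} \Pr{\hiddenvar \given \psi} \Pr{\hiddenvar' \given \psi} \log\frac{1}{\Pr{\hiddenvar \given \psi}\Pr{\hiddenvar' \given \psi}}$ and the \emph{target-Bernoulli-entropy term} $T_2(\psi) = \sum_{\targetvar\in\targetVarDom} \bientropy{\Pr{\targetvar \given \psi}}$ — as the two degrees of freedom. Throughout, fix $\psi$ and abbreviate $p_\hiddenvar := \Pr{\hiddenvar \given \psi}$, $q_\targetvar := \Pr{\targetvar \given \psi} = \sum_{\hiddenvar : r(\hiddenvar)=\targetvar} p_\hiddenvar$, and $\errorprob := \errorprob^{\text{MAP}}(\psi) = 1 - \max_\targetvar q_\targetvar$; note both $T_1, T_2 \ge 0$, so $\auxobj(\psi) = T_1(\psi) + c\,T_2(\psi)$ with a nonnegative split.

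\textbf{Lower bound.} For $2c\cdot\errorprob \le \auxobj(\psi)$ it suffices (since $T_1 \ge 0$) to show $T_2(\psi) \ge 2\errorprob$. Let $\targetvar^*$ be the MAP target, so $q_{\targetvar^*} = 1-\errorprob$ and the remaining mass $\errorprob$ is distributed over the other targets, each of which therefore has posterior at most $1-\errorprob$ as well — actually we only need $q_{\targetvar^*} = 1-\errorprob$. Then $T_2(\psi) \ge \bientropy{q_{\targetvar^*}} = \bientropy{\errorprob}$, and using the standard bound $\bientropy{x} \ge 2x$ for... wait, that inequality runs the wrong way ($\bientropy{x} \le$ is the one bounded by affine functions near the endpoints). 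Instead I would use $\bientropy{x} \ge x\log\frac{1}{x} \ge x\log 2 = x$ only gives a factor $1$; to get the factor $2$ I would instead sum two Bernoulli-entropy contributions: since $\sum_\targetvar q_\targetvar = 1$, at least the second-largest $q_\targetvar$, call it $q'$, satisfies $\errorprob/(t-1) \le q' \le \errorprob$, and I bound $T_2 \ge \bientropy{q_{\targetvar^*}} + \bientropy{q'} \ge \bientropy{\errorprob} + (\text{something}) \ge 2\errorprob$ using concavity/monotonicity of $\bientropy{\cdot}$ on $[0,1/2]$ together with $\bientropy{1-\errorprob}=\bientropy{\errorprob}$; the clean route is $\bientropy{\errorprob} = \errorprob\log\frac1\errorprob + (1-\errorprob)\log\frac1{1-\errorprob} \ge \errorprob\log\frac1\errorprob \ge \errorprob$ for the first term and an analogous $\ge \errorprob$ contribution from the aggregate of the non-MAP targets, which I will make precise by grouping all non-MAP mass. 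This case analysis is routine but needs care at $\errorprob$ close to $0$ and close to $1-1/t$.

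\textbf{Upper bound.} Here I bound $T_1$ and $T_2$ each by $O(\bientropy{\errorprob} + \errorprob\log n)$. For $T_2$: each $\bientropy{q_\targetvar} \le \bientropy{\errorprob}$ is false in general (if $q_\targetvar$ is near $1/2$), so instead I use that $\sum_\targetvar \bientropy{q_\targetvar} \le \sum_\targetvar q_\targetvar \log\frac1{q_\targetvar} + \sum_\targetvar(1-q_\targetvar)\log\frac1{1-q_\targetvar}$; the first sum is the Shannon entropy $\entropy{\Targetvar\given\psi}$, which by Fano's inequality is at most $\bientropy{\errorprob} + \errorprob\log(t-1) \le \bientropy{\errorprob} + \errorprob\log n$; the second sum is bounded using $(1-q_\targetvar)\log\frac{1}{1-q_\targetvar} \le (1-q_\targetvar) \cdot \frac{q_\targetvar}{1-q_\targetvar}\cdot\frac{1}{\ln 2}$... cleaner: $\sum_\targetvar (1-q_\targetvar)\log\frac1{1-q_\targetvar}$, and since only the MAP target has $q_\targetvar$ possibly large, its term is $\bientropy{q_{\targetvar^*}}$'s second part $\le \bientropy{\errorprob}$, while for all other targets $1-q_\targetvar \ge 1-\errorprob \ge 1/2$ so $\log\frac{1}{1-q_\targetvar} \le 1$ and the sum over them is $\le \sum_{\targetvar\ne\targetvar^*}(1-q_\targetvar) \le t \le$ ... this blows up, so I must instead bound it by $O(\errorprob)$: indeed $\log\frac{1}{1-q_\targetvar} \le \frac{q_\targetvar}{(1-q_\targetvar)\ln 2} \le \frac{2q_\targetvar}{\ln 2}$, giving $\sum_{\targetvar\ne\targetvar^*}(1-q_\targetvar)\log\frac1{1-q_\targetvar} \le \frac{2}{\ln2}\sum_{\targetvar\ne\targetvar^*}q_\targetvar = \frac{2}{\ln 2}\errorprob$, plus the $\targetvar^*$ term $\le\bientropy{\errorprob}$. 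So $T_2 \le 2\bientropy{\errorprob} + \errorprob\log n + \frac{2}{\ln 2}\errorprob$. For $T_1$: the crucial observation is that $\sum_{(\hiddenvar,\hiddenvar')\in E} p_\hiddenvar p_{\hiddenvar'} \le \sum_{\hiddenvar : r(\hiddenvar)\ne\targetvar^*} p_\hiddenvar \cdot \sum_{\hiddenvar'} p_{\hiddenvar'} + (\ldots) \le \errorprob + \errorprob^2 \le 2\errorprob$ — edges only connect differently-labelled root-causes, and any edge has at least one endpoint with $r\ne\targetvar^*$, so $\sum_{(\hiddenvar,\hiddenvar')\in E} p_\hiddenvar p_{\hiddenvar'} \le \big(\sum_{r(\hiddenvar)\ne\targetvar^*}p_\hiddenvar\big)\big(\sum_{\hiddenvar'}p_{\hiddenvar'}\big) = \errorprob$. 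Then write $\log\frac{1}{p_\hiddenvar p_{\hiddenvar'}} = \log\frac1{p_\hiddenvar} + \log\frac1{p_{\hiddenvar'}}$ and bound $T_1 \le 2\sum_{(\hiddenvar,\hiddenvar')\in E} p_\hiddenvar p_{\hiddenvar'}\log\frac1{p_\hiddenvar} \le 2\sum_\hiddenvar p_\hiddenvar \log\frac1{p_\hiddenvar} \cdot \big(\sum_{\hiddenvar' : \{\hiddenvar,\hiddenvar'\}\in E} p_{\hiddenvar'}\big)$; splitting on whether $r(\hiddenvar)=\targetvar^*$ controls the inner sum by $\errorprob$ when $\hiddenvar$ is MAP-labelled and by $1$ otherwise, while $\sum_{r(\hiddenvar)\ne\targetvar^*} p_\hiddenvar\log\frac1{p_\hiddenvar}$ is a partial entropy over a set of mass $\errorprob$, hence $\le \errorprob\log\frac{n}{\errorprob} = \errorprob\log n + \errorprob\log\frac1\errorprob \le \errorprob\log n + \bientropy{\errorprob}$ by the standard bound on the entropy of a subdistribution. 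Assembling the pieces and absorbing constants gives the claimed $(3c+4)$ factor.

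\textbf{Main obstacle.} The genuinely delicate part is the $T_1$ bound: the edge set $E$ is a complete multipartite graph on the root-causes, so $T_1$ mixes a weighted sum $\sum p_\hiddenvar p_{\hiddenvar'}$ that I can bound by $\errorprob$ with logarithmic factors $\log\frac{1}{p_\hiddenvar p_{\hiddenvar'}}$ that are \emph{large} exactly when $p_\hiddenvar$ is tiny — the regime where mass is small but there may be many such root-causes. Controlling this requires the entropy-of-a-subdistribution estimate "$\sum_{\hiddenvar\in S} p_\hiddenvar \log\frac{1}{p_\hiddenvar} \le (\sum_S p_\hiddenvar)\log\frac{n}{\sum_S p_\hiddenvar}$" applied to $S = \{\hiddenvar : r(\hiddenvar)\ne\targetvar^*\}$, and being careful that the cross terms where \emph{both} endpoints lie outside $S$ versus exactly one endpoint outside $S$ are handled with the right multiplicative constants so the final constant stays $3c+4$. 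I expect the constant-chasing, rather than any conceptual step, to be where the bulk of the work lies; everything else reduces to Fano's inequality, $\bientropy{x}\ge x\log\frac1x$, and $\log\frac{1}{1-x}\le \frac{2x}{\ln 2}$ for $x\le\tfrac12$.
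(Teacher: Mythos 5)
Your high-level plan — split $\auxobj$ into the edge term and the target-entropy term, use Fano for the upper bound on the latter, and sub-distribution entropy estimates for the edge term — matches the paper's proof structure, but the lower bound as you sketch it has a real hole. You need $\sum_{\targetvar}\bientropy{\Pr{\targetvar\given\psi}}\ge 2\errormap(\psi)$, and your route bounds the sum by $\bientropy{q_{\targetvar^*}}$ plus the aggregated non-MAP contribution, each claimed to be $\ge\errormap$ via $\errormap\log\frac{1}{\errormap}\ge\errormap$. That step requires $\errormap\le 1/2$, and even the weaker statement $\bientropy{\errormap}\ge\errormap$ fails once $\errormap\gtrsim 0.77$; since $\errormap$ can be as large as $1-1/\numtar$, the regime you explicitly punt on (``close to $1-1/t$'') is exactly where the argument breaks, and the promised case analysis is the missing content, not a routine afterthought. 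The paper sidesteps all of this with one estimate: by $\ln x\ge 1-1/x$, $\bientropy{q}\ge\frac{2}{\ln 2}\,q(1-q)$, hence $\sum_\targetvar\bientropy{q_\targetvar}\ge\frac{2}{\ln 2}\sum_\targetvar q_\targetvar(1-q_\targetvar)=\frac{2}{\ln 2}\,\errorst(\psi)\ge 2\,\errormap(\psi)$, where $\errorst\ge\errormap$ (stochastic-estimator error dominates MAP error) holds for every $\psi$ with no case split. (Incidentally, the inequality you discarded, $\bientropy{x}\ge 2x$ on $[0,1/2]$, is in fact true — it is the chord under a concave function — but it would not have covered $\errormap>1/2$ either.)

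On the upper bound, your treatment of the target term is essentially the paper's: it proves $\sum_\targetvar\bientropy{q_\targetvar}\le 3\,\entropy{\Targetvar}\le 3\left(\bientropy{\errormap}+\errormap\log n\right)$ by showing the $(1-q_\targetvar)$-part is at most twice the $q_\targetvar$-part and then applying Fano. For the edge term your route genuinely differs: you split on whether an endpoint carries the MAP label and invoke $\sum_{\hiddenvar\in S}p_\hiddenvar\log\frac{1}{p_\hiddenvar}\le\left(\sum_{S}p_\hiddenvar\right)\log\frac{n}{\sum_{S}p_\hiddenvar}$, whereas the paper groups edges by target value, factors out $\sum_\targetvar q_\targetvar(1-q_\targetvar)$, bounds the conditional entropy within each group by $\log n$, and applies Jensen to get $2\left(\bientropy{\errorst}+\errorst\log n\right)$ before converting $\errorst$ to $\errormap$. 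Your version is plausible and of the same order, but the deferred constant-chasing is not free: your intermediate bounds generate additive $O(\errormap)$ terms that must be absorbed into $\bientropy{\errormap}+\errormap\log n$, which again implicitly needs something like $\errormap\le 1/2$ — comparable to the paper, whose own proof of the upper bound assumes $\errormap\le 1/4$ at the corresponding point, but you should state the restriction rather than leave it tacit. The concrete defect to repair is the lower bound; once you replace your two-piece argument with the $q(1-q)$ estimate above, the rest of your plan can be completed along the lines you describe.
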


Therefore, if we can show that by running \ECED, we can effectively reduce $\auxobj$, % i.e., the one-step gain in $\auxobj$ is significant in comparison with the gain of the optimal policy,
then by % \lemref{lm:perr_stoc_vs_map} and
\lemref{lm:aux_vs_perr}, we can conclude that \ECED also makes significant progress in reducing the error probability $\errorprob^{\text{MAP}}$. % comparing with the optimal policy.
% derive a low error probability in the output of \algref{alg:main}.

\paragraph{Bounding the Gain w.r.t. the Auxiliary Function}
It remains to understand how \ECED~interacts with $\auxobj$. For any test $e$, we define $\auxgain(\Test_e \given \psi ) \triangleq \expctover{x_e}{\auxobj(\psi \cup \{e,x_e\}) - \auxobj(\psi) \given \psi}$ to be the expected gain of test $e$ in $\auxobj$. Let $\ectgainat{\psi}{\Test_e}$ denote the gain of test $e$ in the \ECT objective, assuming that the edge weights are configured according to the \emph{posterior distribution} $\Pr{\Hiddenvar \given \psi}$. Similarly, let $\ecedgainat{\psi}{\Test_e}$ denote the \ECED gain, if the edge weights are configured according to $\Pr{\Hiddenvar \given \psi}$.
% Equivalently, we can think of $\ectgainat{\psi}{e}$ as gain $\ecedgain(e \given \psi)$, assuming that $e$ is noise-free.
We prove the following result:
% \begin{lemma}\label{lm:hi_eced_hi_faux}
%   Fix $\eta \in (0,1)$. Let $n$ be the total number of hypotheses, and $\epsilon$ be the noise rate. For any test $e$, it holds that
%   \begin{align*}
%     \auxgain(e \given \psi) + c_1(\eta) \geq c_2 \left(\ecedgainat{\psi}{e} - c_3\right) \geq c_4(\nr) \ectgainat{\psi}{e}
%   \end{align*}
%   where $c_1(\eta) \triangleq 8n\log n \eta$, $c_2 \triangleq 1/32$, $c_3(\nr)$ is some offset independent with test $e$, % $c_3(\nr) = \frac{\nr (1-2\nr)}{(1-\nr)^2} \ectat{\psi}{}$
%   and  $c_4(\epsilon) \triangleq (1-2\nr)^2/4$.
% \end{lemma}
\begin{lemma}\label{lm:hi_eced_hi_faux}
  Let $n = |\hiddenVarDom|$, $\numtar = |\targetVarDom|$, and $\nr$ be the noise rate associated with test $e\in\Testset$. Fix $\eta \in (0,1)$. We consider $\auxobj$ as defined in Equation~\eqrefcus{eq:auxfunction}, with % $c_1 = 3/2$, and $c_2 = 8 \left(\log (n^2/\eta)\right)^2$.
  $c = 8 \left(\log (2n^2/\eta)\right)^2$. It holds that
  \begin{align*}
    \auxgain(\Test_e \given \psi) + c_{\eta,\nr} \geq \ecedgainat{\psi}{\Test_e}\cdot (1-\nr)^2/16 = c_\nr \ectgainat{\psi}{\Test_e},
  \end{align*}
  where $c_{\eta,\nr} = 2\numtar(1-2\nr)^2 \eta$, and $c_\nr \triangleq (1-2\nr)^2/16$.
\end{lemma}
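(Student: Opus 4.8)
The plan is to handle the equality and the inequality in Lemma~\ref{lm:hi_eced_hi_faux} separately; the equality is a direct computation and the inequality contains essentially all the difficulty.

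For the equality $\ecedgainat{\psi}{\Test_e}\cdot(1-\nr)^2/16 = c_\nr\,\ectgainat{\psi}{\Test_e}$, I would specialize the definitions to the binary, outcome‑independent‑noise model. There the likelihood ratio $\lambda_{\hiddenvar,x_e}$ takes only the values $1$ (when $x_e$ is the more likely outcome of $\Test_e$ given $\hiddenvar$) and $\nr/(1-\nr)$ (otherwise), and for any test that is still informative given $\psi$ we have $\max_{\hiddenvar}\lambda_{\hiddenvar,x_e}=1$, so the offset term $\ecedgainPeroutcomeoffset(\cdot\mid\psi)$ vanishes (if the test is non‑informative given $\psi$, both sides of the equality are $0$). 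Plugging these two values into $\ecedgainPerOutcome$ and into the \ECT cut weights, both computed with edge weights set to the posterior $\Pr{\Hiddenvar\mid\psi}$, and collecting contributions outcome by outcome, one finds that each edge's \ECED discount is exactly $\big(1-\tfrac{\nr}{1-\nr}\big)^2=(1-2\nr)^2/(1-\nr)^2$ times its \ECT cut weight; rescaling by $(1-\nr)^2/16$ gives $c_\nr=(1-2\nr)^2/16$.

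For the inequality, write $\auxobj=T_1+c\,T_2$, where $T_1(\psi)=\sum_{\{\hiddenvar,\hiddenvar'\}\in E}\phi\!\big(\Pr{\hiddenvar\mid\psi}\Pr{\hiddenvar'\mid\psi}\big)$ with $\phi(z):=z\log(1/z)$, and $T_2(\psi)=\sum_{\targetvar\in\targetVarDom}\bientropy{\Pr{\targetvar\mid\psi}}$. The structural observation driving everything is that, on observing $x_e$, the edge weight $\Pr{\hiddenvar\mid\psi}\Pr{\hiddenvar'\mid\psi}$ is multiplied by $\lambda_{\hiddenvar,x_e}\lambda_{\hiddenvar',x_e}\big/\Pr{x_e\mid\psi}^2$: the numerator is precisely the \ECED discount multiplier of that edge, and the denominator is a spurious renormalization factor $\ge 1$. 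I would then argue in four steps. (i) All edge weights lie in $(0,1/4]$ (since $\Pr{\hiddenvar\mid\psi}+\Pr{\hiddenvar'\mid\psi}\le 1$), so $\phi$ is concave and increasing there, and the elementary estimate $\phi(w)-\phi(wt)\ge(\log 4-1)\,w(1-t)$ for $w\le 1/4$, $t\in(0,1]$ (which uses $t\log(1/t)\le 1-t$) shows that the \emph{un‑renormalized} decrease of $T_1$ is at least a constant multiple of $\ecedgainPerOutcome(x_e\mid\psi)$. (ii) The renormalization costs at most $\phi\!\big(wt\big/\Pr{x_e\mid\psi}^2\big)-\phi(wt)\ge 0$ per edge; bound this via the tangent inequality for the concave $\phi$, extracting a factor $\big(\Pr{x_e\mid\psi}^{-2}-1\big)$ times a logarithm of the edge weight. (iii) Average over $x_e$: the renormalization loss is sizeable only when $\Pr{x_e\mid\psi}$ is small, but such a surprising outcome also moves the target marginals sharply, so $T_2$ drops — made quantitative by the $4$‑strong concavity of $\bientropy{\cdot}$, which gives $\bientropy{\Pr{\targetvar\mid\psi}}-\expctover{x_e}{\bientropy{\Pr{\targetvar\mid\psi,x_e}}}\ge 2\,\Var_{x_e}\!\big(\Pr{\targetvar\mid\psi,x_e}\big)$ — and the choice $c=8(\log(2n^2/\eta))^2$ is calibrated exactly so that $c$ times the $T_2$‑drop dominates the accumulated renormalization loss once all posterior masses in play exceed $\eta$. (iv) Root‑causes and targets of posterior mass below $\eta$ carry uncontrolled logarithmic factors; bound their aggregate contribution crudely — it lives on the same scale as $\ecedgainPerOutcome$, hence is $O\big((1-2\nr)^2\eta\big)$ per target — which yields the additive term $c_{\eta,\nr}=2\numtar(1-2\nr)^2\eta$. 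Assembling (i)–(iv) gives $\auxgain(\Test_e\mid\psi)+c_{\eta,\nr}\ge(\log 4-1)\,\ecedgainat{\psi}{\Test_e}\ge\ecedgainat{\psi}{\Test_e}(1-\nr)^2/16$.

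The main obstacle is step (iii): the cancellation between the renormalization loss of the edge term and the entropy drop of the target term is the only place where the two halves of $\auxobj$ communicate, and it is what forces the otherwise arbitrary‑looking constant $c$. Carrying it through rigorously requires relating $\Pr{x_e\mid\psi}$ — which governs how badly each edge weight is inflated — to the movement of the target marginals $\Pr{\targetvar\mid\psi}$ that $T_2$ tracks, and these are linked only through the generative model; the binary, independent‑noise assumption is what keeps the relation clean, since the predictive of any single test is then just an $\nr$‑contraction of a marginal over that test's root‑cause partition. The second delicate point is the $\eta$‑truncation in step (iv): one must check that deleting the small‑mass root‑causes perturbs each of $\auxgain(\Test_e\mid\psi)$, $\ecedgainat{\psi}{\Test_e}$ and $\ectgainat{\psi}{\Test_e}$ by only $O(\numtar\eta)$, so that the loss is a genuinely additive, vanishing‑as‑$\eta\to0$ slack rather than a multiplicative one.
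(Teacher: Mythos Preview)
Your treatment of the equality is essentially correct and matches the paper's computation in spirit (the paper carries the offset term through and cancels it algebraically, but in the binary independent–noise case with both $\Hiddenvar^+$ and $\Hiddenvar^-$ nonempty your observation that $\max_{\hiddenvar}\lambda_{\hiddenvar,x_e}=1$ is valid and gives the same result).

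For the inequality, your high–level decomposition $T_1+cT_2$ and the idea that the $T_2$–drop must compensate the ``renormalization loss'' of $T_1$ are exactly right, and this is also the heart of the paper's argument. However, step~(iii) as you state it has a genuine gap: the \emph{uniform} $4$–strong concavity bound $\bientropy{h_i}-\expctover{x_e}{\bientropy{\cdot}}\ge 2\,\Var_{x_e}(\Pr{\targetvar_i\mid\psi,x_e})$ is too weak to close the argument with the stated value of $c$. The renormalization loss on an edge incident to a root–cause of posterior mass $h_{\hiddenvar}$ carries a factor $\log(1/h_{\hiddenvar})$, so after expectation it scales like $(1-2\nr)\cdot h_{\hiddenvar}\log(1/h_{\hiddenvar})$; your variance bound scales like $(1-2\nr)^2\cdot h_{\hiddenvar}^2$ (up to $h_+h_-$ factors). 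For $h_{\hiddenvar}$ slightly above $\eta$ the former is of order $\eta\log(1/\eta)$ while $c$ times the latter is of order $(\log(1/\eta))^2\eta^2$, which is strictly smaller as $\eta\to 0$. The paper avoids this by using the \emph{full} strong concavity $\bientropy{x}''=-1/(x(1-x))$, retaining the factor $1/\max\{p_i(1-p_i),q_i(1-q_i)\}$ in the $T_2$ lower bound; this extra $1/\delta$–type factor is then combined with a companion term $\mu_i(1-\mu_i)+\nu_i(1-\nu_i)$ (coming from a separate strong–concavity estimate on the $T_1$ side) via AM--GM to produce a $\sqrt{c}\,|\mu_i-\nu_i|$ contribution, which is what actually beats the $\log$–loss.

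Relatedly, your step~(iv) accounting is off: the contribution of a small root–cause is \emph{not} on the same scale as $\ecedgainPerOutcome$ but larger by a $\log(1/h_{\hiddenvar})$ factor, so the slack is $O(\eta\log(1/\eta))$ rather than $O(\eta)$ under your truncation. In the paper the role of $\eta$ is different: it thresholds the aggregate mass $\beta=\min(\alpha,\beta)$ of one side of the test partition, not individual root–cause masses, and the slack $2\numtar(1-2\nr)^2\eta$ arises from the crude bound $\alpha\beta\log(\eta/(\alpha\beta))\le\eta$ when $\alpha\beta<\eta$. Reworking your steps (iii)--(iv) with the sharper second derivative of $\bientropy{}$ and the correct $\eta$–threshold should recover the paper's constants; as written, the proposal does not go through.
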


% To obtain the proof, we introduce an approximated hypothesis set $\tilde{\cY}$: fix $\eta \in (0,1)$, we replicate each hypothesis $y\in \cY$ into $\lceil \Pr{y}/\eta \rceil$ copies, and then assign uniform probability to the new hypotheses. Hereby, the factor $c_1(\eta)$ in our bound is due to the effect of such uniformization. Due to space issue we defer the details of the proof into the appendix.
\lemref{lm:hi_eced_hi_faux} indicates that the test being selected by \ECED can effectively reduce $\auxobj$.
% Let $e^*$ be the test chosen by \ECED, and $t^*$ be the test that maximizes $\ectgain$, then
% % Further notice from \lemref{lm:hi_eced_hi_faux} that we can lower bound $\auxgain$ against $\ectgain$ as follows.
% \begin{align}
%   \auxgain({e^*}\given \psi) + c_1(\eta)
%   \geq c_2\left(\ecedgainat{\psi}{e^*} - c_3(\nr)\right) %\nonumber \\
%   %   \geq c_2\left(\ecedgainat{\psi}{t^*}  - c_3(\nr)\right) %\nonumber \\
%   \geq c_4(\epsilon)\ectgainat{\psi}{t^*} \label{eq:gainfaux_vs_gainec2}
% \end{align}

\paragraph{Lifting the Adaptive Submodularity Framework}
Recall that our general strategy is to bound the one step gain in $\auxobj$ against the gain of an optimal policy. In order to do so, we need to show that our surrogate % $\auxgain (e\given \psi_\pi)$
exhibits, to some extent, the diminishing returns property. By \lemref{lm:hi_eced_hi_faux} % \eqrefcus{eq:gainfaux_vs_gainec2}
we can relate $\auxgain (\Test_{e}\given \psi_\pi)$, i.e., the gain in $\auxobj$ under the \emph{noisy} setting, to $\ectgainat{\psi}{\Test_{e}}$, % the expected gain in $\ecobj$ under the noise-free setting. % Interestingly, $\ectgain(t^* \given \psi)$
i.e., the expected weight of edges \emph{cut} by the \ECT algorithm. % (based on the edge weights from the posterior distribution).
Since $\ecobj$ is adaptive submodular, this allows us to lift the adaptive submodularity framework into the analysis. As a result, we can now relate the 1-step gain w.r.t. $\auxobj$ of a test selected by \ECED, to the cumulative gain w.r.t. $\ecobj$ of an optimal policy in the noise-free setting. Further, observe that the \ECT objective at $\psi$ satisfies: % is actually equivalent to the error probability $\errorstoc$ of a stochastic decoder:
\begin{equation}
  \ectat{\psi}{} := \sum_{\targetvar} \Pr{\targetvar \given \psi} \left(1 - \Pr{\targetvar \given \psi}\right) \stackrel{(a)}{\geq} 1 - \max_{\targetvar} \Pr{\targetvar \given \psi} = \errorprob^{\text{MAP}}(\psi).
  \label{eq:ec2_vs_errorprob}
\end{equation}
Hereby, step (a) is due to the fact that the error probability of a MAP estimator always lower bounds that of a stochastic estimator (which is drawn randomly according to the posterior distribution of $Y$).
% \begin{equation}
%   \ectat{\psi}{} := \sum_{\targetvar} p(\targetvar \given \psi) (1 - p(\targetvar \given \psi)) \geq $\errorprob^\text{MAP}(\psi). %= \errorstoc({\psi})
%   \label{eq:ec2_vs_errorprob}
% \end{equation}
Suppose we want to compare \ECED against an optimal policy $\OPT$. % of worst-case cost $k$. % By adaptive submodularity and Equation~\eqrefcus{eq:ec2_vs_errorprob} we can obtain $ \ectgainat{\psi}{\Test_{e^*}} \geq \left( \errorprob^{\text{MAP}}(\psi) - c_{\delta'}\right)/k$, where $c_{\delta'}$ encodes the stopping rule of $\OPT$.
By adaptive submodularity, we can relate the 1-step gain of \ECED in $\ectat{\psi}{}$ to the cummulative gain of $\OPT$. Combining Equation~\eqrefcus{eq:ec2_vs_errorprob} with \lemref{lm:aux_vs_perr} and \lemref{lm:hi_eced_hi_faux}, we can bound the 1-step gain in $\auxobj$ of \ECED against the $k$-step gain of $\OPT$, and consequently bound the cost of \ECED against $\OPT$ for Problem~\ref{eq:drd}. We defer a more detailed proof outline and the full proof to the supplemental material.

\vspace{-1mm}
\section{Experimental Results}\label{sec:exp}
\vspace{-2mm}
% \ak{Need to shorten. Perhaps discuss baselines before discussing applications? Move active learning experiment to the end?}
% \hamed{Please correct EC2 to EC$^2$ in the figures}
We now demonstrate the performance of \ECED on two real-world problem instances: a Bayesian experimental design task intended to distinguish among economic theories of how people make risky decisions, and an active preference learning task via pairwise comparisons. Due to space limitations, we defer a third case study on pool-based active learning to the supplemental material.
% We evaluate \eced~against a few common algorithms and several natural heuristics, on two problem instances of the general Bayesian active learning and experimental design problem.
% \paragraph{Baselines.} The first baseline we consider is \ECT-Bayes, which uses the Bayesian update rule to update the edge weights when computing the gain of a test (as described in \secref{sec:eced}). Another similar baseline we consider, \ECT-cut, considers performing tests as if they were noise-free: performing a test will fully cut an edge, as long as one of its incident root-causes do not ``favor'' the observed outcome. Note that after picking a test and observing its outcome, all three algorithms (\eced,\ECT-Bayes,\ECT-cut) update the posteriors on $\Hiddenvar$ and $\Targetvar$ according to the Bayes' rule; the only difference among them is that they use different edge weight update strategies when \emph{selecting} a test. We also compare with two commonly used sequential information gathering algorithms: Information Gain ({\sc IG}), and Uncertainty Sampling ({\sc US}), which consider picking tests that greedily maximizing the reduction of entropy over the target variable $\Targetvar$, and root-causes $\Hiddenvar$ respectively. Last, we consider myopic optimization of the decision-theoretic value of information (\VOI) \citep{Howard66}. In our problems, the \VOI policy greedily picks the test maximizing the expected reduction in error rate in $\Targetvar$.
\vspace{-2mm}
\paragraph{Baselines.} The first baseline we consider is \ECT-Bayes, which uses the Bayes' rule to update the edge weights when computing the gain of a test (as described in \secref{sec:eced}). Note that after observing the outcome of a test, both \ECED and \ECT-Bayes update the posteriors on $\Hiddenvar$ and $\Targetvar$ according to the Bayes' rule; the only difference is that they use different strategies when \emph{selecting} a test. We also compare with two commonly used sequential information gathering policies: Information Gain ({\sc IG}), and Uncertainty Sampling ({\sc US}), which consider picking tests that greedily maximizing the reduction of entropy over the target variable $\Targetvar$, and root-causes $\Hiddenvar$ respectively. Last, we consider myopic optimization of the decision-theoretic value of information (\VOI) \citep{Howard66}. In our problems, the \VOI policy greedily picks the test maximizing the expected reduction in prediction error in $\Targetvar$.

\begin{figure*}[t]
  \centering
  % \subfigure[\textsl{EC2 VS. GBS}]{%
  %   \includegraphics[width=.33\textwidth]{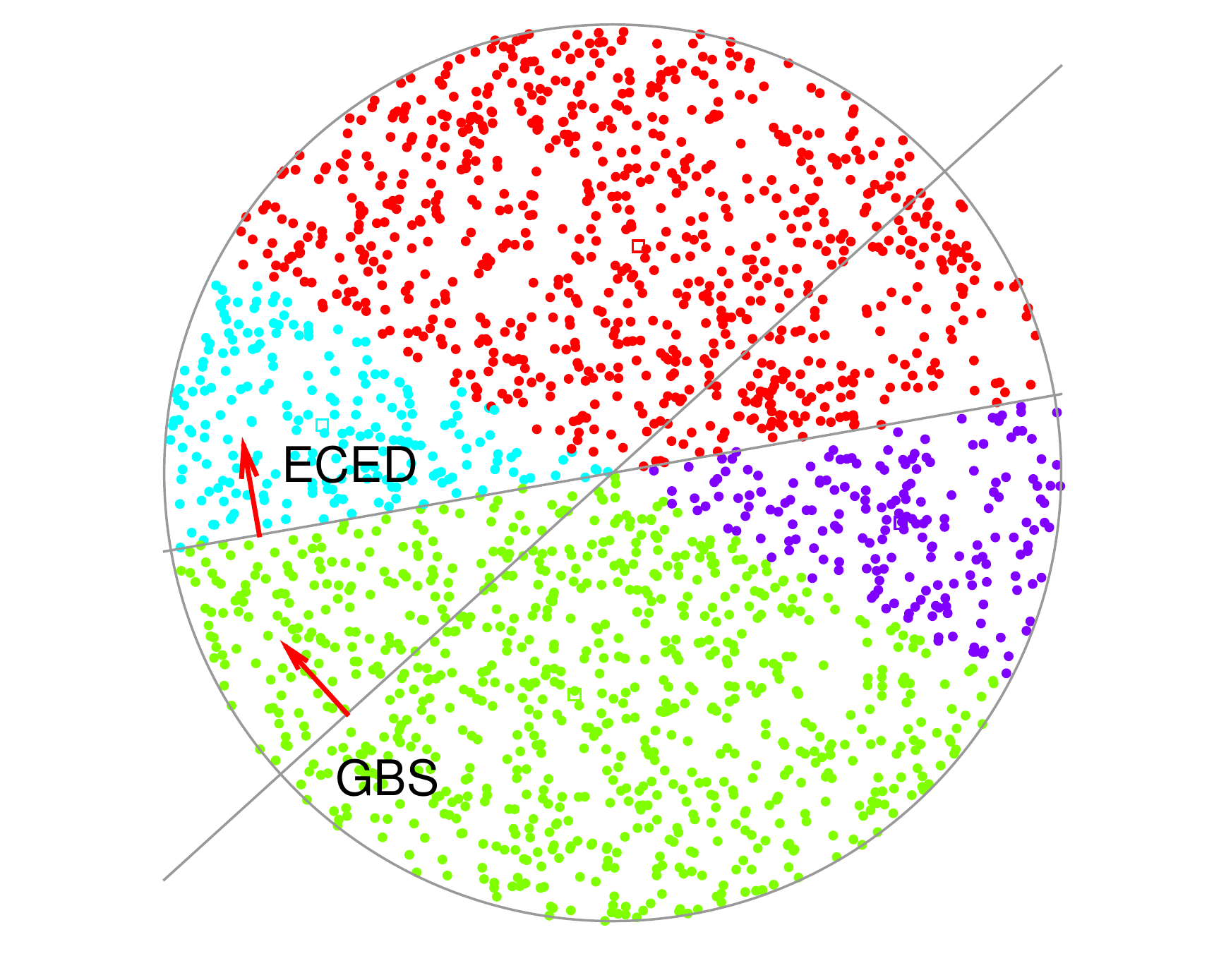}
  %   \label{fig:al_ec2vsgbs}} 
  % \subfigure[\textsl{WDBC}]{%
  %   \includegraphics[width=.31\textwidth]{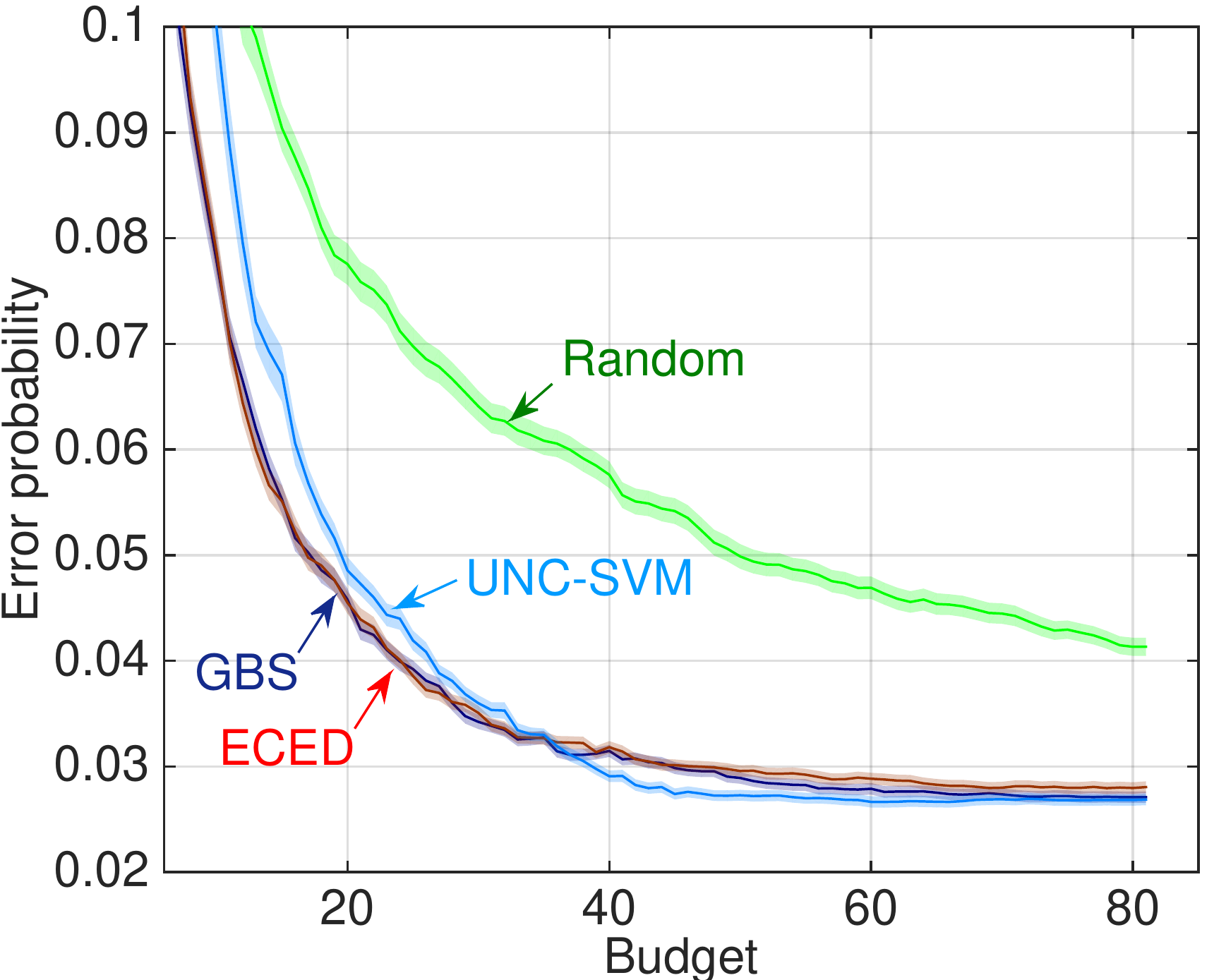}
  %   \label{fig:al_wdbc_1000hyp_errp02}} 
  % \subfigure[\textsl{fourclass}]{%
  %   \includegraphics[width=.31\textwidth]{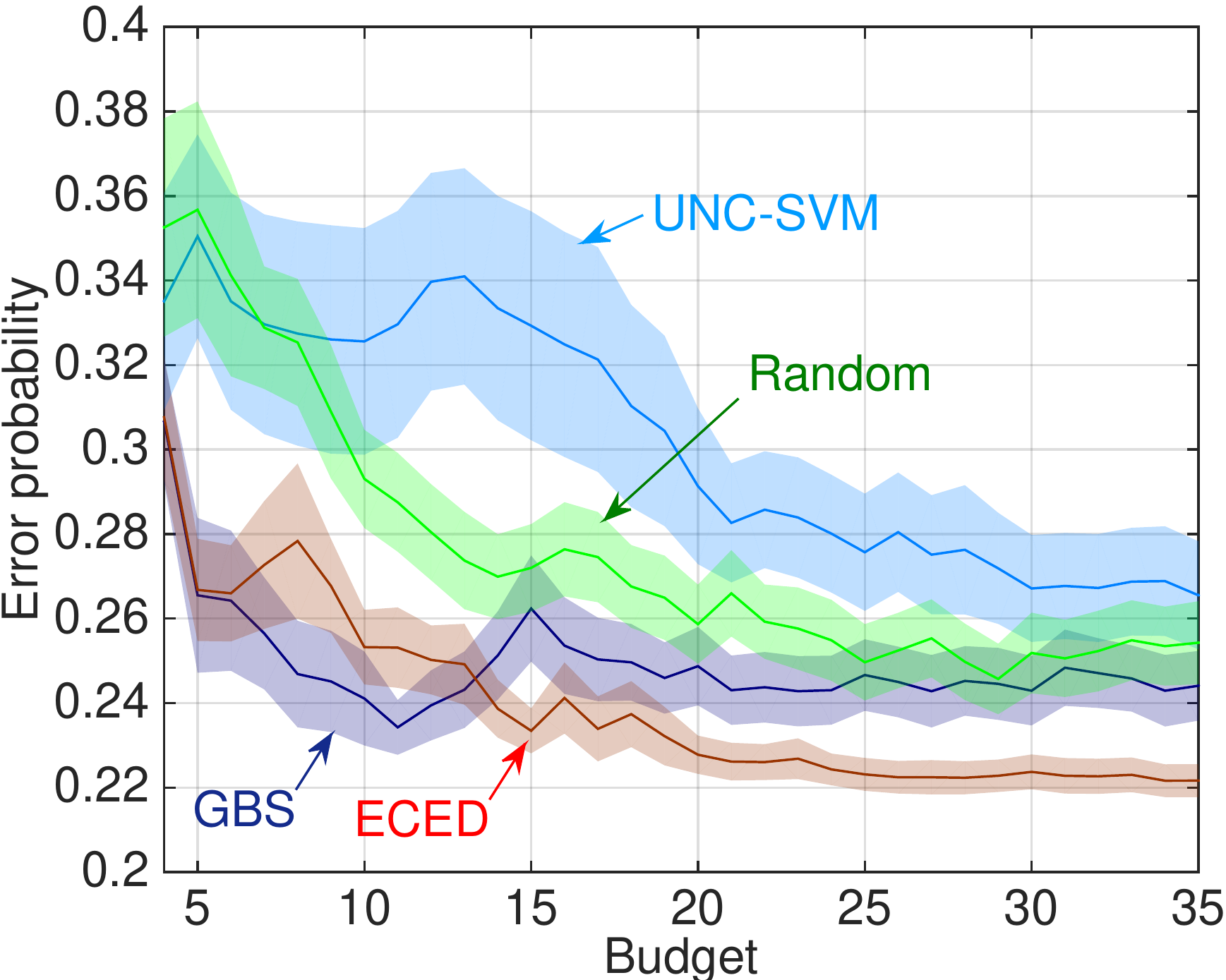}
  %   \label{fig:al_fourclass_1000hyp_errp1}}
  % ~~
  \subfigure[\textsl{Risk Choice Theory}]{%
    \includegraphics[width=.323\textwidth]{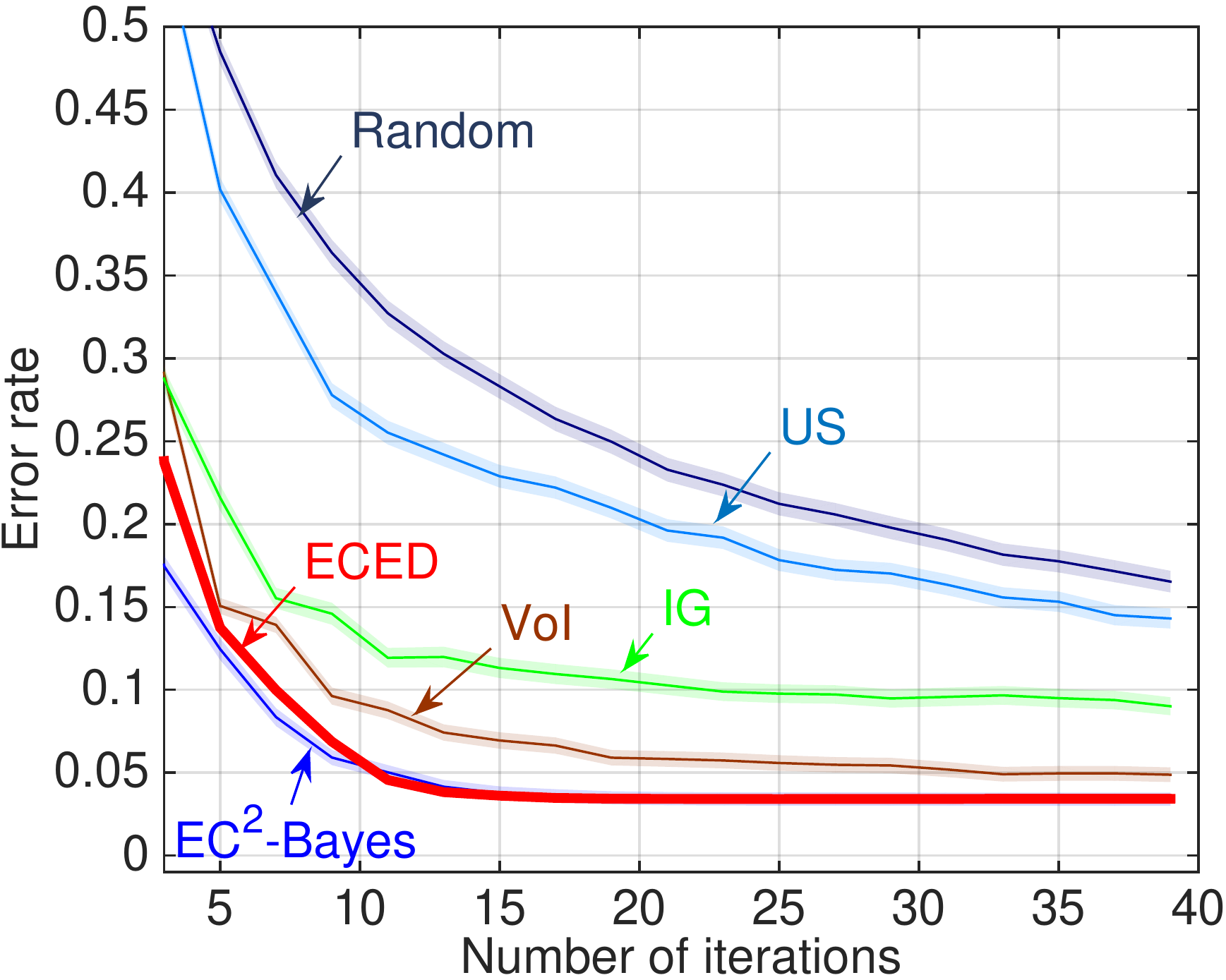}
    \label{fig:theroy_error}}
  \subfigure[\textsl{MovieLens}]{%
    \includegraphics[width=.323\textwidth]{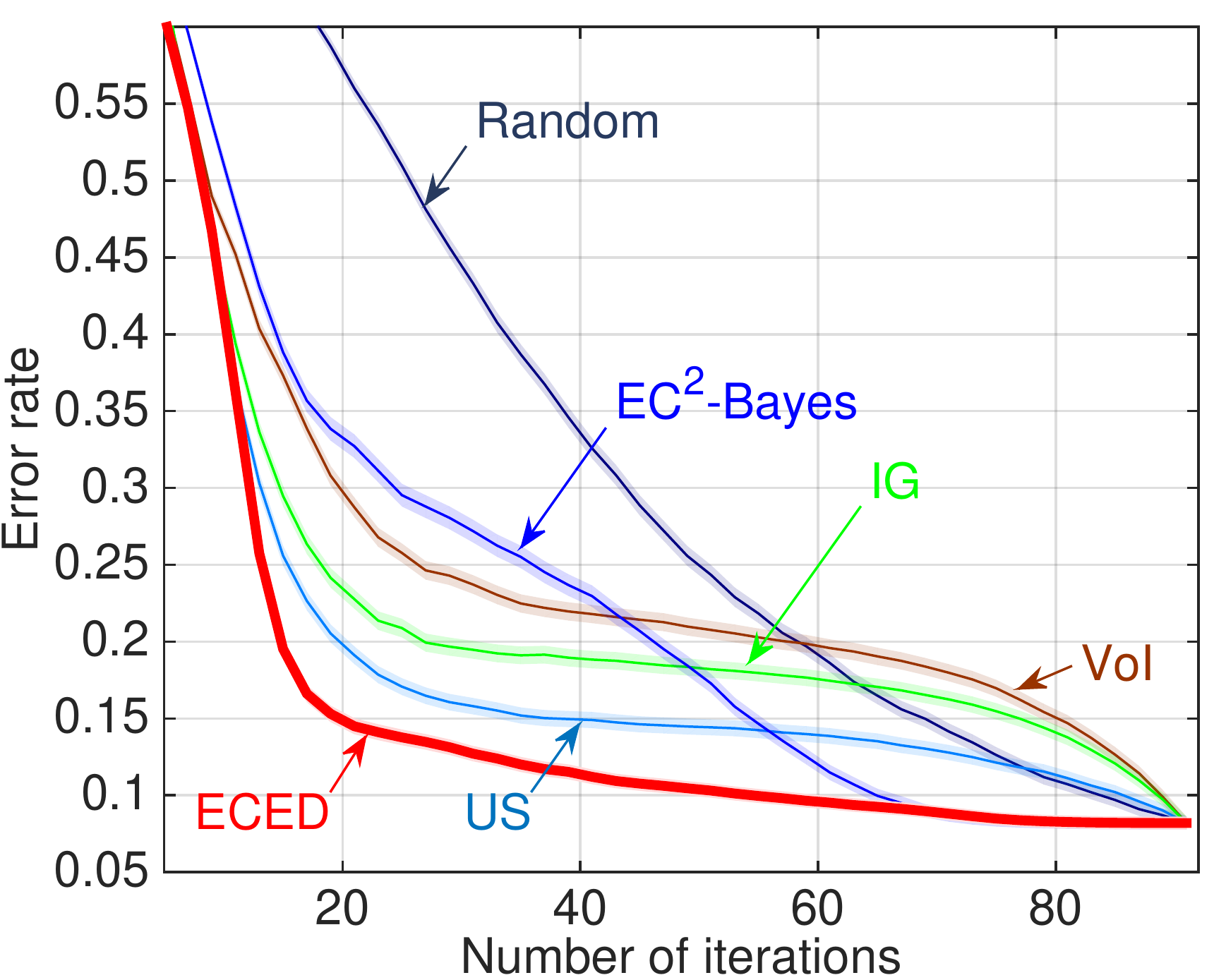}
    \label{fig:movie_error_20regs}}
  \subfigure[\textsl{MovieLens} - Varying noise]{%
    \includegraphics[width=.323\textwidth]{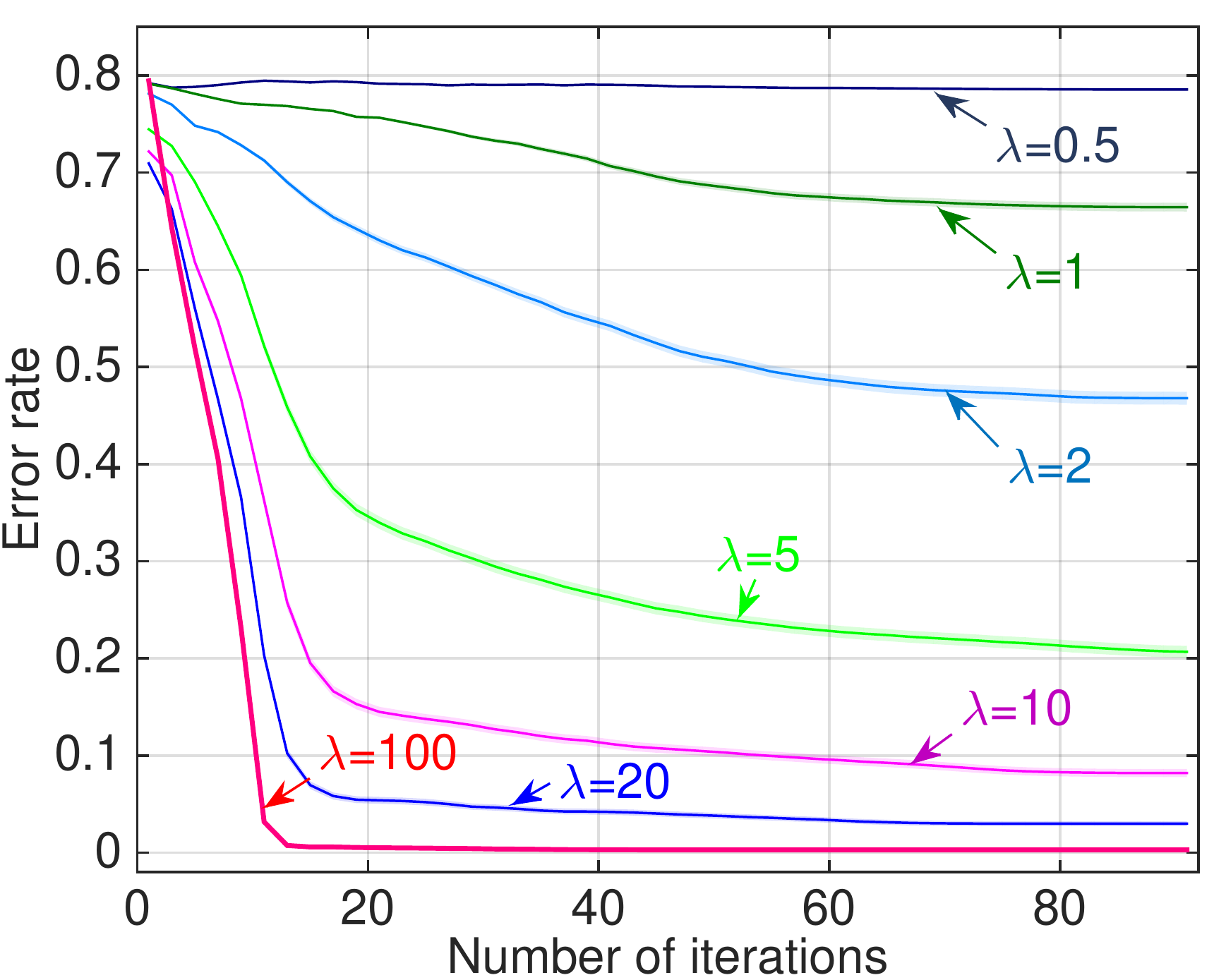}
    \label{fig:movie_error_vs_lambda}}
  % ~~
  % \subfigure[\textsl{MovieLens} - Varying \# of regions]{%
  %   % \includegraphics[width=.45\textwidth]{results/movie_error_exp10_varregs.pdf}
  % \includegraphics[width=.45\textwidth]{}
  % \label{fig:movie_error_vs_nregs}}
\vspace{-3mm}
  \caption{Experimental results: \ECED outperforms most baselines on both data sets.}
  \label{fig:movie_result}
% \vspace{-3mm}
\end{figure*}
\vspace{-1mm}
\subsection{Preference Elicitation in Behavioral Economics}
\vspace{-2mm}
% \paragraph{Data Set and Problem Setup}
We first conduct experiments on a Bayesian experimental design task, which intends to distinguish among economic theories of how people make risky decisions. Several theories have been proposed in behavioral economics to explain how people make decisions under risk and uncertainty. We test \ECED on six theories of subjective valuation of risky choices \citep{wakker2010prospect, tversky1992prospect, sharpe1964}, namely
(1) {\it expected utility with constant relative risk aversion}, %(CRRA),
(2) {\it expected value}, %(EV),
(3) {\it prospect theory}, %(PT),
(4) {\it cumulative prospect theory}, %(CPT),
(5) {\it weighted moments}, % (MVS),
and (6) {\it weighted standardized moments}. %(SMVS).
Choices are between risky lotteries, % with probabilistic outcomes,
% (a.k.a. prospects) \cite{}, where a lottery is a
i.e., known distribution over payoffs (e.g., the monetary value gained or lost).
A test $e \triangleq (L_1, L_2)$ is a pair of lotteries, and root-causes $\Hiddenvar$ correspond to parametrized theories that predict, for a given test, which lottery is preferable. The goal, is to adaptively select a sequence of tests to present to a human subject in order to distinguish which of the six theories best explains the subject's responses. We employ the same set of parameters used in \cite{ray2012bayesian} to generate tests and root-causes. In particular, we have generated $\sim$16K tests. % Similarly with the \textsl{MovieLens} application,
% We simulate the testing noise by using a sigmoid function to generate the conditional probabilities $\Pr{X \given \Hiddenvar}$.
Given root-cause $\hiddenvar$ and test $e = (L_1, L_2)$, one can compute the values of $L_1$ and $L_2$, denoted by $v_1$ and $v_2$. Then, the probability that root-cause $\hiddenvar$ favors $L_1$ is modeled as $\Pr{X_e = 1 \given \hiddenvar} = \frac{1}{1+\exp(-\lambda\cdot (v_1-v_2))}$.

\vspace{-2mm}
\paragraph{Results}
\figref{fig:theroy_error} demonstrates the performance of \ECED on this data set. % when we set $\lambda$ to be 10.
The average error probability has been computed across 1000 random trials for all methods. We observe that \ECED and \ECT-Bayes have similar behavior on this data set; however, the performance of the US algorithm is much worse. % This can be explained by the nature of the data set: comparing with \textsl{MovieLens}, this data set has more concentrated distribution over $Y$, but not $Z$. Therefore, since tests only provide indirect information about $Z$ through $Y$, what the uncertainty sampling scheme tries to optimize is actually $Y$, hence it performs quite poorly.
This can be explained by the nature of the data set: it has more concentrated distribution over $\Hiddenvar$, but not $\Targetvar$. Therefore, since tests only provide indirect information about $\Targetvar$ through $\Hiddenvar$, what the uncertainty sampling scheme tries to optimize is actually $\Hiddenvar$, hence it performs quite poorly.

\vspace{-1mm}
\subsection{Preference Learning via Pairwise Comparisons}
\vspace{-1mm}
% \paragraph{Data Set} 
The second application considers a comparison-based movie recommendation system, % \citep{chen15submodular}, % \cite{javdani14near}
which learns a user's movie preference (e.g., the favorable genre) by sequentially showing her pairs of candidate movies, and letting her choose which one she prefers. We use the {\sl MovieLens 100k} dataset \citep{Herlocker:1999:AFP:312624.312682}, which consists of a matrix of 1 to 5 ratings of 1682 movies from 943 users, and adopt the experimental setup proposed in \cite{chen15submodular}. In particular, we extract movie features by computing a low-rank approximation of the user/rating matrix of the {\sl MovieLens 100k} dataset through singular value decomposition (SVD). We then simulate the target ``categories'' $\Targetvar$ that a user may be interested by partitioning the set of movies into $\numtar$ (non-overlapping) clusters in the Euclidean space. A root-cause $\Theta$ corresponds to user's favorite movie, and tests $e$'s are given in the form of movie pairs, i.e., $e\triangleq (m_a,m_b)$, where $a$ and $b$ are embeddings of movie $m_a$ and $m_b$ in Euclidean space. Suppose user's movie is represented by $\hiddenvar$, then test $e$ is realized as $1$ if $a$ is closer to $y$ than $b$, and $0$ otherwise. We simulate the effect of noise by $\Pr{X_e = 1 \given \hiddenvar} = \frac{1}{1+\exp(-\lambda\cdot (d(m_a, \hiddenvar)-d(m_b, \hiddenvar)))}$. where $d(\cdot, \cdot)$ is the distance function, and $\lambda$ control the level of noise in the system. % If $\lambda$ is large, then the sigmoid function becomes steep, and hence the noise level is low.

\vspace{-1mm}
\paragraph{Results}
\figref{fig:movie_error_20regs} shows the performance of \ECED compared other baseline methods, when we fix the size of $\targetVarDom$ to be 20 and $\lambda$ to be 10. We compute the average error probability across 1000 random trials for all methods. We can see that \ECED consistently outperforms all other baselines. Interestingly, \ECT-Bayes performs poorly on this data set. This may be due to the fact that the noise level is still high, misguiding the two heuristics to select noisy, uninformative tests. \figref{fig:movie_error_vs_lambda} shows the performance of \ECED as we vary $\lambda$. When $\lambda = 100$, the tests become close to deterministic given a root-cause, and \ECED is able to achieve $0$ error with $\sim 12$ tests. As we increase the noise rate (i.e., decrease $\lambda$), it takes \ECED many more queries for the prediction error to converge. This is because with high noise rate, \ECED discounts the root-causes more uniformly, hence they are hardly informative in $Y$. This comes at the cost of performing more tests, and hence low convergence rate.

% Figure~\ref{fig:movie_error_vs_nregs} shows the performance of \eced~on the dataset if we vary the number of regions.

%%% Local Variables:
%%% mode: latex
%%% TeX-master: "noisyec2_main"
%%% End:

\vspace{-1mm}
\section{Related Work}
% \ak{Move some of the discussion of related work from the intro to this section?}
% \paragraph{}
\vspace{-2mm}
\paragraph{Active learning in statistical learning theory.}
% In machine learning, information gathering has been mainly studied in {\em active learning}, where the goal is to (sequentially) query labels for data points  that most effectively reduce the uncertainty about an underlying hypothesis.
% Active learning has been analyzed under various noise assumptions in statistical learning theory.
% Under the stream-based setting (where active learning algorithms lack control the sequence of data points that could be exploited), policies are known to work even in the agnostic setting \citep{balcan2006agnostic, hanneke2007bound}; whereas in the more aggressive pool-based setting, most current theoretical results are restricted to limited hypotheses classes (e.g., halfspaces as in \citet{balcan2007margin, gonen2013efficient}) and noise assumptions (e.g., \citet{tsybakov2004optimal, hanneke14minimax}).
In most of the theoretical active learning literature (e.g., \citet{dasgupta2005analysis, hanneke2007bound, hanneke2014theory, balcan2015active}), sample complexity bounds have been characterized in terms of the structure of the hypothesis class, as well as additional distribution-dependent complexity measures (e.g., splitting index \citep{dasgupta2005analysis}, disagreement coefficient \citep{hanneke2007bound}, etc); In comparison, in this paper we seek {\em computationally-efficient} approaches that are {\em provably competitive} with the optimal policy. Therefore, we do not seek to bound how the optimal policy behaves, and hence we make no assumptions on the hypothesis class.
\vspace{-1mm}
% A few noise settings have been studied in literature.
\paragraph{Persistent noise vs non-persistent noise.} % Assume that tests are noisy.
\looseness -1 If tests can be repeated with i.i.d. outcomes, the noisy problem can then be effectively reduced to the noise-free setting \citep{kaariainen2006active, karp2007noisy, nowak2009noisy}. While the modeling of non-persistent noise may be appropriate in some settings (e.g., if the noise is due to measurement error), it is often important to consider the setting of \emph{persistent noise} in many other applications. In many applications, repeating tests are impossible, or repeating a test produces identical outcomes. For example, it could be unrealistic to replicate a medical test for practical clinical treatment. Despite of some recent development in dealing with persistent noise in simple graphical models \citep{chen15mis} and strict noise assumptions \citep{golovin10near}, more general settings, which we focus on in this paper, are much less understood.

\vspace{-1mm}
\section{Conclusion}
\vspace{-2mm}
% discussion: extension to DRD?

We have introduced \eced, which strictly generalizes the \ECT algorithm, for solving practical Bayesian active learning and experimental design problems with correlated and noisy tests. We have proved that \ECED enjoys strong theoretical guarantees, % when comparing with an optimal policy achieving some lower error probability.
% We develop a theoretical framework for analyzing such sequential systems with noise, where we leverage a information-theoretic surrogate function to reason about the effect of noise, and combine it with theories of adaptive submodularity to attain the near-optimal bound.
by introducing an analysis framework that draws upon adaptive submodularity and information theory. We have demonstrated the compelling performance of \ECED on two (noisy) problem instances, including an active preference learning task via pairwise comparisons, and a Bayesian experimental design task for preference elicitation in behavioral economics. We believe that our work makes an important step towards understanding the theoretical aspects of complex, sequential information gathering problems, and provides useful insight on how to develop practical algorithms to address noise.

%%% Local Variables:
%%% mode: latex
%%% TeX-master: "noisyec2_main"
%%% End:

\subsubsection*{Acknowledgments}

This work was supported in part by ERC StG 307036, a Microsoft Research Faculty Fellowship, and a Google European Doctoral Fellowship.

\bibliographystyle{icml2016}
{\small
  \bibliography{noisyec2_main}
}

% \clearpage
\appendix

\section{Table of Notations Defined in the Main Paper}
We summarize the notations used in the main paper in Table~\ref{tb:main-notations}.
\begin{table}[h]
  \centering
  \caption{
    A reference table of notations used in the main paper
  }\label{tb:main-notations}
  \begin{tabular}[t]{|l|p{12cm}|} \hline
    $\Targetvar$ & random variable encoding the value of the target variable \\ \hline
    $\targetVarDom$ & domain of the target variable \\ \hline
    $\targetvar$ & value of $\Targetvar$ \\ \hline
    $\Hiddenvar$ & random variable encoding the root-cause \\ \hline
    $\hiddenVarDom$ & the ground set / domain of root-causes\\ \hline
    $\hiddenvar$ & root-cause\\ \hline
    $r$ & $\Hiddenvar \rightarrow \Targetvar$, a function that maps a root-cause to a target value \\ \hline
    $\Testset$ & the ground set of tests \\ \hline
    $\numtest$ & $|\Testset|$, number of tests \\ \hline
    $e$ & test\\ \hline
    $\Test_e$ & random variable encoding the test outcome\\ \hline
    $\test_e$ & observed test outcome\\ \hline
    $\numtar$ & $|\targetVarDom|$, number of possible target values \\ \hline
    $\numrc$ & $|\hiddenVarDom|$, number of root-causes \\ \hline
    $\policy$ & policy, i.e., a (partial) mapping from observation vectors to tests \\ \hline
    $\Psi$ & random variable encoding a partial realization, i.e., set of test-observation pairs \\ \hline
    $\psi_\pi$ & the partial realization, i.e., set of test-observation pairs observed by running policy $\policy$ \\ \hline
    $\delta$ & tolerance of prediction error \\ \hline
    $\errorprob^\MAP({\psi})$ & error probability (of a MAP decoder), having observed partial realization $\psi$\\ \hline
    $\errorprob(\policy)$ & $\expctover{\psi_\pi}{\errorprob^\MAP({\psi_\pi})}$, expected error probability by running policy $\policy$\\ \hline
    $\OPT$ & optimal policy for Problem~\eqrefcus{eq:drd} \\ \hline
    $G$ & $G=(\hiddenVarDom, E)$, the (weighted) graph constructed for the \ECT algorithm \\ \hline
    $w(\{\hiddenvar, \hiddenvar'\})$ & weight of edge $\{\hiddenvar,\hiddenvar'\}\in E$ in the \ECT graph $G$  \\ \hline
    $\ecobj$ & the \ECT objective function, with $\ecobj(\emptyset) := \sum_{\hiddenvar,\hiddenvar'\in E}\Pr{\hiddenvar} \Pr{\hiddenvar'}$.\\ \hline
    $\ectat{\psi}$ & the \ECT objective function, with $\ectat{\psi}(\emptyset) := \sum_{\hiddenvar,\hiddenvar'\in E}\Pr{\hiddenvar\given \psi}\Pr{\hiddenvar' \given \psi}$. % assume we initialize the weight of edges in $E$ according to $\Pr{\Hiddenvar \given \psi}$
    \\ \hline
    % $\rho_{\hiddenvar,e}$ & $\epsilon_{\hiddenvar,e}/(1-\epsilon_{\hiddenvar,e})$, discount coefficient of $\theta$, if $\theta$'s favorable outcome of $e$ is not $x_e$. \\ \hline
    $\lambda_{\hiddenvar,e}$ & discount coefficient of root-cause $\theta$, used by \ECED when computing $\ecedgain$. \\ \hline
    $\epsilon_{\hiddenvar,e}$ & $1-\argmax_e \Pr{X_e = x_e}$, the noise rate for a test $e$\\ \hline
    $\ecedgainPerOutcome(\test_e \given \psi)$ & the ``basic'' component in the \ECED gain by observing $x_e$, having observed $\psi$ \\ \hline
    % $\ecedgainbasic(\Test_e \given \psi)$ & the ``basic'' component in the \ECED gain by performing test $e$, having observed $\psi$ \\ \hline
    $\ecedgainPeroutcomeoffset(\test_e \given \psi)$ & the ``offset'' component in the \ECED gain by observing $x_e$, having observed $\psi$ \\ \hline
    $\ecedgain(\Test_e \given \psi)$ & % $\ecedgainbasic(\test_e \given \psi) - \ecedgainoffset(\test_e \given \psi)$,
                                       the \ECED gain which is myopically optimized at each iteration of the \ECED algorithm \\ \hline
    $\ecedgainat{\psi}{\Test_e}$ & suppose we have observed $\psi$, and re-initialize the \ECT graph so that the total edge weight is $\ectat{\psi}(\emptyset)$. Then, $\ectgainat{\psi}{\Test_e}$ is the expected reduction in edge weight, by performing test $e$ and \emph{discounting} edges' weight according to \ECED. It is the re-normalized version of $\ecedgain(\test_e \given \psi)$, i.e., $\ecedgainat{\psi}{\Test_e} = \ecedgain(\test_e \given \psi) / \Pr{\psi}^2$. \\ \hline
    $\ectgainat{\psi}{\Test_e}$ & the expected gain in $\ectat{\psi}$ by performing test $e$, and \emph{cutting} edges weight according to \ECT. It can be interpreted as $\ecedgainat{\psi}{\Test_e}$, as if the test's outcome is noise-free, i.e., $\forall \hiddenvar,~\epsilon_{\hiddenvar, e} = 0$. \\ \hline
    $\auxobj$ & the auxiliary function defined in Equation~\eqrefcus{eq:auxfunction}\\ \hline
    $\eta$ & parameter of $\auxobj$ (see Equation~\eqrefcus{eq:auxfunction}, \lemref{lm:hi_eced_hi_faux}). It is only used for analysis.\\ \hline
    $c$ & $8 \left(\log (n^2/\eta)\right)^2$, parameter of $\auxobj$. It is only used for the analysis of \ECED. \\ \hline
    $\auxgain(\Test_e \given \psi)$ & the expected gain in $\auxobj$ by performing test $e$, conditioning on partial realization $\psi$ \\ \hline
    $c_{\eta,\epsilon},c_{\epsilon}$ & constants required by \lemref{lm:hi_eced_hi_faux} \\ \hline
    $\lambda$ & parameter controlling the error rate of tests (see \secref{sec:exp}) \\ \hline

  \end{tabular}
\end{table}

\clearpage

%%% Local Variables:
%%% mode: latex
%%% TeX-master: "noisyec2_main"
%%% End:

% !TEX root =  noisyec2_main.tex
% \section{Proof of \thmref{thm:mainresults} -- The Analysis Framework}
\section{The Analysis Framework}\label{sec:supp_proofs}

In this section, we provide the proofs of our theoretical results in full detail. Recall that for the theoretical analysis, we study the basic setting where test outcomes are \emph{binary}, and the test noise is \emph{independent} of the underlying root-causes (i.e., given a test $e$, the noise rate on the outcome of test $e$ is only a function of $e$, but not a function of $\hiddenvar$). %constant w.r.t. the underlying root-causes; however, different the noise on different tests can be different).

\subsection{The Auxiliary Function and the Proof Outline}
% The general idea behind the analysis framework is, we design surrogate objectives at each step of the proof, which is easier to analyze. The core of the proof is to find a surrogate objective, under which we can easily relate the gain of the noiseless policy to the gain of the noisy policy, and we bound the 1-step gain of our algorithm against the gain of the noiseless policy in terms of this surrogate. In the end, we relate the surrogate objective to the original objective function, and trace the results obtained at each step back to the original problem.
The general idea behind our analysis, is to show that by running \ECED, the one-step gain in learning the value of the target variable is significant, compared with the cumulative gain of an optimal policy over $k$ steps (see \figref{fig:gain_1vsk}).

\begin{figure*}[h]
  \centering
  \includegraphics[width=.95\textwidth]{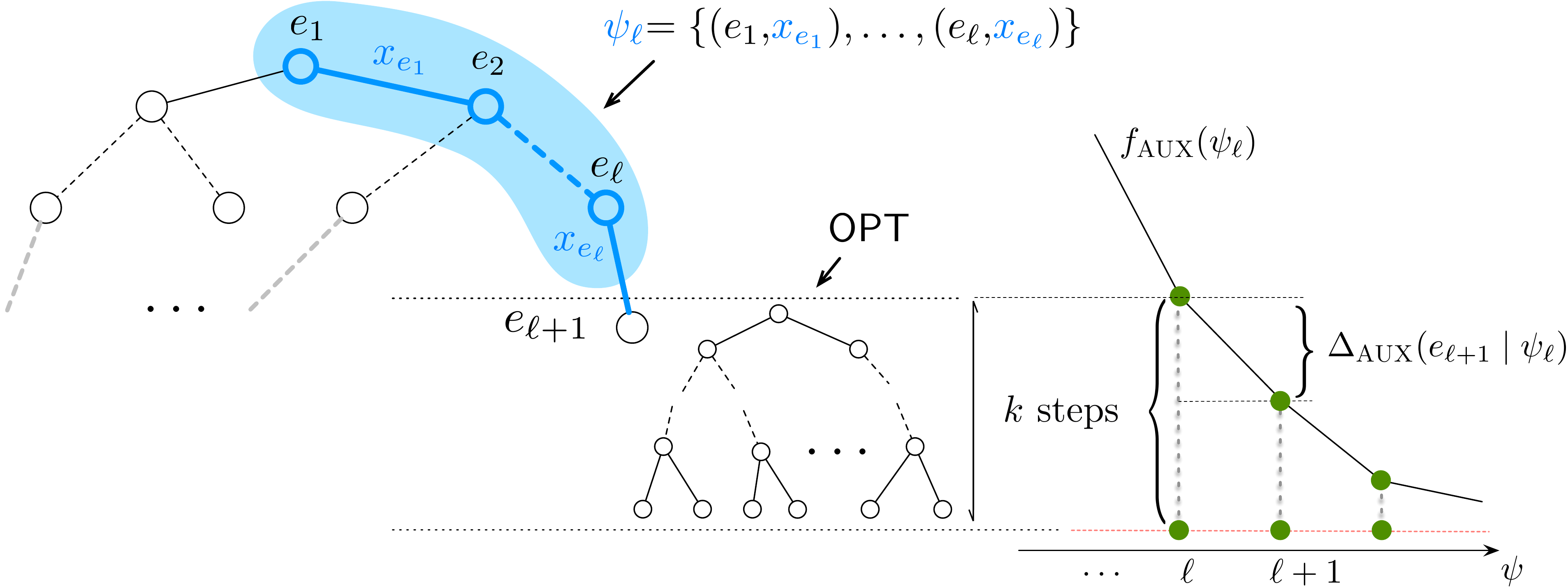}
  % \subfigure[\ECED]{%
  % \includegraphics[width=.18\textwidth]{fig/ecd_discount.pdf}
  % \label{fig:eced}
  % }
  \caption{ On the left, we demonstrate a sequential policy in the form of its decision tree representation. Nodes represent tests selected by the policy, and edges represent outcomes of tests. At step $\ell$, a policy maps partial realization $\psi_\ell = \{(e_1, x_{e_1}), \dots, (e_\ell, x_{e_\ell}) \}$ to the next test $e_{\ell+1}$ to be performed. In the middle, we demonstrate the tests selected by an optimal policy \OPT of length $k$. On the right, we illustrate the change in the auxiliary function as \ECED selects more tests. Running \OPT at any step of execution of \ECED will make $\auxobj$ below some threshold (represented by the red dotted line). The key idea behind our proof, is to show that the greedy policy \ECED, at each step, is making effective progress in reducing the expected prediction error (in the long run), compared with \OPT.}\label{fig:gain_1vsk}
\end{figure*}

In Appendix \secref{sec:countereg}, we show that if tests are greedily selected to optimize the (reduction in) expected prediction error, we may end up failing to pick some tests, which have negligible immediate gain in terms of error reduction, but are very informative in the long run. \ECED bypasses such an issue by selecting tests that maximally distinguish root-causes with different target values. In order to analyze \ECED, we need to find an auxiliary function that properly tracks the ``progress'' of the \ECED algorithm; meanwhile, this auxiliary function should allow us to connect the heuristic by which we select tests (i.e., $\ecedgain$), with the target objective of interest (i.e., the expected prediction error $\errorprob$).

We consider the auxiliary function defined in Equation \eqrefcus{eq:auxfunction}. For brevity, we suppress the dependence of $\psi$ where it is unambiguous. Further, we use $p_{\hiddenvar}$, $p_{\hiddenvar'}$, and $p_{\targetvar}$ as shorthand notations for $\Pr{\hiddenvar \given \psi}$, $\Pr{\hiddenvar' \given \psi}$ and $\Pr{\targetvar \given \psi}$. Equation \eqrefcus{eq:auxfunction} can be simplified as
\begin{align}
  \auxobj = \sum_{(\hiddenvar,\hiddenvar')\in E} p_{\hiddenvar}p_{\hiddenvar'} \log{ \frac{1}{p_{\hiddenvar}p_{\hiddenvar'}} } + c \sum_{\targetvar\in \targetVarDom} \bientropy{p_{\targetvar}} \label{eq:def_auxobj_additive}
\end{align}
% ($c$ is a constant that will be made concrete shortly (in \lemref{lm:hi_eced_hi_faux})).

\begin{figure*}[h]
  \centering
  \includegraphics[width=.9\textwidth]{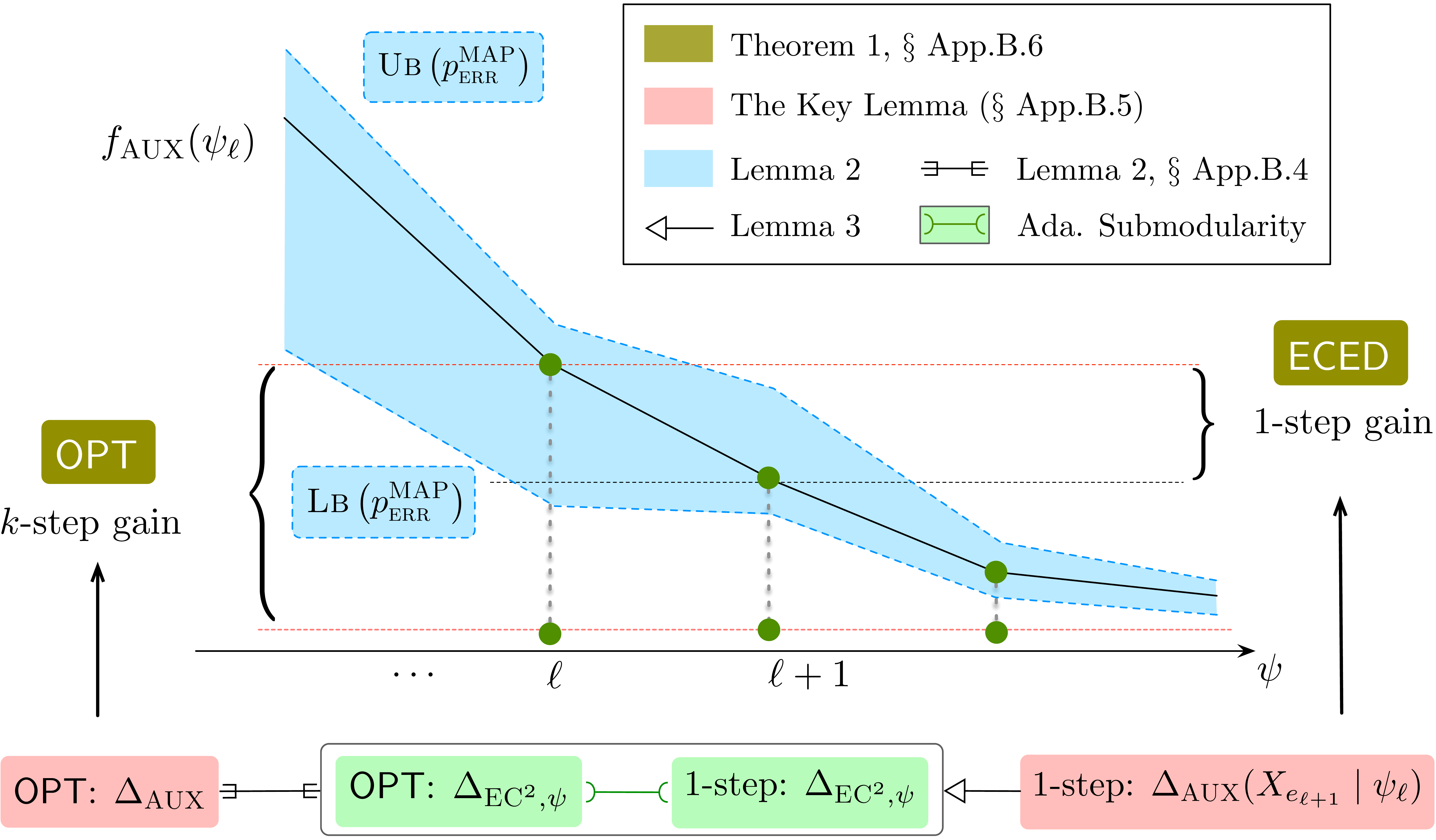}
  \caption{The proof outline.
  }\label{fig:proof_outline}
\end{figure*}

We illustrate the outline of our proofs in \figref{fig:proof_outline}. Our goal is to bound the cost of \ecedgraph against the cost of \optgraph (\thmref{thm:mainresults}; proof provided in Appendix \secref{sec:proofmainresults}). As we have explained earlier, our strategy is to relate the one-step gain of \ECED \ecedgaingraph with the gain of \OPT in $k$-steps \optkgaingraph (Appendix \secref{sec:proofkeylemma}, \lemref{lm:keylemma_onestepgain_supp}). To achieve that, we divide our proof into three parts:
\begin{enumerate}
\item We show that the auxiliary function $\auxobj$ is closely related with the target objective function $\errorprob$. More specifically, we provide both an upper bound \fvsperrubgraph and a lower bound \fvsperrlbgraph of $\auxobj$ in \lemref{lm:aux_vs_perr}, and give the detailed proofs in Appendix \secref{sec:proof_f_vs_perr}.
\item To analyze the one-step gain of \ECED, we introduce another intermediate auxiliary function: For a test $e_{\ell+1}$ chosen by \ECED, we relate its one-step gain in the auxiliary function \auxgainggraph, to its one-step gain in the \ECT objective \ectgainggraph (\auxvsecarrowgraph \lemref{lm:hi_eced_hi_faux}, detailed proof provided in Appendix \secref{sec:proof_auxvsec2}). The reason why we introduce this step is that the \ECT objective is \emph{adaptive submodular} \adasubmgraph, by which we can relate the 1-step gain of a greedy policy \ectgainggraph to an optimal policy \optgainectggraph.
\item To close the loop, it remains to connect the gain of an optimal policy \OPT in the \ECT objective function \optgainectggraph, with the gain of \OPT in the auxiliary function \optkgaingraph. We show how to achieve this connection (\auxvsecforkgraph) in Appendix \secref{sec:proof_nyvsnl}, by relating \optgainectggraph to the expected reduction in prediction error, and further in \secref{sec:proofkeylemma}, by applying the upper bound \fvsperrubgraph provided in \secref{sec:proof_f_vs_perr}.
\end{enumerate}

To make the proof more accessible, we insert the annotated color blocks from \figref{fig:proof_outline} (i.e., \fvsperrubgraph, \fvsperrlbgraph, \auxgainggraph, \ectgainggraph, \optgainectggraph, \optkgaingraph, etc), into the subsequent subsections in Appendix \secref{sec:supp_proofs}, so that readers can easily relate different parts of this section to the proof outline. Note that we only use these annotated color blocks for positioning the proofs, and hence readers can ignore the notations, as it may slightly differ from the ones used in the proof.

% Here, we use $\bientropy{\cdot}$ to denote the binary entropy function, i.e., $\bientropy{x} := - x \log x - (1-x) \log(1-x)$.

% \yuxin{ Visualize a policy, illustrate the 1-step gain vs $k$-step gain, explain/illustrate proof outline.}

% Now we are ready to give the details of the proofs. To make them more accessible, we will provide tables of notations whenever necessary.

% \begin{figure*}[h]
%   \centering
%   \includegraphics[width=.45\textwidth]{}
%   \caption{The Proof Outline}\label{fig:f_vs_perr}
% \end{figure*}

\subsection{ Proof of \lemref{lm:aux_vs_perr}: Relating $\auxobj$ to $\errorprob$}\label{sec:proof_f_vs_perr}

Define $\errorst(\psi) \triangleq \sum_{\targetvar \in \targetVarDom} \Pr{\targetvar \given \psi} \left(1 - \Pr{\targetvar \given \psi}\right)$ as the prediction error of a \emph{stochastic estimator} upon observing $\psi$, i.e., the probability of mispredicting $\targetvar$ if we make a random draw from $\Pr{\Targetvar \given \psi}$. We show in \lemref{lm:stoc_vs_map_supp} that $\errormap(\psi)$ is within a constant factor of $\errorst(\psi)$:

\begin{lemma}\label{lm:stoc_vs_map_supp}
  Fix $\psi$, it holds that $\errormap(\psi) \leq \errorst(\psi) \leq 2\errormap(\psi).$
\end{lemma}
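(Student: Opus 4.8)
The plan is to reduce everything to an elementary inequality about the posterior vector $(p_\targetvar)_{\targetvar \in \targetVarDom}$, where I write $p_\targetvar := \Pr{\targetvar \given \psi}$. Set $p^* := \max_\targetvar p_\targetvar$, so that $\errormap(\psi) = 1 - p^*$, and use the identity $\errorst(\psi) = \sum_\targetvar p_\targetvar(1-p_\targetvar) = 1 - \sum_\targetvar p_\targetvar^2$. With this rewriting, both claimed inequalities become statements about the single quantity $\sum_\targetvar p_\targetvar^2$.

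For the lower bound $\errormap(\psi) \le \errorst(\psi)$, I would bound $\sum_\targetvar p_\targetvar^2 \le p^* \sum_\targetvar p_\targetvar = p^*$, using $p_\targetvar \le p^*$ termwise together with $\sum_\targetvar p_\targetvar = 1$; hence $\errorst(\psi) = 1 - \sum_\targetvar p_\targetvar^2 \ge 1 - p^* = \errormap(\psi)$. Conceptually this is just the familiar fact that a MAP predictor minimizes prediction error, so any stochastic predictor does weakly worse. For the upper bound $\errorst(\psi) \le 2\errormap(\psi)$, I would instead drop all but one term in the sum of squares, $\sum_\targetvar p_\targetvar^2 \ge (p^*)^2$, and then note $(p^*)^2 \ge 2p^* - 1$, which is exactly $(p^*-1)^2 \ge 0$; combining these, $\errorst(\psi) = 1 - \sum_\targetvar p_\targetvar^2 \le 1 - (2p^* - 1) = 2(1-p^*) = 2\errormap(\psi)$.

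There is no real obstacle here: each direction is a one-line estimate once the quantities are expressed through $\sum_\targetvar p_\targetvar^2$, and the only mild point worth isolating is that $(p^*-1)^2 \ge 0$ supplies precisely the slack needed for the factor $2$ — with the bound essentially tight when the posterior concentrates on two target values of mass close to $1/2$ each.
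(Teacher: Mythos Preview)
Your proof is correct. The lower bound is essentially identical to the paper's argument (both amount to $p_\targetvar \le p^*$ termwise). For the upper bound you take a different and somewhat slicker route: you pass through $\errorst(\psi)=1-\sum_\targetvar p_\targetvar^2$, drop all but the $(p^*)^2$ term, and finish with the scalar inequality $(p^*-1)^2\ge 0$. The paper instead writes $\errorst(\psi)=\sum_{i\neq j} p_{\targetvar_i} p_{\targetvar_j}$, expands $2(1-p_{\targetvar_1})=2\bigl(p_{\targetvar_1}+\sum_{i>1}p_{\targetvar_i}\bigr)\bigl(\sum_{i>1}p_{\targetvar_i}\bigr)$, and bounds the cross terms directly. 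Both arguments are elementary; yours isolates the factor $2$ as exactly the slack in a single quadratic, while the paper's decomposition makes the pairwise-product structure of $\errorst$ more visible, which is natural given that this structure reappears later in the edge-weight interpretation of the \ECT objective.
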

\begin{proof}[Proof of Lemma~\ref{lm:stoc_vs_map_supp}]
  We can always lower bound $\errorst$ by $\errormap$, since by definition, $\errormap(\psi) =  1 - \max_{\targetvar} \Pr{\targetvar \given \psi} = \sum_{y\in \targetVarDom} \Pr{\targetvar \given \psi} \cdot  \left(1 - \max_{\targetvar} \Pr{\targetvar \given \psi}\right) \leq \sum_{\targetvar \in \targetVarDom} \Pr{\targetvar \given \psi} \left(1 - \Pr{\targetvar \given \psi}\right) = \errorst(\psi)$.

  To prove the second part, we write $p_{\targetvar_i} = \Pr{\Targetvar = \targetvar_i \given \psi}$ for all $y_i \in \targetVarDom$. % the with (posterior) probability of $\Targetvar$ taking values $\{\targetvar_1, \targetvar_2, \dots, \targetvar_\numtar\}$ are $, p_2, \dots, p_m$.
  W.l.o.g., we assume $p_{\targetvar_1} \geq p_{\targetvar_2} \geq \dots \geq p_{\targetvar_\numtar}$. Then $\errormap = 1 - p_{\targetvar_1}$. We further have
  \begin{align*}
    2\errorprob^\text{MAP} &= 2(1 - p_{\targetvar_1}) = 2(\sum_{i=2}^\numtar p_{\targetvar_i}) = 2(\sum_{i=1}^\numtar p_{\targetvar_i}) (\sum_{i=2}^\numtar p_{\targetvar_i}) =  2(p_{\targetvar_1} + \sum_{i=2}^\numtar p_{\targetvar_i}) (\sum_{i=2}^\numtar p_{\targetvar_i}) \\
                           &\geq 2p_{\targetvar_1} (\sum_{i=2}^\numtar p_{\targetvar_i}) + (\sum_{i=2}^\numtar p_{\targetvar_i})^2 \\
                           &\geq \sum_{i\neq j}^\numtar p_{\targetvar_i} p_{\targetvar_j} = \sum_i p_{\targetvar_i} (1-p_{\targetvar_i}) = \errorst
  \end{align*}
\end{proof}

Now, we provide lower and upper bounds of the second term in the RHS of Equation \eqrefcus{eq:def_auxobj_additive}:
% In the following, we show that the second term in the RHS of Equation \eqrefcus{eq:def_auxobj_additive} is bounded by functions of $\errorprob(\psi)$:
\begin{lemma}\label{lm:faux_additiveterm_bound_supp}
  $2 \errormap \leq \sum_{\targetvar \in \targetVarDom} \bientropy{p_\targetvar} \leq 3 (\bientropy{\errormap} + \errormap \log n)$.
\end{lemma}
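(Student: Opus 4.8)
The plan is to prove the two inequalities separately; both reduce to elementary one-variable estimates on $x\mapsto x\log\frac1x$ and $x\mapsto(1-x)\log\frac1{1-x}$, combined with \lemref{lm:stoc_vs_map_supp}. Throughout I would write $q:=\errormap(\psi)=1-\max_\targetvar\Pr{\targetvar\given\psi}$ and let $\targetvar_1$ be a maximizer, so that $\Pr{\targetvar_1\given\psi}=1-q$.

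For the lower bound I would invoke the pointwise inequality $\bientropy x\ge 2x(1-x)$ on $[0,1]$ — an immediate consequence of the concavity of $x\mapsto\bientropy x-2x(1-x)$, which vanishes at both endpoints (this holds for the natural and base-$2$ logarithms, since $x(1-x)\le\tfrac14$). Summing over $\targetvar\in\targetVarDom$ gives $\sum_\targetvar\bientropy{\Pr{\targetvar\given\psi}}\ge 2\sum_\targetvar\Pr{\targetvar\given\psi}(1-\Pr{\targetvar\given\psi})=2\,\errorst(\psi)$, and \lemref{lm:stoc_vs_map_supp} ($\errorst(\psi)\ge\errormap(\psi)$) then finishes the lower bound.

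For the upper bound I would peel off the MAP coordinate: its binary entropy equals $\bientropy{1-q}=\bientropy q$ by symmetry. The remaining posterior masses $\{\Pr{\targetvar\given\psi}\}_{\targetvar\ne\targetvar_1}$ sum to $q$, so renormalizing them to a probability vector on at most $n-1$ atoms and expanding $x\log\frac1x$ gives $\sum_{\targetvar\ne\targetvar_1}\Pr{\targetvar\given\psi}\log\frac1{\Pr{\targetvar\given\psi}}=q\log\frac1q+q\cdot(\text{entropy of the renormalized tail})\le q\log\frac1q+q\log n$. The complementary halves of those binary entropies are controlled by $\sum_{\targetvar\ne\targetvar_1}(1-\Pr{\targetvar\given\psi})\log\frac1{1-\Pr{\targetvar\given\psi}}\le\sum_{\targetvar\ne\targetvar_1}\Pr{\targetvar\given\psi}\log e=q\log e$, using the bound $\ln\frac1{1-p}\le\frac p{1-p}$. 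Adding the three pieces and using $q\log\frac1q\le\bientropy q$ yields $\sum_\targetvar\bientropy{\Pr{\targetvar\given\psi}}\le 2\bientropy q+q\log n+q\log e\le 3(\bientropy q+q\log n)$, where the last step uses $\bientropy q\ge0$ and $2q\log n\ge q\log e$ for $n\ge2$ (indeed $2\log n\ge\log 4>\log e$); the case $n=1$ is trivial.

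The only point needing care — and why a cruder argument fails — is that $q=\errormap(\psi)$ need not be below $\tfrac12$: it can be as large as $1-1/\numtar$ when the target posterior is near uniform, so one cannot simply claim that $\bientropy q$ already dominates the whole sum. Splitting off the maximizer and paying a $\log n$ for the entropy of the renormalized tail is precisely what forces the $\log n$ factor on the right-hand side, and it leaves just enough slack for the constant $3$ to go through. Everything else is routine algebra with the logarithmic terms.
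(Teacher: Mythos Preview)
Your proof is correct. The lower bound matches the paper's argument exactly: the pointwise estimate $\bientropy{x}\ge 2x(1-x)$, summed over $\targetvar$, followed by \lemref{lm:stoc_vs_map_supp}.

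For the upper bound you take a different route from the paper. The paper first proves, via a case split on whether $\max_\targetvar p_\targetvar>1/2$, that $\sum_\targetvar(1-p_\targetvar)\log\frac{1}{1-p_\targetvar}\le 2\sum_\targetvar p_\targetvar\log\frac{1}{p_\targetvar}$, hence $\sum_\targetvar\bientropy{p_\targetvar}\le 3\,\entropy{\Targetvar}$, and then invokes Fano's inequality $\entropy{\Targetvar}\le\bientropy{\errormap}+\errormap\log(\numtar-1)$ as a black box. You instead peel off the MAP coordinate and bound the tail directly: the $p\log(1/p)$ part of the non-MAP terms is handled by the chain rule (renormalization) plus the max-entropy bound $\log n$ --- which is essentially Fano reproved inline --- while the $(1-p)\log\frac{1}{1-p}$ part is controlled by the uniform pointwise bound $(1-p)\log\frac{1}{1-p}\le p\log e$, avoiding any case analysis. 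Your argument is a bit more self-contained; the paper's is slightly cleaner in that the factor $3$ emerges from the single inequality $\sum_\targetvar\bientropy{p_\targetvar}\le 3\,\entropy{\Targetvar}$ before any error-probability considerations enter.
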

\begin{proof}[Proof of Lemma~\ref{lm:faux_additiveterm_bound_supp}]

  We first prove the inequality on the left. Expanding the middle term involving the binary entropy of $p_\targetvar$, we get
  \begin{align*}
    \sum_{\targetvar\in \targetVarDom} \bientropy{p_\targetvar} &= \sum_{\targetvar \in \targetVarDom} \left(p_\targetvar \log \frac{1}{p_\targetvar} + (1-p_\targetvar) \log \frac{1}{1- p_\targetvar} \right) \\
                                                                &\stackrel{\text{(a)}}{\geq} \frac{2}{\ln 2} \sum_{\targetvar \in \targetVarDom} p_\targetvar (1-p_\targetvar) \\
                                                                &\geq 2 \errorst \stackrel{\textrm{\lemref{lm:stoc_vs_map_supp}}}{\geq} 2 \errormap
                                                                % &= \entropy{Z} + \sum_z (1-p_z) \log \frac{1}{1- p_z} \\
  \end{align*}
  Here, step (a) is by inequality $\ln x \geq 1 - 1/x$ for $x\geq 0$.

  To prove the second part, we first show in the following that $\sum_\targetvar (1-p_\targetvar) \log \frac{1}{1- p_\targetvar} \leq 2 \sum_\targetvar p_\targetvar \log \frac{1}{p_\targetvar} $.

  W.l.o.g., we assume that the probabilities $p_\targetvar$'s are in decreasing order, i.e., $p_{\targetvar_1} \geq p_{\targetvar_2} \geq \dots \geq p_{\targetvar_\numtar}$. Observe that if $p_\targetvar \in [0,1/2]$, then $(1-p_\targetvar) \log \frac{1}{1- p_\targetvar} \leq p_\targetvar \log \frac{1}{p_\targetvar}$. Consider the following two cases:
  \begin{enumerate}
  \item $p_{\targetvar_1} \leq 1/2$. In this case, we have $\sum_\targetvar (1-p_\targetvar) \log \frac{1}{1- p_\targetvar} \leq \sum_\targetvar p_\targetvar \log \frac{1}{p_\targetvar} $.
  \item $p_{\targetvar_1} > 1/2$. Since $\sum_{i>1} p_{\targetvar_i} = 1 - p_{\targetvar_1}$, we have
    \begin{align*}
      \sum_i (1-p_{\targetvar_i}) \log \frac{1}{1- p_{\targetvar_i}} &= (1-p_{\targetvar_1}) \log \frac{1}{1- p_{\targetvar_1}} + \sum_{i>1} (1-p_{\targetvar_i}) \log \frac{1}{1- p_{\targetvar_i}}\\
                                                                     &=\sum_{i>1} p_{\targetvar_i} \log \frac{1}{\sum_{i>1} p_{\targetvar_i}} + \sum_{i>1} (1-p_{\targetvar_i}) \log \frac{1}{1- p_{\targetvar_i}}\\
                                                                     &\leq \sum_{i>1} p_{\targetvar_i} \log \frac{1}{p_{\targetvar_i}} + \sum_{i>1} (1-p_{\targetvar_i}) \log \frac{1}{1- p_{\targetvar_i}} \\
                                                                     &\leq \sum_{i>1} p_{\targetvar_i} \log \frac{1}{p_{\targetvar_i}} + \sum_{i>1} p_{\targetvar_i} \log \frac{1}{p_{\targetvar_i}} \\
                                                                     % &= 2 \sum_{i>1} p_{\targetvar_i} \log \frac{1}{p_{\targetvar_i}} \\
                                                                     &\leq 2 \sum_{i>0} p_{\targetvar_i} \log \frac{1}{p_{\targetvar_i}}
    \end{align*}
  \end{enumerate}
  Therefore,
  \begin{align}
    \label{eq:aux_addidive_term_bient}
    \sum_{\targetvar\in \targetVarDom} \bientropy{p_\targetvar} \leq 3 \sum_{i>0} p_{\targetvar_i} \log \frac{1}{p_{\targetvar_i}} = 3 \entropy{\Targetvar}.
  \end{align}
  Furthermore, by Fano's inequality (in the absence of conditioning), we know that $\entropy{\Targetvar} \leq \bientropy{\errormap}  + \errormap \log(|\targetVarDom|-1)$. Combining with Equation~\eqrefcus{eq:aux_addidive_term_bient} we get
  \begin{align*}
    \sum_\targetvar \bientropy{p_\targetvar} \leq 3 \entropy{\Targetvar} \leq 3 \left(\bientropy{\errormap} + \log(|\targetVarDom|-1)\right) \stackrel{(b)}{\leq} 3  \left(\bientropy{\errormap} + \log(n)\right)
  \end{align*}
  where in (b) we use the fact that $t = |\targetVarDom| \leq |\hiddenVarDom| = \numrc$, since $\Targetvar = r(\Hiddenvar)$ is a function of $\Hiddenvar$. Hence it completes the proof.
\end{proof}

Next, we bound the first term on the RHS of Equation \eqrefcus{eq:def_auxobj_additive}, i.e., $\sum_{\{\hiddenvar,\hiddenvar'\}\in E} p_{\hiddenvar}p_{\hiddenvar'} \log{ \frac{1}{p_{\hiddenvar}p_{\hiddenvar'}} }$, against $\errormap$:
\begin{lemma}\label{lm:aux_vs_perr_part1_supp}
  $\sum_{\{\hiddenvar,\hiddenvar'\}\in E} p_{\hiddenvar}p_{\hiddenvar'} \log{ \frac{1}{p_{\hiddenvar}p_{\hiddenvar'}} } \leq 2(\bientropy{\errorst} + \errorst \log n)$.
  % \begin{align*}
  %   \sum_{(\hiddenvar,\hiddenvar')\in E} \Pr{\hiddenvar \given \psi}\Pr{\hiddenvar' \given \psi} \left(\log{ \frac{1}{\Pr{\hiddenvar \given \psi}\Pr{\hiddenvar' \given \psi}} }\right) \leq  2(\bientropy{\errorprob} + \errorprob \log n)
  %   \sum_{(\hiddenvar,\hiddenvar')\in E} p_{\hiddenvar}p_{\hiddenvar'} \log{ \frac{1}{p_{\hiddenvar}p_{\hiddenvar'}} } \leq 2(\bientropy{\errorst} + \errorprob \log n)
  % \end{align*}
\end{lemma}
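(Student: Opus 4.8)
The plan is to read the left-hand side as (half of) a \emph{partial Shannon entropy} of the product distribution $\Pr{\hiddenvar\given\psi}\Pr{\hiddenvar'\given\psi}$ restricted to the set of cross-target pairs, and then control that partial entropy by a standard entropy-splitting inequality. Write $p_\hiddenvar := \Pr{\hiddenvar\given\psi}$ and $q_\targetvar := \Pr{\targetvar\given\psi} = \sum_{\hiddenvar:\, r(\hiddenvar)=\targetvar}p_\hiddenvar$. Since the summand $p_\hiddenvar p_{\hiddenvar'}\log\frac1{p_\hiddenvar p_{\hiddenvar'}}$ is symmetric in $\hiddenvar,\hiddenvar'$ and each edge of $E$ corresponds to two ordered pairs of distinct root-causes mapping to distinct targets, the left-hand side equals $\frac12\sum_{(\hiddenvar,\hiddenvar'):\, r(\hiddenvar)\ne r(\hiddenvar')} p_\hiddenvar p_{\hiddenvar'}\log\frac1{p_\hiddenvar p_{\hiddenvar'}}$, the sum now over ordered pairs.

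The first observation I would use is that this sub-collection of the product distribution has total mass exactly $\errorst(\psi)$: indeed $\sum_{(\hiddenvar,\hiddenvar'):\, r(\hiddenvar)\ne r(\hiddenvar')} p_\hiddenvar p_{\hiddenvar'} = 1 - \sum_{\targetvar}q_\targetvar^2 = \sum_\targetvar q_\targetvar(1-q_\targetvar) = \errorst(\psi)$. The second ingredient is the elementary ``grouping'' inequality: for nonnegative reals $\{a_i\}_{i\in I}$ with $\sum_i a_i = \mu$ one has $\sum_i a_i\log\frac1{a_i}\le \mu\log\frac1\mu + \mu\log|I|$, obtained by factoring $a_i = \mu\cdot(a_i/\mu)$ and bounding the resulting normalized entropy on $|I|$ atoms by $\log|I|$. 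Applying this with $a_{(\hiddenvar,\hiddenvar')} = p_\hiddenvar p_{\hiddenvar'}$, $\mu = \errorst(\psi)$, and index-set of size at most $|\hiddenVarDom|^2 = n^2$ gives $\sum_{(\hiddenvar,\hiddenvar'):\, r(\hiddenvar)\ne r(\hiddenvar')} p_\hiddenvar p_{\hiddenvar'}\log\frac1{p_\hiddenvar p_{\hiddenvar'}} \le \errorst\log\frac1{\errorst} + 2\errorst\log n$. Halving and using $\errorst\log\frac1{\errorst}\le\bientropy{\errorst}$ then yields the claimed bound (in fact with the sharper constant $1$ rather than $2$); the degenerate case $E=\emptyset$, i.e. $\errorst=0$, makes both sides zero.

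I do not expect a genuine obstacle here — the argument is short once the partial-entropy viewpoint is adopted — but it is worth recording why the obvious alternative does not suffice: expanding $\log\frac1{p_\hiddenvar p_{\hiddenvar'}}=\log\frac1{p_\hiddenvar}+\log\frac1{p_{\hiddenvar'}}$ and regrouping by target value leads one to want $\sum_\targetvar q_\targetvar(1-q_\targetvar)\log\frac1{q_\targetvar}\le\bientropy{\errorst}$, which is \emph{false} (take $p$ uniform over $n$ root-causes in bijection with $n$ target values, so the left side is $\approx\log n$ while $\bientropy{\errorst}\approx(\log n)/n$). Routing through the entropy-splitting bound — where the $\log n$ factor is paired with the small mass $\errorst$ instead of being left to blow up — is precisely what makes the stated bound come out with the correct dependence on $n$.
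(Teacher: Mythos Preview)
Your proof is correct and in fact yields the sharper bound $\bientropy{\errorst}+\errorst\log n$; the paper's constant $2$ is not needed. The argument via the partial-entropy / grouping inequality is a genuinely different route from the paper's.

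For comparison: the paper \emph{does} take the expand-and-regroup route you warn against in your last paragraph, but with a different regrouping than the one you dismiss. After writing $\log\frac{1}{p_\hiddenvar p_{\hiddenvar'}}=\log\frac{1}{p_\hiddenvar}+\log\frac{1}{p_{\hiddenvar'}}$ and symmetrizing, the paper reaches
\[
-2\sum_{\targetvar} p_\targetvar(1-p_\targetvar)\log(1-p_\targetvar)\;+\;2\sum_\targetvar p_\targetvar(1-p_\targetvar)\,\entropy{\bigl\{p_\hiddenvar/(1-p_\targetvar)\bigr\}_{\hiddenvar:\,r(\hiddenvar)\ne\targetvar}},
\]
then bounds the first piece by $2\sum_\targetvar p_\targetvar\bientropy{1-p_\targetvar}\le 2\,\bientropy{\errorst}$ via Jensen, and the second piece by $2\errorst\log n$ using the trivial entropy bound. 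So the problematic inequality $\sum_\targetvar q_\targetvar(1-q_\targetvar)\log\frac{1}{q_\targetvar}\le\bientropy{\errorst}$ you flag is never needed; the paper instead splits off the $\log(1-p_\targetvar)$ part and handles the residual conditional entropy separately. Your approach sidesteps this decomposition entirely by treating the whole sum as the entropy of a subprobability measure of mass $\errorst$ on at most $n^2$ atoms, which is both shorter and gives the better constant. The paper's route, on the other hand, exposes a structural decomposition (target-level term plus within-complement conditional entropy) that is perhaps more informative about where the slack lies, but is not needed for the lemma as stated.
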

\begin{proof}[Proof of Lemma~\ref{lm:aux_vs_perr_part1_supp}]
  We can expand the LHS as
  % $$\text{LHS} = - \sum_{\hiddenvar} p_{\hiddenvar} \sum_{\hiddenvar': r(\hiddenvar') \neq r(\hiddenvar)} p_{\hiddenvar'} (\log{ p_{\hiddenvar} } + \log{p_{\hiddenvar'} })
  % = - 2  \sum_{\hiddenvar} p_{\hiddenvar} \log{ p_{\hiddenvar} } \sum_{\hiddenvar': r(\hiddenvar') \neq r(\hiddenvar)} p_{\hiddenvar'}
  % = - 2  \sum_{\hiddenvar} p_{\hiddenvar} (1 - p_{\hiddenvar}) \log{ p_{\hiddenvar} }$$
  \begin{align}
    \text{LHS} &= - \sum_{\hiddenvar'} p_{\hiddenvar'} \sum_{\hiddenvar: r(\hiddenvar) \neq r(\hiddenvar')} p_{\hiddenvar} (\log{ p_{\hiddenvar} } + \log{p_{\hiddenvar'} }) \nonumber \\
               &= - 2 \sum_{\hiddenvar'} p_{\hiddenvar'} \sum_{\hiddenvar: r(\hiddenvar) \neq r(\hiddenvar')} p_{\hiddenvar} \log{ p_{\hiddenvar} }  \nonumber \\
               &= - 2 \sum_{\targetvar \in \targetVarDom} \sum_{\hiddenvar': r(\hiddenvar')= y} p_{\hiddenvar'} \sum_{\hiddenvar: r(\hiddenvar) \neq y} p_{\hiddenvar}  \log{ p_{\hiddenvar} }   \nonumber \\
               &= 2 \sum_{\targetvar \in \targetVarDom} p_{\targetvar} (1-p_{\targetvar}) \sum_{\hiddenvar: r(\hiddenvar) \neq \targetvar} \frac{p_\hiddenvar}{1-p_{\targetvar}} \left(\log \frac{p_\hiddenvar}{1-p_{\targetvar}} + \log {(1-p_{\targetvar})} \right) \nonumber \\
               &= - 2\sum_{\targetvar \in \targetVarDom} p_{\targetvar} (1-p_{\targetvar}) \log (1 - p_{\targetvar}) + 2 \sum_{\targetvar \in \targetVarDom} p_{\targetvar} (1-p_{\targetvar}) \entropy{\left\{ \frac{p_\hiddenvar}{(1-p_{\targetvar})} \right\}_{\hiddenvar: r(\hiddenvar) \neq {\targetvar}}} \label{eq:huexpand}\\
               &\leq 2 \sum_{\targetvar \in \targetVarDom} p_{\targetvar} \bientropy{1 - p_{\targetvar}} + 2 \sum_{\targetvar \in \targetVarDom} p_{\targetvar} (1-p_{\targetvar}) \entropy{\left\{ \frac{p_\hiddenvar}{(1-p_{\targetvar})} \right\}_{\hiddenvar: r(\hiddenvar) \neq {\targetvar}}} \nonumber
  \end{align}
  Since $\entropy{\left\{ \frac{p_\hiddenvar}{(1-p_{\targetvar})} \right\}_{\hiddenvar: r(\hiddenvar) \neq {\targetvar}}} \leq \log \numtar \leq \log \numrc$, we have
  \begin{align*}
    \text{LHS}
    &\leq 2 \sum_{\targetvar \in \targetVarDom} p_{\targetvar} \bientropy{1 - p_{\targetvar}} + 2 \sum_{\targetvar}  \underbrace{ p_{\targetvar} (1-p_{\targetvar})\log n }_{\errorst \log n} \\
    &\stackrel{\text{Jensen}}{\leq} 2 \bientropy{\sum_{\targetvar \in \targetVarDom} p_{\targetvar} (1 - p_{\targetvar})} + 2 \errorst \log n \\
    &= 2 \left( \bientropy{\errorst} + \errorst \log n \right).
  \end{align*}
  which completes the proof.
\end{proof}

Now, we are ready to state the upper bound \fvsperrubgraph and lower bound \fvsperrlbgraph of $\auxobj$.
\begin{proof}[Proof of \lemref{lm:aux_vs_perr}]
  Clearly, $\sum_{\{\hiddenvar,\hiddenvar'\}\in E} p_{\hiddenvar}p_{\hiddenvar'} \log{ \frac{1}{p_{\hiddenvar}p_{\hiddenvar'}} } \geq 0$. By \lemref{lm:faux_additiveterm_bound_supp} we get the lower bound:
  \begin{align*}
    \auxobj(\psi) \geq 2c \cdot \errorprob^{\text{MAP}}(\psi).
  \end{align*}
  Now assume $\errormap \leq 1/4$. By \lemref{lm:stoc_vs_map_supp} we know $\errorst \leq 2 \errormap$, and $\bientropy{\errorst} \leq \bientropy{2 \errormap} \leq 2\bientropy{\errormap} $. Combining with \lemref{lm:faux_additiveterm_bound_supp} and \lemref{lm:aux_vs_perr_part1_supp}, we get
  \begin{align*}
    \auxobj(\psi)
    &\leq  3c \cdot \left(\bientropy{\errormap} + \errormap \log n\right) + 4 \left(\bientropy{\errorst} + \errorst \log n\right) \\
    &\leq (3c+4) \cdot \left(\bientropy{\errormap} + \errormap \log n\right),
  \end{align*}
  which completes the proof.
  % \begin{lemma}\label{lm:aux_vs_perr_supp}
  %   We consider the auxiliary function defined is Equation~\eqrefcus{eq:def_auxobj_additive}, with $c_1 = 3/2$, and $c_2 = 8 \left(\log (n^2/\eta)\right)^2$. Let $n$ be the number of hypotheses, and $\errorprob$ be the error probability of the stochastic estimator. Then
  %   \begin{align*}
  %     (2c_2 + 2) \errorprob \leq \auxobj(\psi) \leq  (3c_2 + 3)(\bientropy{\errorprob} + \errorprob \log n)
  %   \end{align*}
  % \end{lemma}
\end{proof}

\subsection{Proof of \lemref{lm:hi_eced_hi_faux}: Bounding $\auxgain$ against $\ectgain$,  $\ecedgain$}\label{sec:proof_auxvsec2}
In this section, we analyze the 1-step gain in the auxiliary function \auxgainggraph, of any test $e\in \Testset$. By the end of this section, we will show that it is lowered bounded by the one-step gain in the \ECT objective \ectgainggraph.

Recall that we assume test outcomes are binary for our analysis, and in the following of this section, we assume the outcome $x_e$ of test $e$ is in $\{+,-\}$ instead of $\{0,1\}$, for clarity purposes.

\subsubsection{Notations and the Intermediate Goal}
\begin{figure*}[h]
  \centering
  \includegraphics[width=.8\textwidth]{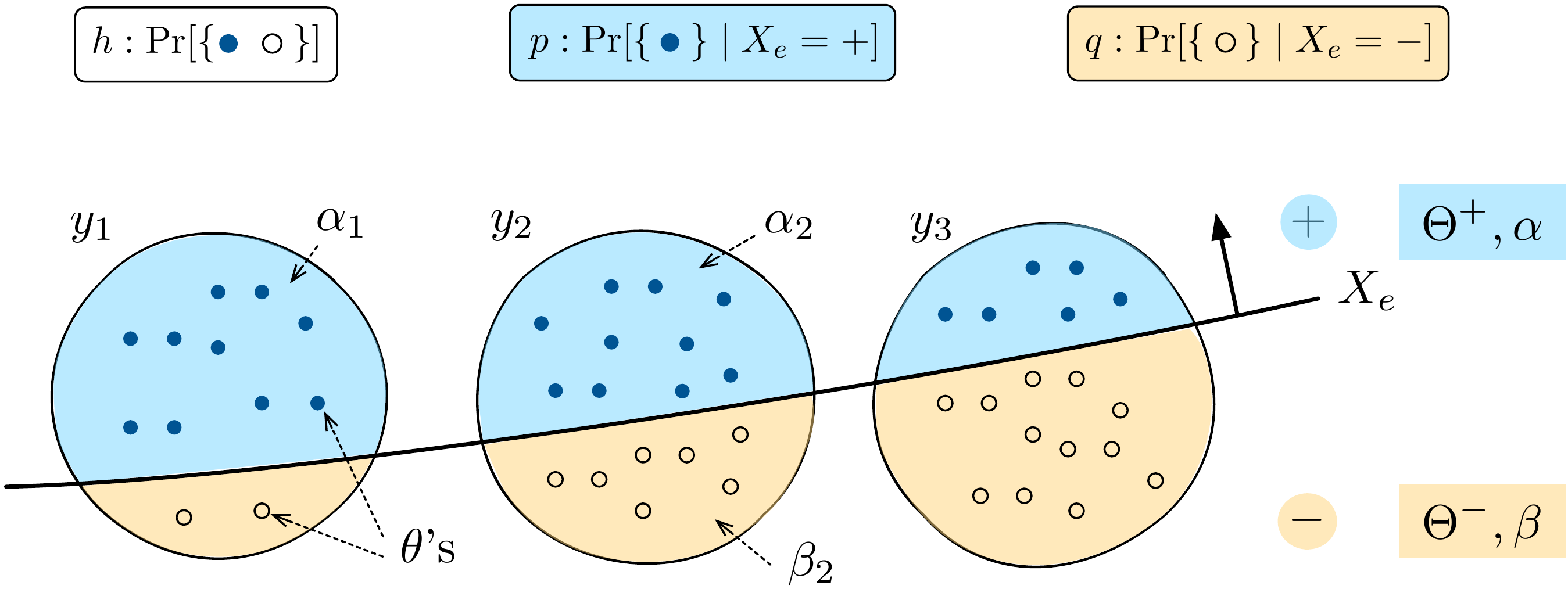}
  % \subfigure[\ECED]{%
  % \includegraphics[width=.18\textwidth]{fig/ecd_discount.pdf}
  % \label{fig:eced}
  % }
  \caption{Performing binary test $e$ on $\Hiddenvar$ and $\Targetvar$. Dots represent root-causes $\hiddenvar \in \hiddenVarDom$, and circles represent values of the target variable $\targetvar \in \targetVarDom$. The favorable outcome of $\Test_e$ for the root-causes in solid dots are $+$; the favorable outcome for root-causes in hollow dots are $-$. We also illustrate the short-hand notations used in \secref{sec:proof_auxvsec2}. They are: $p,q$ (i.e., the posterior probability distribution over $\Targetvar$ and $\Hiddenvar$), $h$ (i.e., the prior distribution over $\Targetvar$ and $\Hiddenvar$) and $\alpha, \beta$ (i.e., the probability mass of solid and hollow dots, respectively, before performing test $e$).}\label{fig:illu_aux_vs_ec2_example}
\end{figure*}

\begin{table}[h]
  \centering
  \caption{Summary of notations introduced for the proof of \lemref{lm:hi_eced_hi_faux}}\label{tb:notations-auxgain-vs-ec2}
  \begin{tabular}[t]{|l|p{12cm}|} \hline
    $h$ & $\Pr{\cdot \given \psi}$, i.e., probability distribution on $\Hiddenvar$ and $\Targetvar$, before performing test $e$ \\ \hline
    $h_+$, $h_-$ & $\Pr{X_e = + \given \psi}$,$\Pr{X_e = - \given \psi}$  \\ \hline
    $p_\hiddenvar$, $p_\targetvar$ & $\Pr{\cdot \given \psi, X_e = +}$, i.e., probability distribution on $\Hiddenvar$ and $\Targetvar$ having observed $X_e = +$ \\ \hline
    $q_\hiddenvar$, $q_\targetvar$ & $\Pr{\cdot \given \psi, X_e = -}$, i.e., probability distribution on $\Hiddenvar$ and $\Targetvar$ having observed $X_e = -$ \\ \hline
    $\Hiddenvar^+$, $\Hiddenvar^-$ & set of positive / negative root-causes \\ \hline
    $\setiplus$, $\setiminus$ & set of positive / negative root-causes associated with target $y_i$\\ \hline
    $\alpha$, $\beta$ & total probability mass of positive / negative root-causes \\ \hline
    $\alpha_i$, $\beta_i$ & probability mass of positive / negative root-causes associated with target $y_i$ \\ \hline
    $\mu_i$, $\nu_i$ & $\alpha_i / \alpha$, $\beta_i / \beta$ (defined in \secref{sec:proof_auxgain_vs_ec2gain})\\ \hline
    $\hiddenvar \nsim \hiddenvar'$ & $r(\hiddenvar) \neq r(\hiddenvar')$, i.e., root-causes $\hiddenvar$ and $\hiddenvar'$ do not share the same target value \\ \hline
    % $\vdots$ & \\ \hline
  \end{tabular}
\end{table}

% We claim in~\lemref{lm:gain.faux.w.additive_vs_gain.ec2} that with a properly chosen $c_1$ and $c_2$, we can lowerbound the gain in $\auxobj$ (as defined in Equation~\eqrefcus{eq:def_auxobj_additive}) with a factor of the EC2 gain $\ectgain$, minus some (small) additive term.
% % We now proceed to prove the following lemma, which relates the gain in our proposed auxiliary function $\auxobj$ , to the EC2 gain $\ectgain$.

% \begin{lemma}\label{lm:gain.faux.w.additive_vs_gain.ec2}
%   Fix $\eta \in (0,1)$. We consider the auxiliary function defined in Equation~\eqrefcus{eq:def_auxobj_additive}, with $c_1 = 3/2$, and $c_2 = 8 \left(\log (n^2/\eta)\right)^2$. For any test $e$, let $n$ be the total number of hypotheses, $m$ be the number of regions, and $\nr$ be the noise rate associated with test $e$. It holds that
%   \begin{align*}
%     \auxgain(e \given \psi) + c_{\eta,\nr} \geq c_\nr \ectgainat{\psi}{e}
%   \end{align*}
%   where $c_{\eta,\nr} = 2m(1-2\nr)^2 \eta$, and $c_\nr \triangleq (1-2\nr)^2/16$.
% \end{lemma}

% \begin{proof}[Proof of Lemma~\ref{lm:gain.faux.w.additive_vs_gain.ec2}]
For brevity, we first define a few short-hand notations to simplify our derivation. Let $p, q$ be two distributions on $\Hiddenvar$, and $h = \hplus p + \hminus q$ be the convex combination of the two, where $\hplus, \hminus \geq 0$ and $\hplus + \hminus = 1$.

In fact, we are using $p$ and $q$ to refer to the posterior distribution over $\Hiddenvar$ \emph{after} we observe the (noisy) outcome of some binary test $e$, and use $h$ to refer to the distribution over $\Hiddenvar$ \emph{before} we perform the test, i.e., $p_\hiddenvar \triangleq \Pr{\hiddenvar\given \Test_e=+}$, $q_\hiddenvar \triangleq \Pr{\hiddenvar\given \Test_e=-}$, and $h_\hiddenvar \triangleq \Pr{\hiddenvar} = \hplus p_\hiddenvar + \hminus q_\hiddenvar$, where $\hplus = \Pr{\Test_e = +}$ and $\hminus = \Pr{\Test_e = -}$. For $y_i \in \targetVarDom$, we use $p_{i} \triangleq \sum_{\hiddenvar: r(\hiddenvar)= \targetvar_i} p_\hiddenvar$ to denote the probability of $y_i$ under distribution $p$, and use $q_{i} \triangleq \sum_{\hiddenvar: r(\hiddenvar)= \targetvar_i} q_\hiddenvar$ to denote the probability of $y_i$ under distribution $q$.

Further, given a test $e$, we define $\setiplus$, $\setiminus$ to be the set of root-causes associated with target $y_i$, whose favorable outcome of test $e$ is $+$ (for $\setiplus$) and $-$ (for $\setiminus$). Formally,
\begin{align*}
  \setiplus &\triangleq \{\hiddenvar: r(\hiddenvar) = y_i \wedge \Pr{X_e=+ \given \hiddenvar} \geq 1/2\}\\
  \setiminus &\triangleq \{\hiddenvar: r(\hiddenvar) = y_i \wedge \Pr{X_e=+ \given \hiddenvar} < 1/2\}
\end{align*}
We then define $\Hiddenvar^+ \triangleq \bigcup_{i\in \{1,\dots,\numtar\}} \setiplus$, and $\Hiddenvar^- \triangleq \bigcup_{i\in \{1,\dots,\numtar\}} \setiminus$, to be the set of ``positive'' and ``negative'' root-causes for test $e$, respectively.

Let $\alpha_i, \beta_i$ be the probability mass of the root-causes in $\setiplus$ and $\setiminus$, i.e., $\alpha_i \triangleq \sum_{\substack{\targetvar \in \setiplus}} \Pr{\hiddenvar}$, and $  \beta_i \triangleq \sum_{\substack{\targetvar \in \setiminus}} \Pr{\hiddenvar}.$
% with $r(\hiddenvar)=\targetvar_i$, which results in (favorable outcome) $X_e=+$ and $X_e=-$, respectively.
% \begin{align*}
%   \alpha_i &\triangleq \sum_{\substack{r(\hiddenvar) = \targetvar_i\\ \Pr{X_e=+ \given \hiddenvar} \geq 1/2}} \Pr{\hiddenvar}\\
%   \beta_i &\triangleq \sum_{\substack{r(\hiddenvar) = \targetvar_i\\ \Pr{X_e=+ \given \hiddenvar} < 1/2}} \Pr{\hiddenvar}.
% \end{align*}
We further define $\alpha \triangleq \sum_{\targetvar_i\in \targetVarDom} \alpha_i = \sum_{\hiddenvar \in \Hiddenvar^+} \Pr{\hiddenvar}$, and $\beta \triangleq \sum_{\targetvar_i \in \targetVarDom} \beta_\targetvar = \sum_{\hiddenvar \in \Hiddenvar^-} \Pr{\hiddenvar}$, then clearly we have $\alpha + \beta = 1$. See \figref{fig:illu_aux_vs_ec2_example} for illustration.

Now, we assume that test $e$ has error rate $\epsilon$. That is, $\forall \hiddenvar, ~\min \{\Pr{\Test_e = + \given \hiddenvar}, \Pr{\Test_e = - \given \hiddenvar}\} = \epsilon$. Then, by definition of $h_+$, $h_-$, $p_i$, $q_i$, $p_\hiddenvar$, $q_\hiddenvar$, it is easy to verify that
\begin{align}
  &\hplus = \alpha \nrbar + \beta \nr,
    \quad \hminus = \alpha \nr + \beta \nrbar \nonumber \\
  &p_i = \frac{\alpha_i \nrbar + \beta_i \nr}{\hplus},
    \quad q_i = \frac{\alpha_i \nr + \beta_i \nrbar}{\hminus} \nonumber \\
  &p_\hiddenvar = \frac{h_\hiddenvar \nrbar}{\hplus},
    \quad q_\hiddenvar = \frac{h_\hiddenvar \nr}{\hminus}, \qquad \text{if $\hiddenvar \in \setiplus $} \nonumber \\
  &p_\hiddenvar = \frac{h_\hiddenvar \nr}{\hplus},
    \quad q_\hiddenvar = \frac{h_\hiddenvar \nrbar}{\hminus}, \qquad \text{if $\hiddenvar \in \setiminus $} \label{eq:def_hpq}
\end{align}

For the convenience of readers, we summarize the notations provided above in \tableref{tb:notations-auxgain-vs-ec2}.

Given root-causes $\hiddenvar$ and $\hiddenvar'$, we use $\hiddenvar \nsim \hiddenvar'$ to denote that the values of the target variable $\Targetvar$ associated with root-causes $\hiddenvar$ and $\hiddenvar'$ are different, i.e., $r(\hiddenvar) \neq r(\hiddenvar')$.

% Using our short-hand notations provided above, we can rewrite the RHS of Equation~\eqrefcus{eq:def_auxobj_additive} as
% \begin{align}
%   \eqrefcus{eq:def_auxobj_additive}~\text{RHS} &= \sum_{\hiddenvar \nsim \hiddenvar'} h_\hiddenvar h_{\hiddenvar'} \log \frac{1}{h_\hiddenvar h_{\hiddenvar'}} + c_1 \sum_{\hiddenvar \nsim \hiddenvar'} h_\hiddenvar h_{\hiddenvar'} + c_2 \sum_i \bientropy{h_i}
%   %   c_2 \sum_i \left(  h_i \log \frac{1}{h_i} + (1-h_i) \log \frac{1}{1-h_i}\right)
%   \label{eq:def_auxobj_additive_shorhand}
% \end{align}
% where by $\hiddenvar \nsim \hiddenvar'$ we mean that $\hiddenvar$ and $\hiddenvar'$ do not share a region.

% Using our short-hand notations provided above, we
We can rewrite the auxiliary function (as defined in Equation \eqrefcus{eq:auxfunction}) as follows:
% Assume we have observed partial realization $\psi$. Before we perform test $e$, the value of the auxiliary function is
\begin{align}
  \auxobj &= \sum_{\hiddenvar \nsim \hiddenvar'} h_\hiddenvar h_{\hiddenvar'} \log \frac{1}{h_\hiddenvar h_{\hiddenvar'}} + c \sum_{\targetvar_i \in \targetVarDom} \bientropy{h_i}. \nonumber
  % c_2 \sum_i \left(  h_i \log \frac{1}{h_i} + (1-h_i) \log \frac{1}{1-h_i}\right)
\end{align}
% Here, by $\hiddenvar \nsim \hiddenvar'$ we mean that the values of the target variable $\Targetvar$ associated with root-causes $\hiddenvar$ and $\hiddenvar'$ are different, i.e., $r(\hiddenvar) \neq r(\hiddenvar')$.

If by performing test $e$ we observe $\Test_e = +$, we have
\begin{align}
  \auxobj((e, +)) &= \sum_{\hiddenvar \nsim \hiddenvar'} p_\hiddenvar p_{\hiddenvar'} \log \frac{1}{p_\hiddenvar p_{\hiddenvar'}} + c \sum_{\targetvar_i \in \targetVarDom} \bientropy{p_i} \nonumber
\end{align}
otherwise, if we observe $\Test_e = -$,
\begin{align}
  \auxobj((e, -)) &= \sum_{\hiddenvar \nsim \hiddenvar'} q_\hiddenvar q_{\hiddenvar'} \log \frac{1}{q_\hiddenvar q_{\hiddenvar'}} + c \sum_{\targetvar_i \in \targetVarDom} \bientropy{q_i} \nonumber
\end{align}

Therefore, the expected gain (i.e., \auxgainggraph) of performing test $e$ is,
\begin{align}
  \auxgain &= \overbrace{\sum_{\hiddenvar \nsim \hiddenvar'} h_\hiddenvar h_{\hiddenvar'} \log \frac{1}{h_\hiddenvar h_{\hiddenvar'}} - \left( \hplus \sum_{\hiddenvar \nsim \hiddenvar'} p_\hiddenvar p_{\hiddenvar'} \log \frac{1}{p_\hiddenvar p_{\hiddenvar'}} + \hminus \sum_{\hiddenvar \nsim \hiddenvar'} q_\hiddenvar q_{\hiddenvar'} \log \frac{1}{q_\hiddenvar q_{\hiddenvar'}}\right)}^{\circled{1}} \nonumber \\
           & \qquad + c\underbrace{ \left( \sum_{\targetvar_i \in \targetVarDom} \bientropy{h_i} -  \left( \hplus \sum_{\targetvar_i \in \targetVarDom} \bientropy{p_i} + \hminus \sum_{\targetvar_i \in \targetVarDom} \bientropy{q_i}\right) \right)}_{\circled{2}} \label{eq:def_auxobj_additive_shorhand}
\end{align}
In the following, we derive lower bounds for the above two terms respectively.

\subsubsection{A Lower Bound on Term 1}
Let $g_{\hiddenvar,\hiddenvar'} \triangleq \hplus p_\hiddenvar p_{\hiddenvar'} + \hminus q_\hiddenvar q_{\hiddenvar'}$. Then, we can rewrite Term~\circled{1} as,
\begin{align}
  \text{Term~}\circled{1} &= \underbrace{\sum_{\hiddenvar \nsim \hiddenvar'} h_\hiddenvar h_{\hiddenvar'} \log \frac{1}{h_\hiddenvar h_{\hiddenvar'}} - \sum_{\hiddenvar\nsim \hiddenvar'} g_{\hiddenvar,\hiddenvar'} \log \frac{1}{g_{\hiddenvar,\hiddenvar'}}}_{\text{Part 1}} \nonumber \\
                          &\quad + \underbrace{\sum_{\hiddenvar\nsim \hiddenvar'} g_{\hiddenvar,\hiddenvar'} \log \frac{1}{g_{\hiddenvar,\hiddenvar'}} - \left( \hplus \sum_{\hiddenvar \nsim \hiddenvar'} p_\hiddenvar p_{\hiddenvar'} \log \frac{1}{p_\hiddenvar p_{\hiddenvar'}} + \hminus \sum_{\hiddenvar \nsim \hiddenvar'} q_\hiddenvar q_{\hiddenvar'} \log \frac{1}{q_\hiddenvar q_{\hiddenvar'}}\right)}_{\text{Part 2}} \label{eq:term1_expanded}
\end{align}

\paragraph{Part 1.}
We first provide a lower bound for part 1 of Equation~\eqrefcus{eq:term1_expanded}.

Notice that for concave function $f(x) = x\log\frac{1}{x}$ and $\delta < x$, it holds that $f(x) - f(x-\delta) \geq \delta \frac{\partial f}{\partial x}\at[\big]{x} = \delta (\log \frac{1}{x} - 1)$, then we get
\begin{align*}
  \sum_{\hiddenvar \nsim \hiddenvar'} h_\hiddenvar h_{\hiddenvar'} \log \frac{1}{h_\hiddenvar h_{\hiddenvar'}} - \sum_{\hiddenvar\nsim \hiddenvar'} g_{\hiddenvar,\hiddenvar'} \log \frac{1}{g_{\hiddenvar,\hiddenvar'}}
  \geq \sum_{\hiddenvar \nsim \hiddenvar'} \left(h_\hiddenvar h_{\hiddenvar'} - g_{\hiddenvar,\hiddenvar'}\right) \left(\log \frac{1}{h_\hiddenvar h_{\hiddenvar'}} - 1\right)
\end{align*}

Further, observe
\begin{align}
  h_\hiddenvar h_{\hiddenvar'} - g_{\hiddenvar,\hiddenvar'} &=  (\hplus p_\hiddenvar + \hminus q_{\hiddenvar}) (\hplus p_{\hiddenvar'} + \hminus q_{\hiddenvar'}) - (\hplus p_\hiddenvar p_{\hiddenvar'} + \hminus q_\hiddenvar q_{\hiddenvar'})\nonumber \\
  % &=(\hplus p_y p_{y'} + \hminus q_y q_{y'}) - (\hplus p_y + \hminus q_{y}) (\hplus p_{y'} + \hminus p_{y'} - \hminus p_{y'} - \hplus q_{y'} + \hplus q_{y'} + \hminus q_{y'}) \nonumber \\
                                                            &= (\hplus p_\hiddenvar + \hminus q_{\hiddenvar}) (p_{\hiddenvar'} + q_{\hiddenvar'} - \hminus p_{\hiddenvar'} - \hplus q_{\hiddenvar'}) - (\hplus p_\hiddenvar p_{\hiddenvar'} + \hminus q_\hiddenvar q_{\hiddenvar'}) \nonumber \\
                                                            % &= (\hplus p_y p_{y'} + \hminus q_y q_{y'}) - (\hplus p_i p_y + \hminus q_i q_y + \hplus p_i q_y - \hplus \hminus p_i p_y - \hplus^2 p_i q_y + \hminus p_y q_i - \hminus^2 p_y q_i - \hminus \hplus q_i q_y) \nonumber \\
                                                            &= \hplus \hminus p_{\hiddenvar'} q_\hiddenvar - \hplus \hminus p_{\hiddenvar'} p_\hiddenvar + \hplus \hminus p_\hiddenvar q_{\hiddenvar'} - \hminus \hplus q_{\hiddenvar'} q_\hiddenvar \nonumber \\
                                                            &= - \hplus \hminus (p_\hiddenvar - q_{\hiddenvar})(p_{\hiddenvar'} - q_{\hiddenvar'}) \nonumber % \label{eq:giy_vs_hiy}
\end{align}
Combining the above two equations gives us
\begin{align*}
  \text{Part 1} & \geq \sum_{\hiddenvar \nsim \hiddenvar'} - \hplus \hminus (p_\hiddenvar - q_{\hiddenvar})(p_{\hiddenvar'} - q_{\hiddenvar'})  \left(\log \frac{1}{h_\hiddenvar h_{\hiddenvar'}} - 1\right)
\end{align*}
For any root-cause pair $\{\hiddenvar,\hiddenvar'\}$ with $\hiddenvar\nsim\hiddenvar'$, and binary test $e$, there are only 4 possible combinations in terms of the root-causes' favorable outcomes. Namely,
\begin{enumerate}[itemsep=0mm,leftmargin=5mm]
\item % $\{\hiddenvar_+,\hiddenvar'_+\}$:
  Both $\hiddenvar$ and $\hiddenvar'$ maps $x$ to $+$, i.e., $\hiddenvar\in \Hiddenvar^+ \wedge \hiddenvar' \in \Hiddenvar^+$. % $\left(\Pr{X_e=+ \given \hiddenvar} \geq 1/2\right) \wedge \left(\Pr{X_e=+ \given \hiddenvar'} \geq 1/2\right)$.

  We define such set of root-cause pairs with positive favorable outcomes as  $U_{(+,+)} \triangleq \{\{\hiddenvar, \hiddenvar'\}: \hiddenvar\in \Hiddenvar^+ \wedge \hiddenvar' \in \Hiddenvar^+\}$ (For other cases, we define $U_{(-,-)}$, $U_{(+,-)}$, $U_{(-,+)}$ in a similar way).

  In this case, we have
  \begin{align*}
    \sum_{\{\hiddenvar,\hiddenvar'\}\in U_{(+,+)}} - &\hplus \hminus (p_\hiddenvar - q_{\hiddenvar})(p_{\hiddenvar'} - q_{\hiddenvar'})  \left(\log \frac{1}{h_\hiddenvar h_{\hiddenvar'}} - 1\right) \\
    \stackrel{\text{Eq}~\eqrefcus{eq:def_hpq}}{=}& \sum_{\{\hiddenvar,\hiddenvar'\}\in U_{(+,+)}} - \hplus \hminus \left(\frac{h_\hiddenvar \nrbar}{\hplus} - \frac{h_\hiddenvar \nr}{\hminus} \right)\left(\frac{h_{\hiddenvar'} \nrbar}{\hplus} - \frac{h_{\hiddenvar'} \nr}{\hminus} \right)  \left(\log \frac{1}{h_\hiddenvar h_{\hiddenvar'}} - 1\right)\\
    =& \hplus \hminus \left(\frac{\hminus \nrbar - \hplus \nrbar}{\hplus \hminus} \right)^2 \sum_{\{\hiddenvar,\hiddenvar'\}\in U_{(+,+)}} -  h_\hiddenvar h_{\hiddenvar'} \left(\log \frac{1}{h_\hiddenvar h_{\hiddenvar'}} - 1\right)\\
    % =& \frac{\left(\hminus \nrbar - \hplus \nrbar\right)^2}{\hplus \hminus} \sum_{(y,y')\in U_{(+,+)}} -  h_y h_{y'} \left(\log \frac{1}{h_y h_{y'}} - 1\right) \\
    =& \frac{\beta^2 \left(1 -2 \nr\right)^2}{\hplus \hminus} \sum_{\{\hiddenvar,\hiddenvar'\}\in U_{(+,+)}} -  h_\hiddenvar h_{\hiddenvar'} \left(\log \frac{1}{h_\hiddenvar h_{\hiddenvar'}} - 1\right) \\
    =& \frac{\beta^2 \left(1 -2 \nr\right)^2}{\hplus \hminus} \sum_{\{\hiddenvar,\hiddenvar'\}\in U_{(+,+)}} \left( -  2 h_\hiddenvar h_{\hiddenvar'} \log \frac{1}{h_\hiddenvar} + h_\hiddenvar h_{\hiddenvar'} \right) \\
    =& \frac{\beta^2 \left(1 -2 \nr\right)^2}{\hplus \hminus} \left( \sum_{\targetvar_i \in \targetVarDom} (\alpha - \alpha_i) \sum_{\hiddenvar\in \setiplus} -2 h_\hiddenvar \log \frac{1}{h_\hiddenvar} + \sum_{\targetvar_i \in \targetVarDom} \alpha_i (\alpha - \alpha_i) \right) \\
    =& \frac{\left(1 -2 \nr\right)^2}{\hplus \hminus} \left(-2 \beta^2 \sum_{\targetvar_i \in \targetVarDom} (\alpha - \alpha_i) \sum_{\hiddenvar\in \setiplus} h_\hiddenvar \log \frac{1}{h_\hiddenvar} + \beta^2 \sum_{\targetvar_i \in \targetVarDom} \alpha_i (\alpha - \alpha_i) \right)
  \end{align*}

\item % $\{\hiddenvar_-,\hiddenvar_-'\}$:
  Both $\hiddenvar$ and $\hiddenvar'$ maps $x$ to $-$. Similarly, we get
  \begin{align*}
    \sum_{\{\hiddenvar,\hiddenvar'\}\in U_{(-,-)}} - &\hplus \hminus (p_\hiddenvar - q_{\hiddenvar})(p_{\hiddenvar'} - q_{\hiddenvar'})  \left(\log \frac{1}{h_\hiddenvar h_{\hiddenvar'}} - 1\right) \\
    =& \frac{\left(1 -2 \nr\right)^2}{\hplus \hminus} \left(-2 \alpha^2 \sum_{\targetvar_i \in \targetVarDom} (\beta - \beta_i) \sum_{\hiddenvar\in \setiminus} h_\hiddenvar \log \frac{1}{h_\hiddenvar} + \alpha^2 \sum_{\targetvar_i \in \targetVarDom} \beta_i (\beta - \beta_i) \right)
  \end{align*}

\item % $\{\hiddenvar_+,\hiddenvar_-'\}$:
  $\hiddenvar$ maps $x$ to $+$, $\hiddenvar'$ maps $x$ to $-$. We have
  \begin{align*}
    &\sum_{(\hiddenvar,\hiddenvar')\in U_{(+,-)}} - \hplus \hminus (p_\hiddenvar - q_{\hiddenvar})(p_{\hiddenvar'} - q_{\hiddenvar'})  \left(\log \frac{1}{h_\hiddenvar h_{\hiddenvar'}} - 1\right) \\
    =& \frac{\left(1 -2 \nr\right)^2}{\hplus \hminus} \left( \alpha \beta \sum_{\targetvar_i \in \targetVarDom} (\beta - \beta_i) \sum_{\hiddenvar\in \setiplus} h_\hiddenvar \log \frac{1}{h_\hiddenvar} + \alpha \beta \sum_{\targetvar_i \in \targetVarDom} (\alpha - \alpha_i) \sum_{\hiddenvar\in \setiminus} h_\hiddenvar \log \frac{1}{h_\hiddenvar} - \alpha \beta \sum_{\targetvar_i \in \targetVarDom} \alpha_i (\beta - \beta_i) \right)
  \end{align*}

\item % $\{\hiddenvar_-,\hiddenvar_+'\}$:
  $\hiddenvar$ maps $x$ to $-$, $\hiddenvar'$ maps $x$ to $+$. By symmetry we have
  \begin{align*}
    &\sum_{(\hiddenvar,\hiddenvar')\in U_{(-,+)}} - \hplus \hminus (p_\hiddenvar - q_{\hiddenvar})(p_{\hiddenvar'} - q_{\hiddenvar'})  \left(\log \frac{1}{h_\hiddenvar h_{\hiddenvar'}} - 1\right) \\
    =&\sum_{(\hiddenvar,\hiddenvar')\in U_{(+,-)}} - \hplus \hminus (p_\hiddenvar - q_{\hiddenvar})(p_{\hiddenvar'} - q_{\hiddenvar'})  \left(\log \frac{1}{h_\hiddenvar h_{\hiddenvar'}} - 1\right)
  \end{align*}

\end{enumerate}
Combining the above four equations, we obtain a lower bound on Part 1:
\begin{align}
  \text{Part 1} &\geq \frac{\left(1 -2 \nr\right)^2}{\hplus \hminus}
                  \left(-2 \beta^2 \sum_{\targetvar_i \in \targetVarDom} (\alpha - \alpha_i) \sum_{\hiddenvar\in \setiplus} h_\hiddenvar \log \frac{1}{h_\hiddenvar} + \beta^2 \sum_{\targetvar_i \in \targetVarDom} \alpha_i (\alpha - \alpha_i) \right. \nonumber \\
                &\quad \left. -2 \alpha^2 \sum_{\targetvar_i \in \targetVarDom} (\beta - \beta_i) \sum_{\hiddenvar\in \setiminus} h_\hiddenvar \log \frac{1}{h_\hiddenvar} + \alpha^2 \sum_{\targetvar_i \in \targetVarDom} \beta_i (\beta - \beta_i) \right. \nonumber \\
                & \quad \left. + 2 \alpha \beta \sum_{\targetvar_i \in \targetVarDom} (\beta - \beta_i) \sum_{\hiddenvar\in \setiplus} h_\hiddenvar \log \frac{1}{h_\hiddenvar} + 2 \alpha \beta \sum_{\targetvar_i \in \targetVarDom} (\alpha - \alpha_i) \sum_{\hiddenvar\in \setiminus} h_\hiddenvar \log \frac{1}{h_\hiddenvar} - 2 \alpha \beta \sum_{\targetvar_i \in \targetVarDom} \alpha_i (\beta - \beta_i) \right) \nonumber \\
                &= \frac{\left(1 -2 \nr\right)^2}{\hplus \hminus}
                  \left( \left( 2\alpha \beta \sum_{\targetvar_i \in \targetVarDom} (\beta - \beta_i) - 2 \beta^2 \sum_{\targetvar_i \in \targetVarDom} (\alpha - \alpha_i)  \right) \sum_{\hiddenvar\in \setiplus} h_\hiddenvar \log \frac{1}{h_\hiddenvar} \right. \nonumber \\
                &\qquad \qquad \qquad \left. + \left( 2 \alpha \beta \sum_{\targetvar_i \in \targetVarDom} (\alpha - \alpha_i) -2 \alpha^2 \sum_{\targetvar_i \in \targetVarDom} (\beta - \beta_i)  \right) \sum_{\hiddenvar\in \setiminus} h_\hiddenvar \log \frac{1}{h_\hiddenvar} \right. \nonumber \\
                & \qquad \qquad \qquad  \left. + \beta^2 \sum_{\targetvar_i \in \targetVarDom} \alpha_i (\alpha - \alpha_i) + \alpha^2 \sum_{\targetvar_i \in \targetVarDom} \beta_i (\beta - \beta_i) - 2 \alpha \beta \sum_{\targetvar_i \in \targetVarDom} \alpha_i (\beta - \beta_i) \right) \nonumber \\
                &= \frac{\left(1 -2 \nr\right)^2}{\hplus \hminus} \cdot \nonumber \\
                & \left( 2 \sum_{\targetvar_i \in \targetVarDom} \beta (\beta \alpha_i - \alpha \beta_i) \sum_{\hiddenvar\in \setiplus} h_\hiddenvar \log \frac{1}{h_\hiddenvar} + 2 \sum_{\targetvar_i \in \targetVarDom} \alpha (\alpha \beta_i - \beta \alpha_i) \sum_{\hiddenvar\in \setiminus} h_\hiddenvar \log \frac{1}{h_\hiddenvar} - \sum_{\targetvar_i \in \targetVarDom}\left(\beta \alpha_i - \alpha \beta_i \right)^2 \right) \nonumber \\
                &= \frac{\left(1 -2 \nr\right)^2}{\hplus \hminus} \cdot \nonumber \\
                & \left( 2  \sum_{\targetvar_i \in \targetVarDom} (\beta \alpha_i - \alpha \beta_i) \left( \beta \alpha_i \sum_{\hiddenvar\in \setiplus} \frac{h_\hiddenvar}{\alpha_i} \log \frac{1}{h_\hiddenvar} - \alpha \beta_i \sum_{\hiddenvar\in \setiminus} \frac{h_\hiddenvar}{\beta_i} \log \frac{1}{h_\hiddenvar} \right) - \sum_{\targetvar_i \in \targetVarDom}\left(\beta \alpha_i - \alpha \beta_i \right)^2 \right)
                  \label{eq:lowerbound_term1_part1_complete}
                  % &= \frac{\left(1 -2 \nr\right)^2}{\hplus \hminus}
                  % \left( 2  \sum_{\targetvar_i \in \targetVarDom} (\beta \alpha_i - \alpha \beta_i) \left( \beta \alpha_i \sum_{y\in \setiplus} \frac{h_y}{\alpha_i} \log \frac{\alpha_i}{h_y} - \alpha \beta_i \sum_{y\in \setiminus} \frac{h_y}{\beta_i} \log \frac{\beta_i}{h_y} \right) - \left(\beta \alpha_i - \alpha \beta_i \right)^2 \right) \\
\end{align}

\paragraph{Part 2.}
Next, we will provide a lower bound on Part 2 of Equation~\eqrefcus{eq:term1_expanded}.

By definition, we have
\begin{align}
  \text{Part 2}
  &= \sum_{\hiddenvar \nsim \hiddenvar'} \left(\hplus p_\hiddenvar p_{\hiddenvar'} + \hminus q_\hiddenvar q_{\hiddenvar'} \right) \log \frac{1}{\hplus p_\hiddenvar p_{\hiddenvar'} + \hminus q_\hiddenvar q_{\hiddenvar'}} \nonumber \\
  & \qquad \qquad \qquad - \left( \hplus \sum_{\hiddenvar \nsim \hiddenvar'} p_\hiddenvar p_{\hiddenvar'} \log \frac{1}{p_\hiddenvar p_{\hiddenvar'}} + \hminus \sum_{\hiddenvar \nsim \hiddenvar'} q_\hiddenvar q_{\hiddenvar'} \log \frac{1}{q_\hiddenvar q_{\hiddenvar'}}\right) \nonumber \\
  &\stackrel{\text{(a)}}{\geq} \frac{\hplus \hminus}{2}\sum_{\hiddenvar \nsim \hiddenvar'}\frac{\left(p_\hiddenvar p_{\hiddenvar'} - q_\hiddenvar q_{\hiddenvar'}\right)^2}{p_\hiddenvar p_{\hiddenvar'} + q_\hiddenvar q_{\hiddenvar'}} \nonumber
\end{align}
Hereby, step (a) is due to the strong concavity\footnote{If $f$ is strongly concave, then for $t \in [0,1]$, it holds that $f(tx + (1-t)y) - tf(x) - (1-t)f(y) \geq \frac{t(1-t)}{2} m(x-y)^2$, where $m = \min \left(\abs{f''(x)}, \abs{f''(y)} \right).$\label{ft:strongconc}} of $f(x)=x\log\frac{1}{x}$.

Similarly with the analysis of Part 1, we consider the four sets of $\{\hiddenvar,\hiddenvar'\}$ pairs:
\begin{enumerate}
\item $\{\hiddenvar,\hiddenvar'\} \in U_{(+,+)}$: both $\hiddenvar$ and $\hiddenvar'$ maps $x$ to $+$.

  In this case, we have
  \begin{align*}
    \sum_{(\hiddenvar,\hiddenvar')\in U_{(+,+)}} \frac{\hplus \hminus}{2} \frac{\left(p_\hiddenvar p_{\hiddenvar'} - q_\hiddenvar q_{\hiddenvar'}\right)^2}{p_\hiddenvar p_{\hiddenvar'} + q_\hiddenvar q_{\hiddenvar'}}
    &\geq \sum_{(\hiddenvar,\hiddenvar')\in U_{(+,+)}} \frac{\hplus \hminus}{2} \left(\sqrt{p_\hiddenvar p_{\hiddenvar'}} - \sqrt{q_\hiddenvar q_{\hiddenvar'}}\right)^2 \\
    &\stackrel{\text{Eq}~\eqrefcus{eq:def_hpq}}{=} \sum_{(\hiddenvar,\hiddenvar')\in U_{(+,+)}} \frac{\hplus \hminus}{2} \left(\sqrt{\frac{h_\hiddenvar \nrbar}{\hplus} \frac{h_{\hiddenvar'} \nrbar}{\hplus}} - \sqrt{\frac{h_\hiddenvar \nr}{\hminus} \frac{h_{\hiddenvar'} \nr}{\hminus}}\right)^2 \\
    &= \sum_{(\hiddenvar,\hiddenvar')\in U_{(+,+)}} \frac{\hplus \hminus}{2} h_\hiddenvar h_{\hiddenvar'} \left(\frac{\nrbar}{\hplus} - \frac{\nr}{\hminus}\right)^2 \\
    &= \sum_{(\hiddenvar,\hiddenvar')\in U_{(+,+)}} \frac{\hplus \hminus}{2} h_\hiddenvar h_{\hiddenvar'} \frac{\beta^2 \left(1 -2 \nr\right)^2}{(\hplus \hminus)^2} \\
    % &= \frac{\left(1 -2 \nr\right)^2}{2 \hplus \hminus} \beta^2 \sum_{(y,y')\in U_{(+,+)}} h_y h_{y'} \\
    &= \frac{\left(1 -2 \nr\right)^2}{2 \hplus \hminus} \beta^2 \sum_{\targetvar_i \in \targetVarDom} \alpha_i (\alpha-\alpha_i)
  \end{align*}

\item $(\hiddenvar,\hiddenvar') \in U_{(-,-)}$.
  Similarly, we get
  \begin{align*}
    \sum_{(\hiddenvar,\hiddenvar')\in U_{(-,-)}} \frac{\hplus \hminus}{2} \frac{\left(p_\hiddenvar p_{\hiddenvar'} - q_\hiddenvar q_{\hiddenvar'}\right)^2}{p_\hiddenvar p_{\hiddenvar'} + q_\hiddenvar q_{\hiddenvar'}}
    &\geq \frac{\left(1 -2 \nr\right)^2}{2 \hplus \hminus} \alpha^2 \sum_{\targetvar_i \in \targetVarDom} \beta_i (\beta-\beta_i)
  \end{align*}

\item $(\hiddenvar,\hiddenvar') \in U_{(+,-)}$: $\hiddenvar$ maps $x$ to $+$, $\hiddenvar'$ maps $x$ to $-$. We have
  \begin{align*}
    \sum_{(\hiddenvar,\hiddenvar')\in U_{(+,-)}} \frac{\hplus \hminus}{2} \frac{\left(p_\hiddenvar p_{\hiddenvar'} - q_\hiddenvar q_{\hiddenvar'}\right)^2}{p_\hiddenvar p_{\hiddenvar'} + q_\hiddenvar q_{\hiddenvar'}}
    &\geq \sum_{(\hiddenvar,\hiddenvar')\in U_{(+,+)}} \frac{\hplus \hminus}{2} \left(\sqrt{\frac{h_\hiddenvar \nrbar}{\hplus} \frac{h_{\hiddenvar'} \nr}{\hplus}} - \sqrt{\frac{h_\hiddenvar \nr}{\hminus} \frac{h_{\hiddenvar'} \nrbar}{\hminus}}\right)^2 \\
    &= \sum_{(\hiddenvar,\hiddenvar')\in U_{(+,+)}} \frac{\hplus \hminus}{2} h_\hiddenvar h_{\hiddenvar'} \nr \nrbar \left(\frac{1}{\hplus} - \frac{1}{\hminus}\right)^2 \\
    &= \frac{(1-2\nr)^2}{2 \hplus \hminus} \nr\nrbar (\alpha-\beta)^2 \sum_{\targetvar_i \in \targetVarDom} \alpha_i (\beta - \beta_i)
  \end{align*}

\item $(\hiddenvar,\hiddenvar') \in U_{(-,+)}$: $\hiddenvar$ maps $x$ to $-$, $\hiddenvar'$ maps $x$ to $+$. By symmetry we have
  \begin{align*}
    \sum_{(\hiddenvar,\hiddenvar')\in U_{(+,-)}} \frac{\hplus \hminus}{2} \frac{\left(p_\hiddenvar p_{\hiddenvar'} - q_\hiddenvar q_{\hiddenvar'}\right)^2}{p_\hiddenvar p_{\hiddenvar'} + q_\hiddenvar q_{\hiddenvar'}}
    &\geq \frac{(1-2\nr)^2}{2 \hplus \hminus} \nr\nrbar (\alpha-\beta)^2 \sum_{\targetvar_i \in \targetVarDom} \beta_i (\alpha - \alpha_i)
  \end{align*}
\end{enumerate}

Combining the above four equations, we obtain a lower bound on Part 2:

\begin{align}
  \text{Part 2}
  &\geq \sum_{(\hiddenvar,\hiddenvar')\in U_{(+,+)}} \frac{\hplus \hminus}{2} \frac{\left(p_\hiddenvar p_{\hiddenvar'} - q_\hiddenvar q_{\hiddenvar'}\right)^2}{p_\hiddenvar p_{\hiddenvar'} + q_\hiddenvar q_{\hiddenvar'}}
    + \sum_{(\hiddenvar,\hiddenvar')\in U_{(-,-)}} \frac{\hplus \hminus}{2} \frac{\left(p_\hiddenvar p_{\hiddenvar'} - q_\hiddenvar q_{\hiddenvar'}\right)^2}{p_\hiddenvar p_{\hiddenvar'} + q_\hiddenvar q_{\hiddenvar'}} \nonumber \\
  &\qquad \qquad + \sum_{(\hiddenvar,\hiddenvar')\in U_{(+,-)}} \frac{\hplus \hminus}{2}\frac{\left(p_\hiddenvar p_{\hiddenvar'} - q_\hiddenvar q_{\hiddenvar'}\right)^2}{p_\hiddenvar p_{\hiddenvar'} + q_\hiddenvar q_{\hiddenvar'}}
    + \sum_{(\hiddenvar,\hiddenvar')\in U_{(-,+)}} \frac{\hplus \hminus}{2}\frac{\left(p_\hiddenvar p_{\hiddenvar'} - q_\hiddenvar q_{\hiddenvar'}\right)^2}{p_\hiddenvar p_{\hiddenvar'} + q_\hiddenvar q_{\hiddenvar'}} \nonumber \\
  &= \frac{(1-2\nr)^2}{2 \hplus \hminus} \left(\beta^2\sum_{\targetvar_i \in \targetVarDom} \alpha_i (\alpha - \alpha_i) + \alpha^2 \sum_{\targetvar_i \in \targetVarDom} \beta_i (\beta - \beta_i) + 2\nr\nrbar (\alpha-\beta)^2 \sum_{\targetvar_i \in \targetVarDom} \alpha_i (\beta - \beta_i) \right) \label{eq:lowerbound_term1_part2_complete}
\end{align}

% \subsection{Term \circled{2}}
% \item[\underline{Term \circled{2}.}] Now we proceed to analyze Term \circled{2}
%   \footnote{\yuxin{Term \circled{2} is not necessary, but including it does not degrade the lower bound, and we can include it to simplify the proof}.}. Notice that
%   \begin{align*}
%     \sum_{\hiddenvar \nsim \hiddenvar'} h_\hiddenvar h_{\hiddenvar'}  - \left( \hplus \sum_{\hiddenvar \nsim \hiddenvar'} p_\hiddenvar p_{\hiddenvar'} + \hminus \sum_{\hiddenvar \nsim \hiddenvar'} q_\hiddenvar q_{\hiddenvar'}\right) =
%     \sum_{\hiddenvar \nsim \hiddenvar'} \left(h_\hiddenvar h_{\hiddenvar'} - g_{\hiddenvar,\hiddenvar'}\right)
%   \end{align*}
%   Following the analysis in Equation~\eqrefcus{eq:lowerbound_term1_part1_complete}, we get
%   \begin{align}
%     \text{Term \circled{2}} = \sum_{\hiddenvar \nsim \hiddenvar'} \left(h_\hiddenvar h_{\hiddenvar'} - g_{\hiddenvar,\hiddenvar'}\right) \geq \frac{\left(1 -2 \nr\right)^2}{\hplus \hminus} \left(\beta \alpha_i - \alpha \beta_i \right)^2 \label{eq:lowerbound_term2_complete}
%   \end{align}

\subsubsection{A Lower Bound on Term 2}
% \item[\underline{Term \circled{3}.}]
Now we move on to analyze Term \circled{2} of Equation~\eqrefcus{eq:term1_expanded}.
By strong concavity of $f(x)=x\log\frac{1}{x} + (1-x)\log\frac{1}{1-x}$, we obtain
\begin{align}
  % \text{Term \circled{3}}
  % \sum_{\targetvar_i \in \targetVarDom} \bientropy{h_i} -  \left( \hplus \sum_{\targetvar_i \in \targetVarDom} \bientropy{p_i} + \hminus \sum_{\targetvar_i \in \targetVarDom} \bientropy{q_i}\right)
  \text{Term \circled{2}}
  &=c \sum_{\targetvar_i \in \targetVarDom} \left( h_i \log \frac{1}{h_i} + (1-h_i) \log \frac{1}{1-h_i} - \hplus \left( p_i \log \frac{1}{p_i} + (1-p_i) \log \frac{1}{1-p_i} \right)\right. \nonumber \\
  & \hspace{6.1cm} - \left.\hminus  \left( q_i \log \frac{1}{q_i} + (1-q_i) \log \frac{1}{1-q_i} \right) \right) \nonumber \\
  % &\stackrel{\text{\lemref{lm:pinsker_varform}}}{\geq} \frac{\hplus \hminus}{2}\sum_{\targetvar_i \in \targetVarDom} \left(p_i - q_i \right)^2 \left(\frac{1}{p_i + q_i}  + \frac{1}{1-p_i + 1-q_i} \right)
  & \stackrel{\text{footnote}~\ref{ft:strongconc}}{\geq}
    \frac{c \cdot \hplus \hminus}{2}\sum_{\targetvar_i \in \targetVarDom} \frac{\left(p_i - q_i \right)^2}{\max\{p_i(1-p_i),  q_i(1-q_i)\}} \nonumber
\end{align}
Plugging in the definition of $p_i$, $q_i$ from {Equation}~\eqrefcus{eq:def_hpq}, we get
\begin{align}
  % \text{Term \circled{3}}
  % \sum_{\targetvar_i \in \targetVarDom} \bientropy{h_i} -  \left( \hplus \sum_{\targetvar_i \in \targetVarDom} \bientropy{p_i} + \hminus \sum_{\targetvar_i \in \targetVarDom} \bientropy{q_i}\right)
  \text{Term \circled{2}}
  &= \frac{c \cdot \hplus \hminus}{2}\sum_{\targetvar_i \in \targetVarDom} \left(\frac{\alpha_i \nrbar + \beta_i \nr}{\hplus} - \frac{\alpha_i \nr + \beta_i \nrbar}{\hminus}\right)^2 \frac{1}{\max\{p_i(1-p_i),  q_i(1-q_i)\}} \nonumber \\
  &= \frac{c}{2\hplus \hminus} \sum_{\targetvar_i \in \targetVarDom}   \frac{ \left((\alpha \nr + \beta \nrbar)(\alpha_i \nrbar + \beta_i \nr) - (\alpha \nrbar + \beta \nr)(\alpha_i \nr + \beta_i \nrbar)\right)^2 }{\max\{p_i(1-p_i),  q_i(1-q_i)\}}\nonumber  \\
  &= \frac{c}{2\hplus \hminus} \sum_{\targetvar_i \in \targetVarDom}   \frac{ \left(\alpha \beta_i \nr^2 + \beta \alpha_i \nrbar^2 - \alpha \beta_i \nrbar^2 - \beta \alpha_i \nr^2\right)^2 }{\max\{p_i(1-p_i),  q_i(1-q_i)\}} \nonumber \\
  &= \frac{c(1-2\nr)^2}{2\hplus \hminus} \sum_{\targetvar_i \in \targetVarDom}  \frac{ \left( \beta \alpha_i - \alpha \beta_i \right)^2 }{\max\{p_i(1-p_i),  q_i(1-q_i)\}}\label{eq:lowerbound_term3_complete}
\end{align}

\subsubsection{A Combined Lower Bound for $\auxgain$}
Now, combining Equation~\eqrefcus{eq:lowerbound_term1_part1_complete}, \eqrefcus{eq:lowerbound_term1_part2_complete}, % \eqrefcus{eq:lowerbound_term2_complete}
and \eqrefcus{eq:lowerbound_term3_complete}, we can get a lower bound for $\auxgain$:
\begin{align}
  \auxgain
  &\geq
    \frac{\left(1 -2 \nr\right)^2}{\hplus \hminus} \cdot \left( 2  \sum_{\targetvar_i \in \targetVarDom} (\beta \alpha_i - \alpha \beta_i) \left( \beta \alpha_i \sum_{\hiddenvar\in \setiplus} \frac{h_\hiddenvar}{\alpha_i} \log \frac{1}{h_\hiddenvar} - \alpha \beta_i \sum_{\hiddenvar\in \setiminus} \frac{h_\hiddenvar}{\beta_i} \log \frac{1}{h_\hiddenvar} \right) \right.\nonumber \\
  &\hspace{9cm}  - \left. \sum_{\targetvar_i \in \targetVarDom}\left(\beta \alpha_i - \alpha \beta_i \right)^2 \right) \nonumber \\
  &\qquad + \frac{(1-2\nr)^2}{2 \hplus \hminus} \left(\beta^2\sum_{\targetvar_i \in \targetVarDom} \alpha_i (\alpha - \alpha_i) + \alpha^2 \sum_{\targetvar_i \in \targetVarDom} \beta_i (\beta - \beta_i) + 2\nr\nrbar (\alpha-\beta)^2 \sum_{\targetvar_i \in \targetVarDom} \alpha_i (\beta - \beta_i) \right) \nonumber \\
  % &\qquad + c_1 \frac{\left(1 -2 \nr\right)^2}{\hplus \hminus} \sum_{\targetvar_i \in \targetVarDom} \left(\beta \alpha_i - \alpha \beta_i \right)^2 + c_2 \frac{(1-2\nr)^2}{2\hplus \hminus} \sum_{\targetvar_i \in \targetVarDom}  \frac{ \left( \beta \alpha_i - \alpha \beta_i \right)^2 }{\max\{p_i(1-p_i),  q_i(1-q_i)\}} \label{eq:auxgain_lowebound_complete}
  &\qquad + \frac{c(1-2\nr)^2}{2\hplus \hminus} \sum_{\targetvar_i \in \targetVarDom}  \frac{ \left( \beta \alpha_i - \alpha \beta_i \right)^2 }{\max\{p_i(1-p_i),  q_i(1-q_i)\}} \label{eq:auxgain_lowebound_complete}
\end{align}

We can rewrite Equation~\eqrefcus{eq:auxgain_lowebound_complete} as
\begin{align}
  &\auxgain \geq \nonumber \\
  & \underbrace{\frac{(1-2\nr)^2 }{4 \hplus \hminus} \left(\sum_{\targetvar_i \in \targetVarDom} \left(\beta \alpha_i - \alpha \beta_i \right)^2 + \beta^2\sum_{\targetvar_i \in \targetVarDom} \alpha_i (\alpha - \alpha_i) + \alpha^2 \sum_{\targetvar_i \in \targetVarDom} \beta_i (\beta - \beta_i) + 2\nr\nrbar (\alpha-\beta)^2 \sum_{\targetvar_i \in \targetVarDom} \alpha_i (\beta - \beta_i) \right)}_{\text{LB1}} \nonumber \\
  & + \frac{\left(1 -2 \nr\right)^2}{4 \hplus \hminus} \left( \beta^2\sum_{\targetvar_i \in \targetVarDom} \alpha_i (\alpha - \alpha_i) + \alpha^2 \sum_{\targetvar_i \in \targetVarDom} \beta_i (\beta - \beta_i) + 2\nr\nrbar (\alpha-\beta)^2 \sum_{\targetvar_i \in \targetVarDom} \alpha_i (\beta - \beta_i) \right. \nonumber \\
  & \qquad \qquad \qquad + \left. 2 c \sum_{\targetvar_i \in \targetVarDom}  \frac{ \left( \beta \alpha_i - \alpha \beta_i \right)^2 }{\max\{p_i(1-p_i),  q_i(1-q_i)\}}  - 5 \sum_{\targetvar_i \in \targetVarDom} \left( \beta \alpha_i - \alpha \beta_i \right)^2\right. \nonumber \\
  & \quad \underbrace{ \qquad \qquad +
    \left. 8 \sum_{\targetvar_i \in \targetVarDom} (\beta \alpha_i - \alpha \beta_i) \left( \beta \alpha_i \sum_{\hiddenvar\in \setiplus} \frac{h_\hiddenvar}{\alpha_i} \log \frac{1}{h_\hiddenvar} - \alpha \beta_i \sum_{\hiddenvar\in \setiminus} \frac{h_\hiddenvar}{\beta_i} \log \frac{1}{h_\hiddenvar} \right) \right) ~~\quad }_{\text{LB2}} \label{eq:auxgain_lowerbound_expanded_lb1lb2}
\end{align}

\subsubsection{Connecting $\auxgain$ with $\ectgain$}\label{sec:proof_auxgain_vs_ec2gain}
Next, we will show that term LB1 is lower-bounded by a factor of $\ectgain$ (i.e., \ectgainggraph), while LB2 cannot be too much less than 0. Concretely, we will show
\begin{itemize}
\item $\text{LB1} \geq \frac{1}{16} \left( 1 - 2\nr \right)^2 \ectgain$, and
\item $\text{LB2} \geq - 2 \numtar \left(1 -2 \nr\right)^2 \eta$, for $\eta \in (0,1)$.
\end{itemize}
At the end of this subsection, we will combine the above results to connect \auxgainggraph with \ectgainggraph (See Equation~\eqrefcus{eq:auxgain_vs_ec2gain_proof_final}).
\paragraph{LB1 VS. $\ectgain$. }
% \begin{enumerate}
% \item[\underline{LB1 vs. $\ectgain$}.]

We expand the \ECT gain $\ectgainggraph$ as
\begin{align}
  \ectgain
  &= \sum_{\targetvar_i \in \targetVarDom} (\alpha_i + \beta_i) (1 - \alpha_i - \beta_i) - \alpha \sum_{\targetvar_i \in \targetVarDom} \alpha_i (\alpha - \alpha_i) - \beta \sum_{\targetvar_i \in \targetVarDom} \beta_i (\beta - \beta_i) \nonumber\\
  &= \beta \sum_{\targetvar_i \in \targetVarDom} \alpha_i (\alpha - \alpha_i) + \alpha \sum_{\targetvar_i \in \targetVarDom} \beta_i (\beta - \beta_i) + 2 \sum_{\targetvar_i \in \targetVarDom} \alpha_i (\beta - \beta_i) \label{eq:gain_ec2_noiseless_complete}
\end{align}
Define
\begin{align*}
  \begin{cases}
    \circled{*} &\triangleq \frac{16 \hplus \hminus}{(1-2\nr)^2} \cdot \text{LB1} \\
    &= 4 \left( \sum_{\targetvar_i \in \targetVarDom} \left(\beta \alpha_i - \alpha \beta_i \right)^2 + \beta^2\sum_{\targetvar_i \in \targetVarDom} \alpha_i (\alpha - \alpha_i) + \alpha^2 \sum_{\targetvar_i \in \targetVarDom} \beta_i (\beta - \beta_i) \right. \\
    &\hspace{7cm}\left. +2\nr\nrbar (\alpha-\beta)^2 \sum_{\targetvar_i \in \targetVarDom} \alpha_i (\beta - \beta_i) \right) \\\\
    \circled{\#} &\triangleq \hplus \hminus \ectgain \\
    &= \left( \nr \nrbar (\alpha - \beta)^2 + \alpha \beta \right) \left(\beta \sum_{\targetvar_i \in \targetVarDom} \alpha_i (\alpha - \alpha_i) + \alpha \sum_{\targetvar_i \in \targetVarDom} \beta_i (\beta - \beta_i) + 2 \sum_{\targetvar_i \in \targetVarDom} \alpha_i (\beta - \beta_i)\right)
  \end{cases}
\end{align*}
% Notice that $\circled{*} = \frac{16 \hplus \hminus}{(1-2\nr)^2} \cdot \text{LB1}$, and $\circled{\#} = \hplus \hminus \ectgain$.
To bound LB1 against $\frac{1}{16} \left( 1 - 2\nr \right)^2 \ectgain$, it suffices to show $\circled{*} \geq \circled{\#}$. % (notice that $\hplus \hminus = (\alpha \nrbar + \beta \nr)(\alpha \nr + \beta \nrbar) = \nr \nrbar (\alpha - \beta)^2 + \alpha \beta$).

To prove the above inequality, we consider the following two cases:
\begin{enumerate}[itemsep=0mm,leftmargin=5mm]
\item $\nr \nrbar (\alpha - \beta)^2 \leq \alpha \beta$.
  In this case, we have $\nr \nrbar (\alpha - \beta)^2 + \alpha \beta \leq 2 \alpha \beta$. Then,
  \begin{align*}
    \frac{\circled{*} - \circled{\#}}{2}
    &\geq \frac{\circled{*}}{2} - \alpha\beta \left(\beta \sum_{\targetvar_i \in \targetVarDom} \alpha_i (\alpha - \alpha_i) + \alpha \sum_{\targetvar_i \in \targetVarDom} \beta_i (\beta - \beta_i) + 2 \sum_{\targetvar_i \in \targetVarDom} \alpha_i (\beta - \beta_i)\right) \\
    &\geq \beta^2(1+\beta) \sum_{\targetvar_i \in \targetVarDom} \alpha_i (\alpha - \alpha_i) + \alpha^2 (1 + \alpha) \sum_{\targetvar_i \in \targetVarDom} \beta_i (\beta - \beta_i) - 2\alpha\beta \sum_{\targetvar_i \in \targetVarDom} \alpha_i (\beta - \beta_i) \\
    &\hspace{9cm}+ \sum_{\targetvar_i \in \targetVarDom} (\beta \alpha_i - \alpha \beta_i)^2 \\
    &\geq \beta^2 \sum_{\targetvar_i \in \targetVarDom} \alpha_i (\alpha - \alpha_i) + \alpha^2 \sum_{\targetvar_i \in \targetVarDom} \beta_i (\beta - \beta_i) - 2\alpha\beta \sum_{\targetvar_i \in \targetVarDom} \alpha_i (\beta - \beta_i) + \sum_{\targetvar_i \in \targetVarDom} (\beta \alpha_i - \alpha \beta_i)^2 \\
    & = 0
  \end{align*}

\item $\nr \nrbar (\alpha - \beta)^2 > \alpha \beta$. W.l.o.g., we assume $\beta \leq \alpha \leq 1$. By $\alpha+\beta=1$ we get $2\alpha \geq 1$.

  Observe the fact that
  \begin{align*}
    \sum_{\targetvar_i \in \targetVarDom} (\beta \alpha_i - \alpha \beta_i)^2 =  -  \beta^2 \sum_{\targetvar_i \in \targetVarDom} \alpha_i (\alpha - \alpha_i) - \alpha^2 \sum_{\targetvar_i \in \targetVarDom} \beta_i (\beta - \beta_i) + 2 \alpha \beta \sum_{\targetvar_i \in \targetVarDom} \alpha_i (\beta - \beta_i) \geq 0
  \end{align*}
  Rearranging the terms in the above inequality, we get
  \begin{align}
    \beta \sum_{\targetvar_i \in \targetVarDom} \alpha_i (\alpha - \alpha_i) \leq 2\alpha \sum_{\targetvar_i \in \targetVarDom} \alpha_i (\beta - \beta_i) \leq 2 (\alpha\beta - \sum_{\targetvar_i \in \targetVarDom} \alpha_i \beta_i) = 2 \sum_{\targetvar_i \in \targetVarDom} \alpha_i (\beta -\beta_i) \label{eq:tmp_buffer_ineq2}
  \end{align}
  Hence,
  \begin{align*}
    \circled{\#}
    &\leq 2 \nr \nrbar (\alpha - \beta)^2 \left(\beta \sum_{\targetvar_i \in \targetVarDom} \alpha_i (\alpha - \alpha_i) + \alpha \sum_{\targetvar_i \in \targetVarDom} \beta_i (\beta - \beta_i) + 2 \sum_{\targetvar_i \in \targetVarDom} \alpha_i (\beta - \beta_i)\right) \\
    &\stackrel{\text{\eqrefcus{eq:tmp_buffer_ineq2}}}{\leq} 2 \nr \nrbar (\alpha - \beta)^2 \left(\alpha \sum_{\targetvar_i \in \targetVarDom} \beta_i (\beta - \beta_i) + 4 \sum_{\targetvar_i \in \targetVarDom} \alpha_i (\beta - \beta_i)\right) \\
    & \stackrel{2\alpha \geq 1}{\leq} 2 \nr \nrbar (\alpha - \beta)^2 \left(2 \alpha^2 \sum_{\targetvar_i \in \targetVarDom} \beta_i (\beta - \beta_i) + 4 \sum_{\targetvar_i \in \targetVarDom} \alpha_i (\beta - \beta_i)\right) \\
    & \stackrel{\nr \nrbar (\alpha - \beta)^2\leq 1}{\leq} 4 \left( 2 \nr \nrbar (\alpha - \beta)^2 \sum_{\targetvar_i \in \targetVarDom} \alpha_i (\beta - \beta_i) + \alpha^2 \sum_{\targetvar_i \in \targetVarDom} \beta_i (\beta - \beta_i)\right) \\
    & \leq \circled{*}
  \end{align*}
  Therefore, we get
  \begin{align}
    \text{LB1} \geq \frac{1}{16} \left( 1 - 2\nr \right)^2 \ectgain\label{eq:auxgain_bound_lb1}
  \end{align}
\end{enumerate}

\paragraph{A lower bound on \text{LB2}.}
% \item[\underline{\text{LB2} vs. $0$.}]

In the following, we will analyze LB2.
\begin{align*}
  \text{LB2}
  &\geq \frac{\left(1 -2 \nr\right)^2}{4 \hplus \hminus} \left( \beta^2\sum_{\targetvar_i \in \targetVarDom} \alpha_i (\alpha - \alpha_i) + \alpha^2 \sum_{\targetvar_i \in \targetVarDom} \beta_i (\beta - \beta_i) - 5 \sum_{\targetvar_i \in \targetVarDom} \left( \beta \alpha_i - \alpha \beta_i \right)^2 \right. \\
  & \qquad  + 2 c_2 \sum_{\targetvar_i \in \targetVarDom}  \frac{ \left( \beta \alpha_i - \alpha \beta_i \right)^2 }{\max\{p_i(1-p_i),  q_i(1-q_i)\}} \\
  & \qquad  +
    8 \sum_{\targetvar_i \in \targetVarDom} (\beta \alpha_i - \alpha \beta_i)  \left( \beta \alpha_i \sum_{\hiddenvar\in \setiplus} \frac{h_\hiddenvar}{\alpha_i} \log \frac{\alpha_i}{h_\hiddenvar}  + \beta \alpha_i \log \frac{1}{\alpha_i} - \alpha \beta_i \sum_{\hiddenvar\in \setiminus} \frac{h_\hiddenvar}{\beta_i} \log \frac{\beta_i}{h_\hiddenvar} - \alpha \beta_i \log \frac{1}{\beta_i}\right)
\end{align*}
For brevity, define $\mu_i \triangleq \alpha_i / \alpha$, and $\nu_i \triangleq \beta_i / \beta$. We can simplify
the above equation as
\begin{align}
  \text{LB2}
  &\geq \frac{\alpha^2\beta^2 \left(1 -2 \nr\right)^2}{4 \hplus \hminus} \sum_{\targetvar_i \in \targetVarDom} \left(  \mu_i (1 - \mu_i) + \nu_i (1  - \nu_i) - 5 (\mu_i - \nu_i)^2 +  \frac{ 2 c_2 \left( \mu_i - \nu_i \right)^2 }{\max\{p_i(1-p_i),  q_i(1-q_i)\}} \right. \nonumber \\
  &+ \left.
    8 (\mu_i - \nu_i) \left( \mu_i \sum_{\hiddenvar\in \setiplus} \frac{h_\hiddenvar}{\alpha_i} \log \frac{\alpha_i}{h_\hiddenvar} + \mu_i \log \frac{1}{\mu_i \alpha} - \nu_i \sum_{\hiddenvar\in \setiminus} \frac{h_\hiddenvar}{\beta_i} \log \frac{\beta_i}{h_\hiddenvar} - \nu_i \log \frac{1}{\nu_i \beta}\right)  \right) \label{eq:auxgain_lb2_simplified}
\end{align}

Denote the summand on the RHS of the above equation as $\text{LB2}_i$. If for any $\targetvar_i \in \targetVarDom$ we can lower bound $\text{LB2}_i$, we can then bound the whole sum. Fix $i$. W.l.o.g., we assume $\mu_i \geq \nu_i$. Then
\begin{align*}
  \text{LB2}_i &\triangleq \mu_i (1 - \mu_i) + \nu_i (1  - \nu_i) - 5 (\mu_i - \nu_i)^2 +  \frac{ 2 c \left( \mu_i - \nu_i \right)^2 }{\max\{p_i(1-p_i),  q_i(1-q_i)\}}  \\
               & \qquad  +
                 8 (\mu_i - \nu_i) \left( \cancelto{\geq 0}{\mu_i \sum_{\hiddenvar\in \setiplus} \frac{h_\hiddenvar}{\alpha_i} \log \frac{\alpha_i}{h_\hiddenvar} + \mu_i \log \frac{1}{\mu_i \alpha}} - \nu_i \sum_{\hiddenvar\in \setiminus} \frac{h_\hiddenvar}{\beta_i} \log \frac{\beta_i}{h_\hiddenvar} - \nu_i \log \frac{1}{\nu_i \beta}\right) \\
               &\geq \mu_i (1 - \mu_i) + \nu_i (1  - \nu_i) - 5 (\mu_i - \nu_i)^2 +  \frac{ 2 c \left( \mu_i - \nu_i \right)^2 }{\max\{p_i(1-p_i),  q_i(1-q_i)\}}  \\
               & \qquad  - 8 (\mu_i - \nu_i) \left(\nu_i \cancelto{\leq \log n}{\sum_{\hiddenvar\in \setiminus} \frac{h_\hiddenvar}{\beta_i} \log \frac{\beta_i}{h_\hiddenvar}} + \nu_i \log \frac{1}{\nu_i} + \nu_i \log \frac{1}{\beta}\right) \\
               &\geq \mu_i (1 - \mu_i) + \nu_i (1  - \nu_i)  - 5 (\mu_i - \nu_i)^2  - 8 (\mu_i - \nu_i) \left(\nu_i \log \frac{n}{\beta} +\nu_i \log \frac{1}{\nu_i}\right) \\
               & \qquad  +  \frac{ 2 c \left( \mu_i - \nu_i \right)^2 }{\max\{p_i(1-p_i),  q_i(1-q_i)\}} \\
               % &\geq \mu_i (1 - \mu_i) + \nu_i (1  - \nu_i) + (4 c_1 - 5) (\mu_i - \nu_i)^2 +  \frac{ 2 c_2 \left( \mu_i - \nu_i \right)^2 }{\max\{p_i(1-p_i),  q_i(1-q_i)\}}  \\
               % & \qquad  -
               % 8 (\mu_i - \nu_i) \nu_i \log n + 8 (\mu_i - \nu_i)\left( \mu_i \log \frac{1}{\mu_i} - \nu_i \log \frac{1}{\nu_i}\right) + 8 (\mu_i - \nu_i)\left( \mu_i \log \frac{1}{\alpha} - \nu_i \log \frac{1}{\beta}\right)
\end{align*}

In order to put a lower bound on the above terms, we first need to lower bound the term involving $\frac{ \left( \mu_i - \nu_i \right)^2 }{\max\{p_i(1-p_i),  q_i(1-q_i)\}}$. Notice that $p_i = \frac{\alpha_i + \beta_i \nr/\nrbar}{\alpha + \beta \nr/\nrbar}$, and $p_i = \frac{\alpha_i \nr/\nrbar + \beta_i}{\alpha \nr/\nrbar+ \beta}$. Therefore, $\min \left\{\mu_i,\nu_i\right\} \leq p_i, q_i \leq \max \left\{\mu_i,\nu_i \right\}$.

We check three different cases:
\begin{itemize}[itemsep=0mm,leftmargin=.5cm]
\item $\mu_i \geq \nu_i \geq 1/2$, or $\nu_i \leq \mu_i \leq 1/2$.

  In this case, $ \max\{p_i(1-p_i),  q_i(1-q_i)\} \leq \max\{\mu_i(1-\mu_i), \nu_i(1-\nu_i)\}$. Therefore,
  \begin{align*}
    \text{LB2}_i &\geq  - 5 (\mu_i - \nu_i)^2  - 8 (\mu_i - \nu_i) \left(\nu_i \log \frac{n}{\beta} + \nu_i \log \frac{1}{\nu_i}\right) \\
                 & \qquad  +  \frac{ 2 c \left( \mu_i - \nu_i \right)^2 }{\max\{\mu_i(1-\mu_i), \nu_i(1-\nu_i)\}} + \mu_i (1 - \mu_i) + \nu_i (1  - \nu_i) \\
                 &\geq - 5 (\mu_i - \nu_i)^2  - 8 (\mu_i - \nu_i) \left(\nu_i \log \frac{n}{\beta} + \nu_i \log \frac{1}{\nu_i}\right) \\
                 & \qquad  +  \frac{ 2 c \left( \mu_i - \nu_i \right)^2 }{\max\{\mu_i(1-\mu_i), \nu_i(1-\nu_i)\}} + \max\{\mu_i(1-\mu_i), \nu_i(1-\nu_i)\} \\
                 &\geq - 5 (\mu_i - \nu_i)^2  - 8 (\mu_i - \nu_i) \left(\nu_i \log \frac{n}{\beta} + \nu_i \log \frac{1}{\nu_i}\right) + 2\sqrt{2c} (\mu_i - \nu_i) \\
                 &\stackrel{\mu_i - \nu_i \leq 1/2}{\geq} (\mu_i - \nu_i) \left( 2\sqrt{2c} - 5/2  - 8 \left(\nu_i \log \frac{n}{\beta} + \nu_i \log \frac{1}{\nu_i}\right)  \right) \\
                 &\stackrel{\text{(a)}}{\geq} (\mu_i - \nu_i) \left( 2\sqrt{2c} - 5/2 - 8 \log \frac{n}{\beta}  \right)
  \end{align*}
  Here, step (a) is due to the fact that $f(x) = x\log \frac{n}{\beta x}$ is monotone increasing for $n \geq 3$. When $n < 3$, we have $\mu_i = 1$ and $\nu_i = 0$ (otherwise, there is no uncertainty left in $\Targetvar$) and hence the problem becomes trivial.
\item $1/n \leq \nu_i \leq 1/2 \leq \mu_i$.

  In this case, we cannot replace $p_i, q_i$ with $\mu_i$ or $\nu_i$. However, notice that $\max\{\mu_i(1-\mu_i), \nu_i(1-\nu_i)\} \leq 1/4$, we have
  \begin{align}
    \text{LB2}_i &\geq \mu_i (1 - \mu_i) + \nu_i (1  - \nu_i) - 5 (\mu_i - \nu_i)^2  - 8 (\mu_i - \nu_i) \left(\nu_i \log \frac{n}{\beta} + \nu_i \log \frac{1}{\nu_i}\right) + 8 c \left( \mu_i - \nu_i \right)^2 \nonumber\\
                 &= \mu_i (1 - \mu_i) + \nu_i (1  - \nu_i) + (\mu_i - \nu_i)^2 + (8 c - 6) (\mu_i - \nu_i)^2 \nonumber \\ &\qquad - 8 (\mu_i - \nu_i) \left(\nu_i \log \frac{n}{\beta} + \nu_i \log \frac{1}{\nu_i}\right) \nonumber\\
                 % &\geq \mu_i (1 - \mu_i) + \nu_i (1  - \nu_i) + (\mu_i - \nu_i)^2 + (8 c_2 + 4 c_1 - 6) (\mu_i - \nu_i)^2 - 8 (\mu_i - \nu_i) \left(\nu_i \log \frac{n}{\beta} + \nu_i \log \frac{1}{\nu_i}\right) \\
                 &= \mu_i (1 - \nu_i) + \nu_i (1  - \mu_i) + (8 c - 6) (\mu_i - \nu_i)^2 - 8 (\mu_i - \nu_i) \left(\nu_i \log \frac{n}{\beta} + \nu_i \log \frac{1}{\nu_i}\right) \nonumber\\
                 &\geq \mu_i (1 - \nu_i) + (8 c - 6) (\mu_i - \nu_i)^2 - 8 (\mu_i - \nu_i) \left(\nu_i \log \frac{n}{\beta} + \nu_i \log \frac{1}{\nu_i}\right) \label{eq:lb2_case2}\\
                 % &= \mu_i (1 - \nu_i) + (8 c_2 + 4 c_1 - 6) (\mu_i - \nu_i)^2 - 8 (\mu_i - \nu_i) \left(\nu_i \log \frac{n}{\beta} + \nu_i \log n - \nu_i \log n + \nu_i \log \frac{1}{\nu_i}\right) \\
                 &\stackrel{\nu_i \geq 1/n}{\geq} \mu_i (1 - \nu_i) + (8 c - 6) (\mu_i - \nu_i)^2 - 8 (\mu_i - \nu_i) \nu_i \log \frac{n^2}{\beta} \nonumber
  \end{align}
  To further simplify notation, we denote $\gamma_1 \triangleq 8 c - 6$, and $\gamma_2 \triangleq 8 \log \frac{n^2}{\beta}$. Then the above equation can be rewritten as
  \begin{align*}
    \text{LB2}_i &\geq \mu_i (1 - \nu_i) + \gamma_1 (\mu_i - \nu_i)^2 - \gamma_2 (\mu_i - \nu_i) \nu_i
  \end{align*}
  If $\mu_i - \nu_i \leq \frac{1}{2\gamma_2}$, then
  \begin{align*}
    \text{LB2}_i &\geq \mu_i (1 - \nu_i) + \gamma_1 (\mu_i - \nu_i)^2 - \frac{1}{2 \gamma_2} \gamma_2 \nu_i = \mu_i (1 - \nu_i) - \frac{\nu_i}{2} \geq 0
  \end{align*}
  Otherwise, if $\mu_i - \nu_i > \frac{1}{2\gamma_2}$, we have
  \begin{align*}
    \text{LB2}_i &\geq \mu_i (1 - \nu_i) + (\mu_i - \nu_i) \left( \gamma_1 (\mu_i - \nu_i) - \gamma_2 \nu_i \right)\\
                 &> \mu_i (1 - \nu_i) + (\mu_i - \nu_i) \left( \gamma_1 \frac{1}{2\gamma_2} - \gamma_2 \nu_i \right) \\
                 &> \frac{\mu_i - \nu_i}{2} \left( \frac{\gamma_1}{\gamma_2} - \gamma_2 \right)
  \end{align*}
  % Its easy to verify that if $\mu_i \geq \nu_i \left( 1 + \frac{8}{8 c_2 + 4 c_1 - 6} \log \frac{n^2}{\beta}  \right)$, then $(8 c_2 + 4 c_1 - 6) (\mu_i - \nu_i)^2 - 8 (\mu_i - \nu_i) \nu_i \log \frac{n^2}{\beta} \geq 0$ and hence $\text{LB2}_i \geq 0$. Otherwise, when $\mu_i \geq \nu_i \left(1 + \frac{8}{8 c_2 + 4 c_1 - 6} \log \frac{n^2}{\beta} \right)$,
\item $ \nu_i \leq 1/n < 1/2 \leq \mu_i$. In this case, we have
  \begin{align*}
    \text{LB2}_i &\stackrel{\text{Eq}~\eqrefcus{eq:lb2_case2}}{\geq} \mu_i (1 - \nu_i) + \gamma_1 (\mu_i - \nu_i)^2 - 8 (\mu_i - \nu_i) \left(\nu_i \log \frac{n}{\beta} + \nu_i \log \frac{1}{\nu_i}\right) \\
    % &= \mu_i (1 - \nu_i) + \gamma_1 (\mu_i - \nu_i)^2 - 8 (\mu_i - \nu_i) \left(\nu_i \log \frac{n^2}{\beta} + \nu_i \log \frac{1}{\nu_i} - \nu_i \log n \right) \\
                 &\geq \mu_i (1 - \nu_i) + \gamma_1 (\mu_i - \nu_i)^2 - 8 (\mu_i - \nu_i) \left( \frac{1}{n} \log \frac{n}{\beta} + \frac {\log n}{n}\right) \\
                 &= \mu_i (1 - \nu_i) + \gamma_1 (\mu_i - \nu_i)^2 - \frac{\gamma_2}{n} (\mu_i - \nu_i) \\
                 &> \mu_i (1 - \nu_i) + (\mu_i - \nu_i) \left(\gamma_1 \frac{n-2}{2n} - \frac{\gamma_2}{n} \right) \\
                 % &> \frac{\mu_i - \nu_i}{2} \left(\gamma_1 \frac{n-2}{n} - \frac{2 \gamma_2}{n} \right) \\
                 % &> \frac{\mu_i - \nu_i}{2} \left(\gamma_1 - \frac2n (\gamma_1 + \gamma_2) \right) \\
                 % &> \frac{\mu_i - \nu_i}{n} \left(\gamma_1 \frac{n - 2}{2} - \gamma_2 \right)\\
                 &\stackrel{\text{(a)}}{\geq} \frac{\mu_i - \nu_i}{3} \left(\frac{\gamma_1}{2} - \gamma_2 \right) \\
                 &\stackrel{}{\geq} \frac{\mu_i - \nu_i}{3} \left(\frac{\gamma_1}{\gamma_2} - \gamma_2 \right)
  \end{align*}
  Step (a) is due to the fact that $1/n < 1/2$ and therefore $n \geq 3$.
\end{itemize}
Putting the above cases together, we obtain the following equations:
\[
  \text{LB2}_i \geq
  \begin{cases}
    (\mu_i - \nu_i) \left( 2\sqrt{2c} - 5/2 - 8 \log \frac{n}{\beta}  \right) &\text{if $\mu_i \geq \nu_i \geq 1/2$, or $\nu_i \leq \mu_i \leq 1/2$} \\
    0 &\text{if $1/n \leq \nu_i \leq 1/2 \leq \mu_i$, and $\mu_i - \nu_i \leq \frac{1}{2\gamma_2}$}\\
    \frac{\mu_i - \nu_i}{2} \left( \frac{\gamma_1}{\gamma_2} - \gamma_2 \right) &\text{if $1/n \leq \nu_i \leq 1/2 \leq \mu_i$, and $\mu_i - \nu_i > \frac{1}{2\gamma_2}$} \\
    \frac{\mu_i - \nu_i}{3} \left( \frac{\gamma_1}{\gamma_2} - \gamma_2 \right) &\text{if $\nu_i \leq 1/n <1/2 \leq \mu_i$}

  \end{cases}
\]
Fix $\eta\geq 0$. Let $c = 8 \left(\log \frac{2n^2}{\eta}\right)^2$, we have $\gamma_1 > \left( 8 \log \frac{n^2}{\eta}\right)^2$, and $\gamma_2 = 8 \log \frac{n^2}{\beta}$, so $$\frac{\gamma_1}{\gamma_2} - \gamma_2 = \frac{(\sqrt{\gamma_1}-\gamma_2)(\sqrt{\gamma_1}+\gamma_2)}{\gamma_2} > 8 \frac{\sqrt{\gamma_1}+\gamma_2}{\gamma_2} \log \frac{\beta}{\eta} $$
and thus we get
\[
  \text{LB2}_i \geq
  \begin{cases}
    8(\mu_i - \nu_i) \log \frac{\beta}{\eta} &\text{if $\mu_i \geq \nu_i \geq 1/2$, or $\nu_i \leq \mu_i \leq 1/2$} \\
    0 &\text{if $1/n \leq \nu_i \leq 1/2 \leq \mu_i$, and $\mu_i - \nu_i \leq \frac{1}{2\gamma_2}$}\\
    \frac{4 (\mu_i - \nu_i) (\sqrt{\gamma_1}+\gamma_2)}{\gamma_2} \log \frac{\beta}{\eta}  &\text{if $\nu_i \leq 1/2 \leq \mu_i$, and $\mu_i - \nu_i > \frac{1}{2\gamma_2}$}
  \end{cases}
\]
That is, if $\beta \geq \eta$, we have $ \text{LB2}_i \geq 0$ for all $i \in \{1,\dots,\numtar\}$.

On the other hand, if $\beta < \eta$, we get $\frac{4(\sqrt{\gamma_1}+\gamma_2)}{\gamma_2} = \frac{4(\log \frac{n^2}{\eta} +\log \frac{n^2}{\beta})}{\log \frac{n^2}{\beta}} \leq 8$, and therefore $ \text{LB2}_i \geq 8(\mu_i - \nu_i) \log \frac{\beta}{\eta}$.

% Combining these results with Equation~\eqrefcus{eq:auxgain_lb2_simplified},
Summing over all $i \in \{1,\dots,\numtar\}$, we get that for $\beta < \eta$, it holds $\text{LB2} \geq \sum_{\targetvar_i \in \targetVarDom} \abs{\mu_i - \nu_i} \cdot \frac{2 \alpha^2\beta^2 \left(1 -2 \nr\right)^2}{\hplus \hminus} \log \frac{\beta}{\eta}$. We hence get
% \begin{align*}
%   \text{LB2} \geq
%   \begin{cases}
%     \sum_{\targetvar_i \in \targetVarDom} \abs{\mu_i - \nu_i} \cdot \frac{2 \alpha^2\beta^2 \left(1 -2 \nr\right)^2}{\hplus \hminus} \log \frac{\beta}{\eta}  &\text{if $\beta < \eta$} \\
%     0 &\text{if $\beta \geq \eta$}
%   \end{cases}
% \end{align*}
\begin{align*}
  \text{LB2} \geq
  \begin{cases}
    - 2 \numtar \left(1 -2 \nr\right)^2 \alpha \beta\log \frac{\eta}{\alpha\beta}   &\text{if $\alpha \beta < \eta$} \\
    0 &\text{if $\alpha \beta \geq \eta$}
  \end{cases}
\end{align*}
Further relaxing the above condition by % $\numrc \geq \numtar$, and
$\alpha \beta\log \frac{\eta}{\alpha\beta} \leq \eta - \alpha \beta \leq \eta$, we obtain:
\begin{align}
  \text{LB2} \geq - 2 \numtar \left(1 -2 \nr\right)^2 \eta \label{eq:auxgain_bound_lb2}
\end{align}
Combining Equation~\eqrefcus{eq:auxgain_lowerbound_expanded_lb1lb2}, \eqrefcus{eq:auxgain_bound_lb1}, and \eqrefcus{eq:auxgain_bound_lb2}, we get
% for $c_1 = 3/2$, and $c_2 = 8 \left(\log \frac{n^2}{\eta}\right)^2$.
\begin{align}
  \auxgain \geq \frac{1}{16} \left( 1 - 2\nr \right)^2 \ectgain - 2 \numtar \left(1 -2 \nr\right)^2 \eta. \label{eq:auxgain_vs_ec2gain_proof_final}
\end{align}
Hence, we have related \auxgainggraph to \ectgainggraph, as stated in \lemref{lm:hi_eced_hi_faux}.

\subsubsection{Bounding $\auxgain$ against  $\ecedgain$}\label{sec:proof_auxgain_vs_ecedgain}

To finish the proof for \lemref{lm:hi_eced_hi_faux}, it remains to bound $\auxgain$ against $\ecedgain$. In this subsection, we complete the proof of \lemref{lm:hi_eced_hi_faux}, by showing that $\auxgain(\Test_e \given \psi) + 2 \numtar \left(1 -2 \nr\right)^2 \eta \geq \ecedgainat{\psi}{\Test_e}/64 $.

% First, consider performing a \emph{dummy} test, whose noise-free outcome given all hypotheses is $1$. In such cases, \ECED still has a positive gain $\Delta_{\textsc{offset}}$, which is
% \begin{align*}
%   \Delta_{\textsc{offset}} \triangleq \sum_{\targetvar_i \in \targetVarDom} (\alpha_i + \beta_i) (1 - \alpha_i - \beta_i) \nr \left(1 - \rho^2\right).
% \end{align*}
% where $\rho = \frac{\nr}{1-\nr}$. In the following, we will bound $\auxgain$ against $\ecedgain - \Delta_{\textsc{offset}}$.
% Let $\gamma = \nr\left(1 - \rho^2\right)$. We can expand $\ecedgain - \Delta_{\textsc{offset}}$ as

Recall that $\nr$ is the noise rate of test $e$. Let $\rho = \frac{\nr}{1-\nr}$ be the discount factor for inconsistent root-causes. By the definition of $\ecedgain$ in Equation~\eqrefcus{eq:ecedgain}, we first expand the expected offset value of performing test $e$: % the  $\ecedgainat{\psi}{\Test_e} := \expctover{\test_e}{\ecedgainPerOutcome(\test_e \given \psi) - \ecedgainPeroutcomeoffset(\test_e \given \psi)}$
% We first expand the offset term in $\ecedgain$ (which is introduced to normalize the gain of ``non-informative'' tests):
\begin{align*}
  \expctover{\test_e}{\ecedgainPeroutcomeoffset(x_e)} = \sum_{\targetvar_i \in \targetVarDom} (\alpha_i + \beta_i) (1 - \alpha_i - \beta_i) \nr \left(1 - \rho^2\right).
\end{align*}

Denote $\gamma = \nr\left(1 - \rho^2\right)$. % By the definition of Equation~\eqrefcus{eq:ecedgain}, we expand $\ecedgainat{\psi}{\Test_e} := \ecedgainbasic(\Test_e \given \psi) - \ecedgainoffset(\Test_e \given \psi)$ as
Then, we can expand $\ecedgain$ as
\begin{align}
  & \ecedgain\nonumber \\
  &= \sum_{\targetvar_i \in \targetVarDom} \left( \overbrace{(\alpha_i + \beta_i) (1 - \alpha_i - \beta_i) \left(1 - \gamma\right)}^{(\text{initial total edge weight}) - (\text{offset value})}\right. \nonumber \\
  & \qquad \qquad \left. - \overbrace{\left( \hplus (\alpha_i + \rho \beta_i) (\alpha + \rho \beta - \alpha_i - \rho \beta_i) +  \hminus (\beta_i + \rho \alpha_i) (\beta  + \rho \alpha - \beta_i - \rho \alpha_i)   \right)}^{\text{expected remaining weight after discounting}} \right) \nonumber \\
  &= \hplus \sum_{\targetvar_i \in \targetVarDom} \left( - \gamma  \alpha_i (\alpha - \alpha_i) + \alpha_i (\beta - \beta_i) (1 - \gamma - \rho) + \beta_i (\alpha - \alpha_i) (1 - \gamma - \rho) + \beta_i (\beta - \beta_i) (1-\gamma - \rho^2) \right) \nonumber  \\
  &  +  \hminus \sum_{\targetvar_i \in \targetVarDom} \left(-\gamma \beta_i (\beta - \beta_i) + \beta_i (\alpha - \alpha_i) (1 - \gamma - \rho) + \alpha_i (\beta - \beta_i) (1 - \gamma - \rho) +  \alpha_i (\alpha - \alpha_i) (1-\gamma - \rho^2) \right)  \nonumber  \\
  &=  \sum_{\targetvar_i \in \targetVarDom} \left(2 (1-\gamma-\rho) \alpha_i (\beta - \beta_i) + \left(\hplus (1-\gamma - \rho^2)  - \hminus \gamma\right)\beta_i (\beta - \beta_i) \right. \nonumber \\
  &\hspace{8cm}+  \left.\left( \hminus (1-\gamma - \rho^2)  - \hplus \gamma \right) \alpha_i (\alpha - \alpha_i)\right) \nonumber % \label{eq:gain_ec2_noiseless}
  % &=  (1-\gamma - \rho^2)  \left(\hplus \sum_{\targetvar_i \in \targetVarDom} \beta_i (\beta - \beta_i) + \hminus \sum_{\targetvar_i \in \targetVarDom} \alpha_i (\alpha - \alpha_i)\right) + 2 (1-\gamma-\rho) \sum_{\targetvar_i \in \targetVarDom} \alpha_i (\beta - \beta_i) \nonumber % \label{eq:gain_ec2_noiseless}
\end{align}
Since $\gamma = \frac{\nr (1-2\nr)}{(1-\nr)^2}$, $1-\gamma - \rho^2 = % 1 - \nr\left(1 - \rho^2\right) - \rho^2 = (1-\nr) (1-\rho^2)
\frac{1-2\nr}{1-\nr}$, and $1-\gamma - \rho = \left(\frac{1-2\nr}{1-\nr}\right)^2$, we have,
\begin{align*}
  \hplus (1-\gamma - \rho^2)  - \hminus \gamma &= (\alpha (1-\nr) + \beta \nr) \frac{1-2\nr}{1-\nr} - (\alpha \nr + \beta (1 - \nr)) \frac{\nr (1-2\nr)}{(1-\nr)^2}  = \left( \frac{1-2\nr}{1-\nr}\right)^2 \alpha
\end{align*}
Therefore
\begin{align}
  % \ecedgain - \Delta_{\textsc{offset}}
  \ecedgain
  % &=  \frac{1-2\nr}{1-\nr}  \left(\hplus \sum_{\targetvar_i \in \targetVarDom} \beta_i (\beta - \beta_i) + \hminus \sum_{\targetvar_i \in \targetVarDom} \alpha_i (\alpha - \alpha_i)\right) + 2 \left(\frac{1-2\nr}{1-\nr}\right)^2 \sum_{\targetvar_i \in \targetVarDom} \alpha_i (\beta - \beta_i) \label{eq:gain_ec2_noisy}
  &=  \left( \frac{1-2\nr}{1-\nr}\right)^2 \left(\alpha \sum_{\targetvar_i \in \targetVarDom} \beta_i (\beta - \beta_i) + \beta \sum_{\targetvar_i \in \targetVarDom} \alpha_i (\alpha - \alpha_i) + 2 \sum_{\targetvar_i \in \targetVarDom} \alpha_i (\beta - \beta_i)\right) \nonumber \\
    &=  \left( \frac{1-2\nr}{1-\nr}\right)^2 \ectgain \label{eq:gain_ec2_noisy}
\end{align}
Combining Equation \eqrefcus{eq:gain_ec2_noisy} with Equation~\eqrefcus{eq:auxgain_vs_ec2gain_proof_final} we obtain
\begin{align}
  \auxgain + 2 \numtar \left(1 -2 \nr\right)^2 \eta % &\geq \frac{\ecedgain}{64} \nonumber \\
                                                      &\geq \frac{(1-\nr)^2}{16} \ecedgain\nonumber \\
                                                      &= \frac{1}{16} \left( 1-2\nr \right)^2 \ectgain \nonumber % \label{eq:auxgain_ecedgain_ectgain_derivation}
\end{align}

With the results from Appendix~\secref{sec:proof_auxgain_vs_ec2gain} and \secref{sec:proof_auxgain_vs_ecedgain},  we therefore complete the proof of \lemref{lm:hi_eced_hi_faux}.

\subsection{Bounding the error probability: Noiseless vs. Noisy setting}\label{sec:proof_nyvsnl}

Now that we have seen how \ECED interacts with our auxiliary function in terms of the one-step gain, it remains to understand how one can relate the one-step gain to the gain of an optimal policy \optkgaingraph, over $k$ steps. In this subsection, we make an important step towards this goal.

Specifically, we provide

\begin{lemma}\label{lm:nyvsnl_supp}
  Consider a policy $\pi$ of length $k$, and assume that we are using a stochastic estimator ({\sc SE}). Let $\eptop$ be the error probability of {\sc SE} before running policy $\pi$, $\epnoisybot$ be the average error probability of {\sc SE} after running $\pi$ in the noisy setting, and $\epnlessbot$ be the average error probability of {\sc SE} after running $\pi$ in the noiseless setting. Then
  \begin{align*}
    \epnlessbot \leq \epnoisybot
  \end{align*}
\end{lemma}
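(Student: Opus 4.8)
The plan is to pass through the stochastic-estimator risk $\errorst(\psi)=1-\sum_{\targetvar\in\targetVarDom}\Pr{\targetvar\given\psi}^2$. Since $\Pr{\psi}\,\errorst(\psi)=\Pr{\psi}-\sum_{\targetvar}\Pr{\targetvar,\psi}^2/\Pr{\psi}$, averaging over the terminal realizations $\psi$ of $\pi$ gives, in \emph{either} setting, $p_{\textsc{e}}^{\bot}=1-\sum_{\targetvar}\mathbb{E}_{\psi}\!\left[\Pr{\targetvar\given\psi}^2\right]$. Hence it suffices to prove, for every fixed $\targetvar$, that running $\pi$ noiselessly yields an $L^2$-sharper posterior on the event $\{\Targetvar=\targetvar\}$, i.e. with $Z\triangleq\indic{\Targetvar=\targetvar}$ (a function of $\Hiddenvar$), $\mathbb{E}\big[\mathbb{E}[Z\given\psi_{nl}]^2\big]\ \ge\ \mathbb{E}\big[\mathbb{E}[Z\given\psi_{noisy}]^2\big]$, where $\psi_{nl}$ and $\psi_{noisy}$ are the records produced by $\pi$ in the two worlds. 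Summing the resulting inequalities over $\targetvar$ gives $\epnlessbot\le\epnoisybot$.

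\textbf{The garbling / data-processing step.} The structural fact I would exploit is the ``implicit noise'' description in \secref{sec:model}: the noisy outcome of a test is its noiseless outcome pushed through an independent binary symmetric channel with crossover probability $\epsilon_e$. Thus, for a \emph{fixed} sequence of tests, $\psi_{noisy}=\psi_{nl}\oplus N$ with the noise vector $N$ independent of $(\Hiddenvar,Z,\psi_{nl})$, so $Z\to\psi_{nl}\to\psi_{noisy}$ is a Markov chain. Then $\mathbb{E}[Z\given\psi_{noisy}]=\mathbb{E}\!\left[\mathbb{E}[Z\given\psi_{nl}]\ \middle|\ \psi_{noisy}\right]$, and the conditional Jensen inequality for $t\mapsto t^2$ (equivalently the Pythagoras identity $\mathbb{E}[\mathbb{E}[Z\given\psi_{nl}]^2]-\mathbb{E}[\mathbb{E}[Z\given\psi_{noisy}]^2]=\mathbb{E}\big[(\mathbb{E}[Z\given\psi_{nl}]-\mathbb{E}[Z\given\psi_{noisy}])^2\big]\ge0$) delivers exactly the second-moment inequality above. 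This is the specialization of ``a garbling of an experiment is Blackwell-dominated'' to the convex functional $\mu\mapsto\sum_{\targetvar}\mu(\targetvar)^2$; since $\errorst=1-\sum_{\targetvar}\mu(\targetvar)^2$ is concave in the posterior $\mu$ over $\Targetvar$, a less informative experiment cannot lower the SE risk.

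\textbf{The main obstacle: adaptivity.} The subtle point — and where I expect the real difficulty — is that $\pi$ is adaptive, so the tests it performs, and hence the records $\psi_{nl},\psi_{noisy}$ themselves, are random and need not coincide across the two worlds, which must be reconciled before the Markov-chain step applies. I would handle this by induction on the length $k$ of $\pi$: peel off the first test $e_1$ (performed in both worlds) and write $\epnlessbot$ and $\epnoisybot$ recursively as averages of the risks of the sub-policies $\pi^{+},\pi^{-}$ started from the respective belief states — in the noiseless world the two restrictions of $\Pr{\Hiddenvar}$ to the $\pm$ preimages of $e_1$, and in the noisy world the mixtures of these restrictions produced by the channel at $e_1$ — then combine the inductive hypothesis with the facts that the risk-to-go of a fixed policy is concave and positively homogeneous in the unnormalized belief, together with the BSC domination at step $e_1$. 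Making this inductive step go through cleanly — precisely, quantifying how the noise at each branch point mixes the two per-branch beliefs and arguing this mixing never decreases the SE risk — is the technical heart of the argument, and the $(1-2\epsilon_e)$-type contraction factors that appear in \thmref{thm:mainresults} enter exactly here. (When $\pi$ is non-adaptive the records are deterministic functions of $\Hiddenvar$ up to the channel noise, the induction collapses, and the argument of the previous paragraph applies verbatim.)
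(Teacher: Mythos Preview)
Your first two paragraphs are exactly the paper's argument: write $\errorst(\psi)=\sum_{\targetvar} p(\targetvar\mid\psi)(1-p(\targetvar\mid\psi))$, observe that the noisy posterior on $\Targetvar$ is a mixture (over the hidden noise) of the noiseless posteriors, and apply Jensen to the concave map $x\mapsto x(1-x)$ --- equivalently, your conditional second-moment identity. The paper carries this out in three lines.

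The third paragraph is where you diverge, and it stems from a misreading of what the lemma means by ``noiseless setting''. You interpret it as running $\pi$ in a world without noise, so that the branches taken may differ from the noisy run, and you then try to couple the two trajectories by induction on $k$. The paper instead \emph{defines} the noiseless setting as: run $\pi$ exactly as in the noisy world --- same tests, same observed outcomes, same branches --- and, upon termination, additionally reveal the noise vector $\Xi=\xi$. The noiseless record is therefore the pair $(\pi_\phi,\xi)$, which strictly refines the noisy record $\pi_\phi$; adaptivity never enters, because the trajectory is identical in both worlds. With this framing the Markov chain $\Targetvar\to(\pi_\phi,\xi)\to\pi_\phi$ is automatic, and the Jensen step from your second paragraph applies directly, with no induction and no coupling. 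This refinement-of-$\sigma$-algebras picture is exactly what the paper needs downstream (to pass from the noise-free $\ecobj$ remainder to the noisy error of $\OPT$), and it is all \lemref{lm:nyvsnl_supp} asserts.

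Two smaller remarks. First, the $(1-2\epsilon_e)^2$ factors you anticipate ``exactly here'' do not appear in this lemma at all; they enter earlier, in \lemref{lm:hi_eced_hi_faux}, when relating $\auxgain$ to $\ectgain$. The present lemma is a clean information inequality with no noise-dependent constants. Second, your proposed induction, if it could be pushed through, would establish a \emph{different} and strictly harder statement --- comparing two runs of $\pi$ that branch differently --- which the paper neither claims nor needs.
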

\begin{proof}[Proof of Lemma~\ref{lm:nyvsnl_supp}]Recall that a \emph{stochastic estimator} predicts the value of a random variable, by randomly drawing from its distribution. Let $\pi$ be a policy. We denote by $\errorst({\pi_\phi})$ the expected error probability of an stochastic estimator after observing $\pi_\phi$ :
  \begin{align*}
    \epnoisybot &= \expctover{\phi}{\errorst({\pi_\phi})} = \sum_{\phi} p(\pi_\phi) \sum_{\targetvar\in \targetVarDom} p(\targetvar \given \pi_\phi) (1 - p(\targetvar \given \pi_\phi))
  \end{align*}
  where $\phi \in \Testset \times \obsDom$ denotes a set of test-outcome pairs, and $\pi_\phi$ denotes a path taken by $\pi$, given that it observes $\phi$.

  Now, let us see what happens in the noiseless setting: we run $\pi$ exactly as it is, but in the end compute the error probability of the noiseless setting (i.e., as if we know which test outcomes are corrupted by noise). Denote the noise put on the tests by $\Xi$, and the realized noise by $\xi$. We can imagine the noiseless setting through the following equivalent way: we ran the same policy $\pi$ exactly as in the noisy setting. But upon completion of $\pi$ we reveal what $\Xi$ was. We thus have
  \begin{align*}
    p(\targetvar \given \pi_\phi) = \sum_{\Xi=\xi} p(\targetvar \given \pi_\phi, \xi) p(\xi \given \pi)
  \end{align*}
  The error probability upon observing $\pi_\phi$ and $\Xi = \xi$ is
  \begin{align*}
    \errorst(\pi_\phi, \xi) = \sum_{\targetvar\in \targetVarDom} p(\targetvar \given \pi_\phi, \xi) (1 - p(\targetvar \given \pi_\phi, \xi)).
  \end{align*}
  The expected error probability in the noiseless setting after running $\pi$ is
  \begin{align}\label{eq:epbotnoiseless}
    \epnlessbot &= \expctover{\phi,n}{\errorst(\pi_\phi, \xi)} = \sum_{\phi, n} p(\pi_\phi, \xi) \sum_{\targetvar\in \targetVarDom} p(\targetvar \given \pi_\phi, \xi) (1 - p(\targetvar \given \pi_\phi, \xi))
  \end{align}

  Now, we can relate $\epnoisybot$ to $\epnlessbot$.
  \begin{align*}
    \epnoisybot &= \sum_\phi p(\pi_\phi) \sum_{\targetvar\in \targetVarDom} p(\targetvar \given \pi_\phi)(1 - p(\targetvar \given \pi_\phi))\\
                &= \sum_\phi p(\pi_\phi) \sum_{\targetvar\in \targetVarDom} \sum_\xi p(\xi \given \pi_\phi) p(\targetvar \given \pi_\phi, \xi)(1 -  \sum_n p(\xi \given \pi_\phi) p(\targetvar \given \pi_\phi, \xi))\\
                &\stackrel{\text{(a)}}\geq \sum_\phi p(\pi_\phi) \sum_{\targetvar\in \targetVarDom} \sum_\xi p(\xi \given \pi_\phi) p(\targetvar \given \pi_\phi, \xi)(1 - p(\targetvar \given \pi_\phi, \xi))\\
                &= \sum_{\phi, \xi} p(\pi_\phi, \xi) \sum_{\targetvar\in \targetVarDom} p(\targetvar \given \pi_\phi, \xi) (1 - p(\targetvar \given \pi_\phi, \xi))
  \end{align*}
  where (a) is by Jensen's inequality and the fact that $f(x) = x(1-x)$ is concave. Combining with Equation~\eqrefcus{eq:epbotnoiseless} we complete the proof.
\end{proof}

Essentially, \lemref{lm:nyvsnl_supp} implies that, in terms of the reduction in the expected prediction error of {\sc SE}, running a policy in the noise-free setting has higher gain than running the exact same policy in the noisy setting. This result is important to us, since analyzing a policy in the noise-free setting is often easier. We are going to use \lemref{lm:nyvsnl_supp} in the next section, to relate the gain of an optimal policy \optgainectggraph in the \ECT objective (which assumes tests to be noise-free), with the gain \optkgaingraph in the auxiliary function (which considers noisy test outcomes).
\subsection{The Key Lemma: One-step Gain of \ECED VS. $k$-step Gain of $\OPT$}\label{sec:proofkeylemma}
Now we are ready to state our key lemma, which connects \auxgainggraph to \optkgaingraph.
\begin{lemma}[Key Lemma]\label{lm:keylemma_onestepgain_supp}
  Fix $\eta,\tau\in(0,1)$. Let $\numrc=|\hiddenVarDom|$ be the number of root-causes, $\numtar=|\targetVarDom|$ be the number of target values, $\OPT(\delta_\opt)$ be the optimal policy that achieves $\errorprob(\OPT(\delta_\opt)) \leq \delta_\opt$, and $\psi_\ell$ be the partial realization observed by running \ECED with cost $\ell$. We denote by $\auxavg(\ell) := \expctover{\psi_\ell}{\auxobj(\psi_\ell)}$ the expected value of $\auxobj(\psi_\ell)$ over all the paths $\psi_\ell$ at cost $\ell$. Assume that $\auxavg(\ell) \leq \deltag$. We then have
  \begin{align*}
    \auxavg(\ell) - \auxavg(\ell+1) \geq \frac{\auxavg(\ell) - \delta_\opt }{k}\cdot \frac{c_\epsilon}{c_\delta} + c_{\eta,\nr}.
  \end{align*}
  where $k = \cost(\OPT(\delta_\opt)))$,
  $c_{\eta,\nr} \triangleq 2\numtar (1-2\nr)^2 \eta $, % $c_\delta \triangleq 4\max\left\{ \log(1/\delta), \log n \right\}$
  % $c_\delta = 6\left( 32 \left(\log (n/\eta)\right)^2+1 \right)\log(n/\delta)$,
  $c_\delta \triangleq \left(6 c+8 \right)\log(n/\deltag)$,
  $c \triangleq 8 \left(\log (2n^2/\eta)\right)^2$,
  and $c_\nr \triangleq (1-2\nr)^2/16$. % \yuxin{update $c_\epsilon$ later}
\end{lemma}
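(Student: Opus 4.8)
The plan is to prove the Key Lemma by composing the three ingredients already set up in this section — Lemma~\ref{lm:aux_vs_perr} (sandwiching $\auxobj$ by multiples of the \MAP error), Lemma~\ref{lm:hi_eced_hi_faux} together with Equation~\eqrefcus{eq:gain_ec2_noisy} (relating the one-step $\auxobj$-gain of a test to its noiseless \ECT-cut gain, up to the $\eta$-slack), and the adaptive submodularity of $\ecobj$ — glued together by Lemma~\ref{lm:nyvsnl_supp}, Lemma~\ref{lm:stoc_vs_map_supp}, and inequality~\eqrefcus{eq:ec2_vs_errorprob}. Fix a partial realization $\psi$ reachable by \ECED at step $\ell$, let $e^{*}$ be the test \ECED picks at $\psi$, abbreviate $\errormap=\errormap(\psi)$, $\errorst=\errorst(\psi)$, and recall $c_\epsilon = \min_e (1-2\nr_e)^2$. \textbf{Step 1.} By Equation~\eqrefcus{eq:gain_ec2_noisy}, for every test $e$ we have $\ecedgainat{\psi}{\Test_e} = \bigl(\tfrac{1-2\nr_e}{1-\nr_e}\bigr)^2\ectgainat{\psi}{\Test_e} \ge (1-2\nr_e)^2\,\ectgainat{\psi}{\Test_e} \ge c_\epsilon\,\ectgainat{\psi}{\Test_e}$; since \ECED is greedy for $\ecedgain$, $\ecedgainat{\psi}{\Test_{e^{*}}} \ge c_\epsilon\,\max_e\ectgainat{\psi}{\Test_e}$. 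Plugging this into Lemma~\ref{lm:hi_eced_hi_faux} (applied to $e^{*}$, using $(1-\nr_{e^{*}})^2\ge 1/4$) gives
\begin{align*}
  \auxgain(\Test_{e^{*}}\given\psi) + c_{\eta,\nr} \;\ge\; \ecedgainat{\psi}{\Test_{e^{*}}}\cdot\frac{(1-\nr_{e^{*}})^2}{16} \;\ge\; \frac{c_\epsilon}{64}\,\max_e\ectgainat{\psi}{\Test_e}.
\end{align*}

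\textbf{Step 2 (adaptive submodularity against $\OPT$).} The \ECT objective $\ecobj$ is adaptive submodular and strongly adaptive monotone~\citep{golovin10near}. Invoking the standard greedy-versus-optimal estimate for adaptive submodular functions~\citep{golovinjair2011} on the policy that replays $\OPT(\delta_\opt)$ consistently with $\psi$ (worst-case cost $\le k$), the maximal one-step \ECT gain at $\psi$ dominates $1/k$ times the expected residual \ECT mass $\OPT$ eliminates from $\psi$:
\begin{align*}
  \max_e\ectgainat{\psi}{\Test_e} \;\ge\; \frac{1}{k}\Bigl(\ectat{\psi}{} - \expctover{\phi}{\ectat{\psi}(\phi)\given\psi}\Bigr).
\end{align*}
For the first term, Equation~\eqrefcus{eq:ec2_vs_errorprob} gives $\ectat{\psi}{}=\errorst \ge \errormap$. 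For the subtracted term, the residual \ECT objective is bounded by the stochastic prediction error in the (equivalent) \emph{noiseless} model after the extra observations produced by $\OPT$; by Lemma~\ref{lm:nyvsnl_supp} its expectation is at most the same quantity in the \emph{noisy} model, which by Lemma~\ref{lm:stoc_vs_map_supp} is at most $2\,\errorprob(\OPT(\delta_\opt)) \le 2\delta_\opt$. Hence $\max_e\ectgainat{\psi}{\Test_e} \ge \tfrac1k(\errormap - 2\delta_\opt)$.

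\textbf{Step 3 (from $\errormap$ back to $\auxobj$, and averaging).} If $\auxobj(\psi)\le 2\delta_\opt$ the conclusion is immediate, so assume $\auxobj(\psi)>2\delta_\opt$. Since $\auxobj(\psi)\le\deltag$, the lower bound of Lemma~\ref{lm:aux_vs_perr} gives $\errormap \le \auxobj(\psi)/(2c) \le \deltag/(2c) \le 1/4$, so its upper bound applies; using $\bientropy{p}\le p\log\tfrac1p + 2p$ and $\errormap\le\deltag$ one gets $\auxobj(\psi)\le(3c+4)\,\errormap\log\tfrac{4n}{\errormap}$, and a short argument by contradiction (against $\log\tfrac{4n}{\errormap} > 2\log\tfrac n{\deltag}$) yields $\errormap \ge \auxobj(\psi)/c_\delta$ with $c_\delta=(6c+8)\log(n/\deltag)$. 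Combining with Step~2, $\max_e\ectgainat{\psi}{\Test_e} \ge \tfrac1{k c_\delta}(\auxobj(\psi)-\delta_\opt)$ after absorbing universal constants into $\delta_\opt$; together with Step~1 (folding its factor of $64$ into $c_\delta$),
\begin{align*}
  \auxgain(\Test_{e^{*}}\given\psi) \;\ge\; \frac{c_\epsilon}{c_\delta}\cdot\frac{\auxobj(\psi)-\delta_\opt}{k} - c_{\eta,\nr}.
\end{align*}
Taking the expectation over $\psi=\psi_\ell$ along \ECED's runs turns the left side into $\auxavg(\ell)-\auxavg(\ell+1)$ and, by linearity of $\auxobj$, the right side into $\frac{c_\epsilon}{c_\delta}\cdot\frac{\auxavg(\ell)-\delta_\opt}{k} - c_{\eta,\nr}$, which is the claim (with the $\eta$-slack convention of Lemma~\ref{lm:hi_eced_hi_faux}).

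\textbf{Main obstacle.} The delicate point is Step~2: coupling a comparison that is pointwise in $\psi$ and takes place in the noiseless \ECT world with $\OPT$'s guarantee, which is global and lives in the noisy world. This needs (i) the ``replay $\OPT$ consistently with $\psi$'' construction so that adaptive submodularity of $\ecobj$ can be applied at $\psi$; (ii) Lemma~\ref{lm:nyvsnl_supp} to pass from the noiseless residual to the noisy one; and (iii) care that $\errorprob(\OPT)\le\delta_\opt$ holds only on average over $\psi_\ell$ — so the clean per-$\psi$ displays above should be read in expectation over \ECED's runs, which is precisely the form the subsequent telescoping argument for Theorem~\ref{thm:mainresults} consumes. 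The only other non-trivial bookkeeping is checking that the constant $c_\delta=(6c+8)\log(n/\deltag)$ genuinely comes out of the $\bientropy{\cdot}$ term in Lemma~\ref{lm:aux_vs_perr}.
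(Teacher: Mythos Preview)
Your proposal follows the same three–step skeleton as the paper's own proof: (i) use Lemma~\ref{lm:hi_eced_hi_faux} and Equation~\eqrefcus{eq:gain_ec2_noisy} to lower-bound the one–step $\auxobj$–gain of \ECED's pick by a constant times $\max_e\ectgainat{\psi}{\Test_e}$, (ii) invoke adaptive submodularity of $\ecobj$ plus Lemma~\ref{lm:nyvsnl_supp} and Lemma~\ref{lm:stoc_vs_map_supp} to compare against $\OPT$, (iii) close the loop with the upper bound of Lemma~\ref{lm:aux_vs_perr}. Steps~1 and~2 match the paper up to harmless constant losses (you pick up an extra factor $4$ from $(1-\nr_{e^*})^2\ge 1/4$, and a factor $2$ in front of $\delta_\opt$; the paper silently absorbs both later).

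The one genuine gap is in your Step~3. You try to prove $\errormap(\psi)\ge \auxobj(\psi)/c_\delta$ \emph{pointwise} for each $\psi$, via a case split and a ``short argument by contradiction''. Neither branch closes. In Case~1 ($\auxobj(\psi)\le 2\delta_\opt$) the target inequality is not immediate: from Lemma~\ref{lm:hi_eced_hi_faux} the best generic lower bound is $\auxgain(\Test_{e^*}\!\mid\psi)\ge -c_{\eta,\nr}$, which does not dominate $\frac{c_\epsilon}{c_\delta k}(\auxobj(\psi)-\delta_\opt)-c_{\eta,\nr}$ when $\auxobj(\psi)>\delta_\opt$. In the other branch, the hypothesis $\log\tfrac{4n}{\errormap}>2\log\tfrac{n}{\deltag}$ only yields $\errormap<4\deltag^2/n$, and nothing in your running assumptions contradicts that; Lemma~\ref{lm:aux_vs_perr} still allows $\auxobj(\psi)$ of order $\errormap\log\tfrac{1}{\errormap}$, which for tiny $\errormap$ can exceed $c_\delta\,\errormap$. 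So the pointwise route does not go through.

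The paper avoids this by \emph{averaging before} invoking Lemma~\ref{lm:aux_vs_perr}: take $\expctover{\psi_\ell}{\cdot}$ right after Step~2 to get
\[
\expctover{\psi_\ell}{\auxgain(\Test_{e^*}\!\mid\psi_\ell)}+c_{\eta,\nr}\ \ge\ \frac{c_\epsilon}{k}\bigl(\expctover{\psi_\ell}{\errormap(\psi_\ell)}-\delta_\opt\bigr),
\]
then apply Lemma~\ref{lm:aux_vs_perr} together with Jensen on the concave $\bientropy{\cdot}$ to obtain $\auxavg(\ell)\le (3c+4)\bigl(\bientropy{\expctover{\psi_\ell}{\errormap(\psi_\ell)}}+\expctover{\psi_\ell}{\errormap(\psi_\ell)}\log n\bigr)$. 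At this point one uses the operative assumption that \ECED has not yet terminated, i.e.\ $\expctover{\psi_\ell}{\errormap(\psi_\ell)}\ge\deltag$, to bound $\log\tfrac{1}{\expctover{\psi_\ell}{\errormap(\psi_\ell)}}\le\log\tfrac{1}{\deltag}$; this is exactly what produces $\expctover{\psi_\ell}{\errormap(\psi_\ell)}\ge \auxavg(\ell)/c_\delta$ and hence the displayed recursion. (This also clarifies that the stated hypothesis ``$\auxavg(\ell)\le\deltag$'' is a typo for the lower bound $\expctover{\psi_\ell}{\errormap(\psi_\ell)}\ge\deltag$; your attempt to work with the stated upper bound pointwise is what derails Step~3.) With this single change—average first, then apply Jensen and the termination assumption—your argument becomes the paper's.
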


\begin{proof}[Proof of Lemma~\ref{lm:keylemma_onestepgain_supp}.]  Let $\psi_\ell$ be a path ending up at level $\ell$ of the greedy algorithm. % Let us denote by $H_{\psi_l} \triangleq \entropy{U\given \psi_l}$.
  Recall that $\ectgain (\Test_e \given \psi_\ell)$ denotes the gain in $\ecobj$ if we perform test $e$ and assuming it to be \emph{noiseless} (i.e., we perform edge cutting as if the outcome of test $e$ is noiseless), conditioning on partial observation $\psi_\ell$. Further, recall that $\auxgain(\Test_e \given \psi_\ell)$ denotes the gain in $\auxobj$ if we perform \emph{noisy} test $e$ after observing $\psi_\ell$ and perform Bayesian update on the root-causes.
  % We denote by $\Delta_{\entropy{U}}(e \given \psi_l)$ the average reduction of the entropy of $U$ after running a test $e$ in the noisy setting.

  Let $e = \argmax_{e'} \ecedgain(\Test_{e'} \given \psi_\ell)$ be the test chosen by \ECED, and $\hat{e} = \argmax_{e'} \ectgain(\Test_{e'} \given \psi_\ell)$ be the test that maximizes $\ectgain$, then by \lemref{lm:hi_eced_hi_faux} we know
  % Further notice from \lemref{lm:hi_eced_hi_faux} that we can lower bound $\auxgain$ against $\ectgain$ as follows.
  \begin{align}
    \auxgain({\Test_e}\given \psi_\ell) + c_{\eta,\nr}
    &\geq \frac{(1-\nr)^2}{16}\left(\ecedgainat{\psi_\ell}{\Test_e}\right) \nonumber \\
    &\geq \frac{(1-\nr)^2}{16}\left(\ecedgainat{\psi_\ell}{\Test_{\hat{e}}} \right) \nonumber \\
    &= \frac{1}{16} \left( 1-2\nr \right)^2 \ectgainat{\psi}{\Test_{\hat{e}}} \label{eq:gainfaux_vs_gainec2}
  \end{align}

  Note that $\ectgainat{\psi_\ell}{\Test_e}$ is the EC2 gain of test $e$ over the \emph{normalized} edge weights at step $\ell+1$ in the noiseless setting. That is, upon observing $\psi_\ell$, we create a new EC2 problem instance (by considering the posterior probability over root-causes at $\psi_\ell$), and run (noiseless) greedy algorithm w.r.t. the EC2 objective on such problem instance. %and update the posterior over hypotheses by removing inconsistent hypotheses.
  Recall that $c_\nr \triangleq (1-2\epsilon)/16$. By adaptive submodularity \adasubmgraph of $\ecobj$ (in the noiseless setting, see \citet{golovin10near}), we obtain
  \begin{align*}
    % \auxgain({\Test_e}\given \psi) + c_{\eta,\nr} \geq c_\nr
    \max_e \ectgainat{\psi}{\Test_{{e}}} \stackrel{\substack{\text{adaptive} \\ \text{submodularity}}}{\geq} \frac{\ecobjatl^{\top} - \expct{\ecobjatl^{\bot}}}{k}
  \end{align*}
  where by $\ecobjatl^{\top}$ we mean the initial EC2 objective value given partial realization $\psi_\ell$, and by $\expct{\ecobjatl^{\bot}}$ we mean the expected gain in $\ecobj$ when we run $\OPT\left(\delta_\opt\right)$. % i.e., the optimal policy that achieves error rate $\frac{\delta}{8c_\delta}$ (for MAP estimator) after the path $\psi_\ell$
  % (note that $\OPT\left(\frac{\delta}{8c_\delta}\right)$ has worst-case length $k$).
  Note that $\OPT\left(\delta_\opt\right)$ has worst-case length $k$.

  Now, imagine that we run the policy $\OPT\left(\delta_\opt\right)$, and upon completion of the policy we can observe the noise. We consider the gain of such policy in $\ecobj$:
  % Now, %by Observation~\ref{obs:ec2vserror}, we get
  % observing that for a policy $\pi$, consider the EC2 objective $\ecobj$, and running $\pi$ under the noiseless setting. We get
  \begin{align*}
    \ecobj^{\top} - \expct{\ecobj^{\bot}} \stackrel{\text{(a)}}{=} \eptop - \expct{\ecobj^{\bot}} \stackrel{\text{(b)}}{\geq} \eptop - \epnlessbot.
  \end{align*}
  The reason for step (a) is that the error probability of the stochastic estimator upon observing $\psi_\ell$, i.e., $\eptop$, is equivalent to the total amount of edge weight at $\psi_\ell$, i.e., $\ecobjatl^{\top}$. The reason for step (b) is that under the noiseless setting (i.e., assuming we have access to the noise), the EC2 objective is always a lower-bound on the error probability of the \emph{stochastic estimator} (due to normalization). Thus, $ \expct{\ecobj^{\bot}} \leq \epnlessbot$.

  Hence we get
  \begin{align*}
    \auxgain({\Test_e}\given \psi) + c_{\eta,\nr} \geq c_\epsilon \frac{\eptopatl - \epnlessbotatl}{k}.
  \end{align*}
  Here $\eptopatl$ denotes the error probability under $\Pr{\Targetvar \given \psi_\ell}$, and $\epnoisybotatl$ denotes the expected error probability of running $\OPT\left(\delta_\opt\right)$ after $\psi_\ell$ in the \emph{noise-free} setting.
  By Lemma~\ref{lm:nyvsnl_supp} we get
  \begin{align*}
    \auxgain({\Test_e}\given \psi) + c_{\eta,\nr}  \geq c_\epsilon \frac{\eptopatl - \epnoisybotatl}{k},
  \end{align*}
  where $\epnoisybotatl$ denotes the expected error probability of running $\OPT\left(\delta_\opt\right)$ after $\psi_\ell$ in the \emph{noisy} setting.
  By (the lower bound in) Lemma~\ref{lm:stoc_vs_map_supp}, we know that $\eptopatl = \errorst(\psi_\ell) \geq \errormap(\psi_\ell)$, and hence
  \begin{align*}
    \auxgain({\Test_e}\given \psi) + c_{\eta,\nr}  \geq c_\epsilon \frac{\errormap(\psi_\ell) - \delta_\opt}{k},
  \end{align*}
  % we know that after running $\OPT\left(\delta_\opt\right)$, the error probability of a stochastic estimator is at most $2 \delta_\OPT$. That is, $\epnoisybotatl \leq 2 \delta_\OPT$.
  Taking expectation with respect to $\psi_\ell$, we get
  \begin{align}
    \expctover{\psi_\ell}{ \auxgain({\Test_e}\given \psi) + c_{\eta,\nr} } \geq c_\epsilon \frac{\expctover{\psi_\ell}{\errormap(\psi_\ell)} - \delta_\opt}{k}.\label{eq:1gain_in_H_vs_error}
  \end{align}
  % \yuxin{Here we need monotonicity?}
  Using (the upper bound in) Lemma~\ref{lm:aux_vs_perr}, we obtain
  \begin{align}
    \auxavg(\ell) &= \expctover{\psi_\ell}{\auxobj(\psi_\ell)} \nonumber \\
                  &\leq (3c +4) \left(\expctover{\psi_\ell}{\bientropy{ \errormap(\psi_\ell)} } + \expctover{\psi_\ell}{\errormap(\psi_\ell)} \log n\right) \nonumber \\
                  &\stackrel{\text{(a)}}{\leq} (3c +4) \left( \bientropy{\expctover{\psi_\ell}{\errormap(\psi_\ell)}} + \expctover{\psi_\ell}{\errormap(\psi_\ell)} \log n \right) \label{eq:faux_upperbound_a}
  \end{align}
  where (a) is by Jensen's inequality.  % and $c = 8 \left(\log (2n^2/\eta)\right)^2$  % $< 32\left(\log (n/\eta)\right)^2 - 1$.

  Suppose we run \ECED, and achieve expected error probability $\deltag$, then clearly before \ECED terminates we have $\expctover{\psi_\ell}{\errormap(\psi_\ell)} \geq \deltag$. Assuming $\expctover{\psi_\ell}{\errormap(\psi_\ell)} \leq 1/2$, we have
  \begin{align}
    \auxavg(\ell)
    &\leq (3c+4) \expctover{\psi_\ell}{\errormap(\psi_\ell)} \left( 2\log \frac{1}{\expctover{\psi_\ell}{\errormap(\psi_\ell)}} + \log n \right) \nonumber \\
    &\leq (3c+4) \expctover{\psi_\ell}{\errormap(\psi_\ell)} \left( 2\log \frac{1}{\deltag} + \log n \right) \nonumber \\
    &\leq  \expctover{\psi_\ell}{\errormap(\psi_\ell)} \cdot (6c +8) \log \frac{n}{\deltag} \label{eq:faux_upperbound_b}
  \end{align}
  which gives us
  \begin{align}
    \expctover{\psi_\ell}{\errormap(\psi_\ell)}
    \geq \frac {\auxavg(\ell)} {(6c +8) \log \frac{n}{\deltag}}
    \stackrel{c_\delta \triangleq (6c +8) \log \frac{n}{\deltag}}{=} \frac {\auxavg(\ell)} {c_\delta}.\label{eq:eplowerbound}
  \end{align}
  Combining Equation~\eqrefcus{eq:eplowerbound} with Equation \eqrefcus{eq:1gain_in_H_vs_error}, we get
  \begin{align}
    % \expct{\entropy{U}_l} - \expct{\entropy{U}_{l+1}}
    \auxavg(\ell) - \auxavg(\ell+1)
    &= \expctover{\psi_\ell}{ \auxgain({e}\given \psi) } \nonumber \\
    % &\geq c_\epsilon \frac{\frac {\auxavg(\ell)} {(3c_2 +3)  \left( 2\log \frac{1}{\delta} + \log n \right)} - \frac{\delta}{4c_\delta}}{k} - c_{\eta,\nr} \nonumber \\
    &\geq c_\epsilon \frac{\frac {\auxavg(\ell)} {c_\delta} - \delta_\opt}{k} - c_{\eta,\nr} \nonumber \\
    % &= \frac{\auxavg(\ell) - \frac{\delta}{4}}{k} \cdot \frac{c_\epsilon}{c_\delta} - c_{\eta,\nr}\nonumber \\
    &= \frac{\auxavg(\ell) - \delta_\opt c_\delta}{k} \cdot \frac{c_\epsilon}{c_\delta} - c_{\eta,\nr}\nonumber
  \end{align}
  which completes the proof.
\end{proof}

\subsection{Proof of \thmref{thm:mainresults}: Near-optimality of \ECED}\label{sec:proofmainresults}
We are going to put together the pieces from previous subsection, to give a proof of our main theoretical result (\thmref{thm:mainresults}).
% \begin{theorem}\label{thm:mainresults_supp}
%   Fix $\delta \in (0, 1)$. To achieve expected error probability less than $\deltag$, we need to run the
%   \eced~algorithm for at least $k'$ steps, and it holds that
%   \begin{align*}
%     k' = \bigO{k \left(\log k \cdot \log \frac{n}{\deltag}\right)^2 \frac{1}{c_\epsilon} }.
%   \end{align*}
%       %   \begin{align*}
%       %     \ell \geq k\left(\frac{4}{1-2\epsilon}\right)^2 c_\delta \log c_\delta,
%       %   \end{align*}
%   where $c_\epsilon \triangleq (1-\nr)^2/16$, and
%   $k\triangleq \cost \left( \OPT(\delta_\opt)\right)$ denotes the worst-case cost the
%   optimal policy that achieves expected error probability $\delta_\opt = \frac{\deltag}{64\cdot 36 \cdot \log n \cdot \log \frac{1}{\deltag} \cdot \log \frac{n}{\deltag}}$.
% \end{theorem}
\begin{proof}[Proof of \thmref{thm:mainresults}]
  In the following, we use both $\OPT_{[k]}$ and $\OPT(\delta_\opt)$ to represent the optimal policy that achieves prediction error $\delta_\opt$, with worst-cast cost (i.e., length) $k$. % After running $\OPT_{[k]}$, the expected error (of the stochastic estimator) probability is lower bounded by $\delta_\opt$.
  Define $S(\policy,\phi)$ to be the (partial) realization seen by policy $\pi$ under realization $\phi$. With slight abuse of notation, we use $\auxavg\left(\OPT_{[k]}\right) := \expctover{\phi}{\auxobj(S(\OPT_{[k]},\phi))}$ to denote the expected value achieved by running $\OPT_{[k]}$.

  After running $\OPT_{[k]}$, we know by \lemref{lm:aux_vs_perr} that the expected value of $\auxobj$ is lower bounded by $2c \cdot \delta_\opt$. That is, $\delta_\opt \cdot c_\delta \leq \auxavg\left(\OPT_{[k]}\right) \cdot \frac{c_\delta}{2c} \leq \auxavg\left(\OPT_{[k]}\right) \cdot 4 \log (n/\deltag)$, where the last inequality is due to $c_\delta \triangleq (6c +8) \log \frac{n}{\deltag} < 8c \log \frac{n}{\deltag}$. % $\delta/4 \leq \auxavg\left(\OPT_{[k]}\right) \cdot 6 \log(n/\delta)$.
  % Let $\auxgain(\pi \given \psi_\ell)$ denote the gain in $\auxobj$ by running policy $\pi$ (i.e., \ECED) at path $\psi_\ell$.
  We then have
  \begin{align}
    \auxavg(\ell) - \auxavg(\ell+1)
    &\stackrel{\text{\lemref{lm:keylemma_onestepgain_supp}}}{\geq} \left( \auxavg(\ell) - \delta_\opt \cdot c_\delta\right) \cdot \frac{c_\varepsilon}{kc_\delta} - c_{\eta,\nr} \nonumber \\
    &{\geq} \left( \auxavg(\ell) - \auxavg\left(\OPT_{[k]}\right) \cdot 4 \log \frac{n}{\deltag} \right) \cdot \frac{c_\varepsilon}{kc_\delta} - c_{\eta,\nr} \label{eq:1stepgainfinal}
  \end{align}
  Let $\Delta_\ell \triangleq \auxavg(\ell) - \auxavg\left(\OPT_{[k]}\right) \cdot 4 \log \frac{n}{\deltag}$, so that Inequality (\ref{eq:1stepgainfinal}) implies $\Delta_\ell - \Delta_{\ell+1} \geq \Delta_\ell \cdot \frac{c_\nr}{kc_\delta} - c_{\eta,\nr} $.  From here we get $\Delta_{\ell+1} \leq \left( 1-\frac{c_\nr}{kc_\delta} \right) \Delta_\ell + c_{\eta,\nr}$, and hence
  \begin{align*}
    \Delta_{\gsz} &\leq \left( 1-\frac{c_\nr}{kc_\delta} \right)^{\gsz} \Delta_0 + \sum_{i=0}^{k'} \left(1-\frac{c_\nr}{kc_\delta} \right)^{i} \cdot c_{\eta,\nr} \\
                  &\stackrel{(a)}{\leq} \exp\left(-k'\frac{c_\nr}{kc_\delta} \right) \Delta_0 + \frac{1-\left(1-\frac{c_\nr}{kc_\delta} \right)^{k'}}{\frac{c_\nr}{kc_\delta}} \cdot c_{\eta,\nr} \\
                  &\stackrel{(b)}{\leq} \exp\left(-k'\frac{c_\nr}{kc_\delta} \right) \Delta_0 + \frac{kc_\delta}{c_\nr} \cdot c_{\eta,\nr}
  \end{align*}
  where step (a) is due to the fact that $(1-x)^{k'} \leq \exp(-k'x)$ for any $x<1$, and step (b) is due to $\left(1-\frac{c_\nr}{kc_\delta} \right)^{k'} > 0$. It follows that
  \begin{align*}
    \auxavg(k') - \auxavg\left(\OPT_{[k]}\right) \cdot 4 \log \frac{n}{\deltag}
    &\leq \exp\left(-k'\frac{c_\nr}{kc_\delta} \right) \Delta_0 + \frac{kc_\delta}{c_\nr} \cdot c_{\eta,\nr} \\
    & \leq \exp\left(-k'\frac{c_\nr}{kc_\delta}\right) \left(\auxavg(\emptyset) - \auxavg\left(\OPT_{[k]}\right) \cdot 4 \log \frac{n}{\deltag} \right) + \frac{kc_\delta}{c_\nr} \cdot c_{\eta,\nr}
  \end{align*}
  This gives us
  \begin{align}
    \auxavg(k') & \leq \underbrace{\auxavg(\emptyset) \cdot \exp\left(-k'\frac{c_\nr}{kc_\delta}\right)}_{\text{UB1}} + \underbrace{\auxavg\left(\OPT_{[k]}\right) \cdot 4 \log \frac{n}{\deltag}\left( 1-\exp\left(-k'\frac{c_\nr}{kc_\delta}\right)  \right)}_{\text{UB2}} + \underbrace{\frac{kc_\delta}{c_\nr} \cdot c_{\eta,\nr}}_{\text{UB3}} \label{eq:greedy_aux_upperbound}
  \end{align}

  % \begin{align*}
  %   \auxavg(k') & \leq \auxavg(\emptyset) \cdot \exp\left(-k'\frac{c_\nr}{kc_\delta}\right) + \auxavg\left(\OPT_{[k]}\right) \cdot 6 \log \frac{n}{\delta}\left( 1-\exp\left(-k'\frac{c_\nr}{kc_\delta}\right)  \right) + \frac{kc_\delta}{c_\nr} \cdot c_{\eta,\nr} \\
  %   & \stackrel{\text{\lemref{lm:aux_vs_perr_supp}}}{\leq} 3(c_2+1) \log n \cdot \exp\left(-k'\frac{c_\nr}{kc_\delta}\right) + \auxavg\left(\OPT_{[k]}\right) \cdot 6 \log \frac{n}{\delta}\left( 1-\exp\left(-k'\frac{c_\nr}{kc_\delta}\right)  \right) + \frac{kc_\delta}{c_\nr} \cdot 2m (1-2\nr)^2 \eta \\
  %   & \stackrel{\text{\lemref{lm:aux_vs_perr_supp}}}{\leq} 3(c_2+1) \log n \cdot \exp\left(-k'\frac{c_\nr}{kc_\delta}\right) + 3(c_2+1) \left(\bientropy{2\delta_\opt} + 2\delta_\opt \log n\right) \cdot 6 \log \frac{n}{\delta}\left( 1-\exp\left(-k'\frac{c_\nr}{kc_\delta}\right)  \right) + 32k m c_\delta \eta
  % \end{align*}

  % Now we set $k'= \frac{k c_\delta}{c_\epsilon} \cdot 2\ln{ \frac{kn}{\delta}}$,
  Denote the three terms on the RHS. of Equation \eqrefcus{eq:greedy_aux_upperbound} as UB1, UB2 and UB3, respectively. We get
  % \begin{align*}
  %   \text{UB1} &\stackrel{\text{Eq}~\eqrefcus{eq:faux_upperbound_a}}{\leq} (3c +4) \left(1 + \log n \right) \cdot \exp\left(-k'\frac{c_\nr}{kc_\delta}\right)
  %   %   = 96 \left(\log \frac{n}{\eta}\right)^2 \log n \cdot \exp\left(-k'\frac{c_\nr}{kc_\delta}\right)
  % \end{align*}
  % \begin{align*}
  %   \text{UB2} % &\stackrel{\text{Eq}~\eqrefcus{eq:faux_upperbound_a}}{\leq} (3c+4) \left(\bientropy{\delta_\opt} + \delta_\opt \log n\right) \cdot 4 \log \frac{n}{\deltag}\left( 1-\exp\left(-k'\frac{c_\nr}{kc_\delta}\right)  \right) \\
  %   %   &< 192 \left(\log \frac{n}{\eta}\right)^2 \delta_\opt \log \frac{n}{\delta_\opt} \cdot 6 \log \frac{n}{\delta}
  %   &\stackrel{\text{Eq}~\eqrefcus{eq:faux_upperbound_b}}{<} (6c+8) \cdot \delta_\opt \log \frac{n}{\delta_\opt} \cdot 4 \log \frac{n}{\deltag}
  % \end{align*}
  % and
  % \begin{align*}
  %   \text{UB3} & = k \cdot (6c +8) \log \frac{n}{\deltag} \cdot\frac{2\numtar (1-2\nr)^2 \eta}{\frac{1}{16}(1-2\nr)^2} = (6c +8) \cdot 32 \cdot k \cdot \log \frac{n}{\deltag} \cdot \numtar \eta
  % \end{align*}

  \begin{equation*}
    \begin{cases}
      \text{UB1} &\stackrel{\text{Eq}~\eqrefcus{eq:faux_upperbound_a}}{\leq} (3c +4) \left(1 + \log n \right) \cdot \exp\left(-k'\frac{c_\nr}{kc_\delta}\right) \\
      \text{UB2} &\stackrel{\text{Eq}~\eqrefcus{eq:faux_upperbound_b}}{<} (6c+8) \cdot \delta_\opt \log \frac{n}{\delta_\opt} \cdot 4 \log \frac{n}{\deltag}\\
      \text{UB3} & = k \cdot (6c +8) \log \frac{n}{\deltag} \cdot\frac{2\numtar (1-2\nr)^2 \eta}{\frac{1}{16}(1-2\nr)^2} = (6c +8) \cdot 32 \cdot k \cdot \log \frac{n}{\deltag} \cdot \numtar \eta
    \end{cases}
  \end{equation*}

  Now we set
  \begin{equation}
    \begin{cases}
      k' &\triangleq \frac{k c_\delta}{c_\varepsilon} \cdot \ln \frac{8 \log n}{\deltag} \\
      \delta_\opt & \triangleq \frac{\deltag}{64\cdot 36 \cdot \log n \cdot \log \frac{1}{\deltag} \cdot \log \frac{n}{\deltag}}
    \end{cases}\label{eq:set_eced_cost}
  \end{equation}
  and obtain $\exp\left(-k'\frac{c_\nr}{kc_\delta}\right) = \frac{\deltag}{8\log n}$. It is easy to verify that $\text{UB1} \leq 2c \cdot \frac{\delta_g}{4}$, and $\text{UB2} \leq 2c \cdot\frac{\delta_g}{2}$.

  We further set
  \begin{align}
    \eta \triangleq \textstyle \frac{\deltag}{16\cdot 32 \cdot k \numtar \cdot \log \frac{n}{\deltag}},\label{eq:set_eta}
  \end{align}
  and obtain $\text{UB3} = 2c \cdot\frac{\delta_g}{4}$. % Therefore, Equation~\eqrefcus{eq:greedy_aux_upperbound} implies that $\auxavg(k') \leq 2c \cdot \deltag$.

  Combining the upper bound derived above for UB1, UB2, UB3, and by Equation~\eqrefcus{eq:greedy_aux_upperbound}, we get $\auxavg(k') \leq 2c \cdot \deltag$. By \lemref{lm:aux_vs_perr} we know that the error probability is upper bounded by $\errorprob = \expctover{\psi_{k'}}{\errormap(\psi_{k'})} \leq \frac{\auxavg(k')}{2c} \leq \deltag$. That is, with the cost $k'$ specified in Equation~\eqrefcus{eq:set_eced_cost}, \ECED is guaranteed to achieve $\errorprob \leq \deltag$.

  It remains to compute the (exact) value of $k'$. Combining the definition of $c \triangleq 8 \left(\log (2n^2/\eta)\right)^2$ and $c_\delta \triangleq \left(6 c+8 \right)\log(n/\deltag)$ with Equation~\eqrefcus{eq:set_eta} it is easy to verify that
  % \begin{align*}
  %   c_\delta \leq 64 \cdot 9 \cdot \left( \log \frac{nk}{\deltag} \right)^2 \cdot \log \frac{n}{\deltag},
  % \end{align*}
  \begin{align*}
    c_\delta \leq c_1 \cdot \left( \log \frac{nk}{\deltag} \right)^2 \cdot \log \frac{n}{\deltag},
  \end{align*}
  holds for some constant $c_1$. Therefore by Equation~\eqrefcus{eq:set_eced_cost},
  % \begin{align*}
  %   k' \leq k \cdot c_1 \left( \log \frac{nk}{\deltag} \right)^2 \log \frac{n}{\deltag} \cdot \frac{1}{c_\varepsilon} \ln{ \frac{8 \log n}{\deltag}} =  \bigO{\frac{k}{c_\varepsilon} \left(\log k + \log \frac{n}{\deltag}\right)^2 \log \frac{n}{\deltag} },
  % \end{align*}
  \begin{align*}
    k' \leq k \cdot c_1 \left( \log \frac{nk}{\deltag} \right)^2 \log \frac{n}{\deltag} \cdot \frac{1}{c_\varepsilon} \ln{ \frac{8 \log n}{\deltag}} =  \bigO{\frac{k}{c_\varepsilon} \left(\log \frac{nk}{\deltag}\right)^2 \left(\log \frac{n}{\deltag}\right)^2 }. %\bigO{\frac{k}{c_\varepsilon} (\log k)^2 \left(\log \frac{n}{\deltag}\right)^4 },
  \end{align*}
  To put it in words, it suffices to run \ECED for %$\bigO{\frac{k}{c_\varepsilon} \left(\log k + \log \frac{n}{\deltag}\right)^2 \log \frac{n}{\deltag} }$
  $\bigO{\frac{k}{c_\varepsilon} \left(\log \frac{nk}{\deltag}\right)^2 \left(\log \frac{n}{\deltag}\right)^2}$ steps to have expected error below $\deltag$, where $k$ denotes the worst-case cost the optimal policy that achieves expected error probability $\delta_\opt \triangleq \bigO{\frac{\deltag}{\left(\log n \cdot \log (1/\deltag)\right)^2}}$; hence the completion of the proof.
\end{proof}

%%% Local Variables:
%%% mode: latex
%%% TeX-master: "noisyec2_main"
%%% End:

\section{Examples When \GBS and the Most Informative Policy Fail}\label{sec:countereg}

In this section, we provide problem instances where \GBS and/or the Most Informative Policy may fail, while \ECED performs well. Since in the noise-free setting \ECED is equivalent to \ECT, it suffices to demonstrate the limitations of \GBS and the most informative policy, even if we provide just examples that apply to the noise-free setting.

\subsection{A Bad Example for \GBS: Imbalanced Equivalence Classes}\label{sec:badexample_gbs}

We use the same example as provided in \citet{golovin10near}. Consider an instance with a uniform prior over $\numrc$ root-causes, $\hiddenvar_1, \dots, \hiddenvar_\numrc$, and two target values $\targetvar_1 = r(\hiddenvar_1) = \dots r(\hiddenvar_{\numrc-1})$, and $\targetvar_2 = r(\hiddenvar_\numrc) $. There are tests $\Testset = \{1, \dots, \numrc\}$ such that $\Pr{\Test_e = 1 \given \hiddenvar_i} = \unit{i = e}$ (all of unit cost). Here, $\unit{\cdot}$ is the indicator function. See \figref{fig:example_gbs} for illustration.
\begin{figure*}[h]
  \centering
  \includegraphics[width=.5\textwidth]{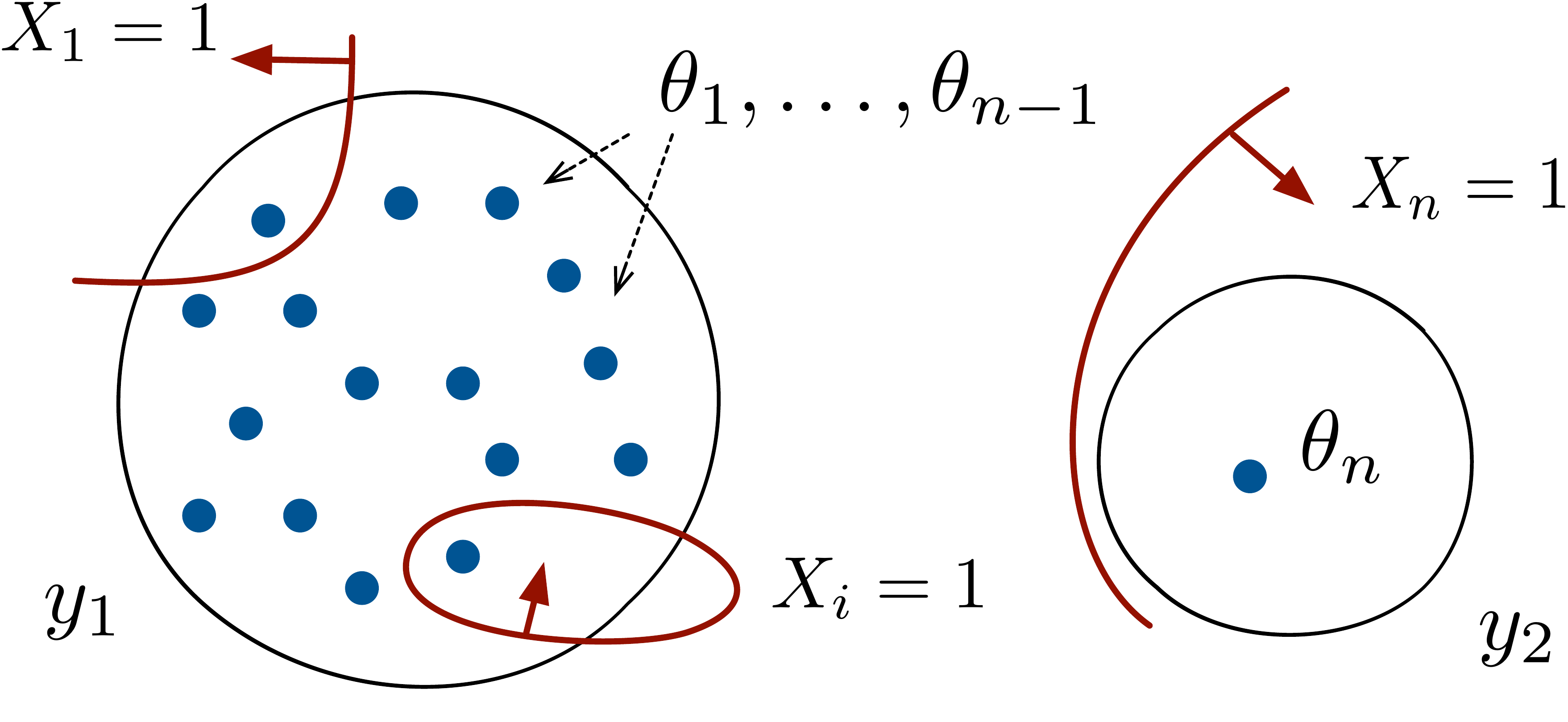}
  \caption{A problem instance where GBS performs significantly worse than \ECED (\ECT).}\label{fig:example_gbs}
\end{figure*}

Now, suppose we want to solve Problem~\eqrefcus{eq:drd} for $\delta = 1/n$. Note that in the noise-free setting, the problem is equivalent to find a minimal cost policy $\pi$ that achieves 0 prediction error, because once the error probability drops below $1/n$ we will know precisely which target value is realized. 

In this case, the optimal policy only needs to select test $\numrc$, however \GBS may select tests $\{1, \dots, \numrc\}$ in order until running test $e$, where $\Hiddenvar = \hiddenvar_e$ is the true root-cause. Given our uniform prior, it takes $\numrc/2$ tests in expectation until this happens, so that \GBS pays, in expectation, $\numrc/2$ times the optimal expected cost in this instance. Note that in this example, \ECED (equivalently, \ECT) also selects test $n$, which is optimal.

\subsection{A Bad Example for the Most Informative Policy: Treasure Hunt}
In this section, we provide a \emph{treasure-hunt} example, in which the most informative policy pays $\bigOmega{\numrc/ \log(\numrc)}$ times the optimal cost. This example is adapted from  \citet{golovin10near}, where they show that the most informative policy (referred to as the \emph{Informative Gain} policy), as well as the myopic policy that greedily maximizes the reduction in the expected prediction error (referred as the \emph{Value of Information} policy), both perform badly, compared with \ECT.

\begin{figure*}[h]
  \begin{center}
    \subfigure[Root-causes and their associated target values]{%
      \includegraphics[width=.75\textwidth]{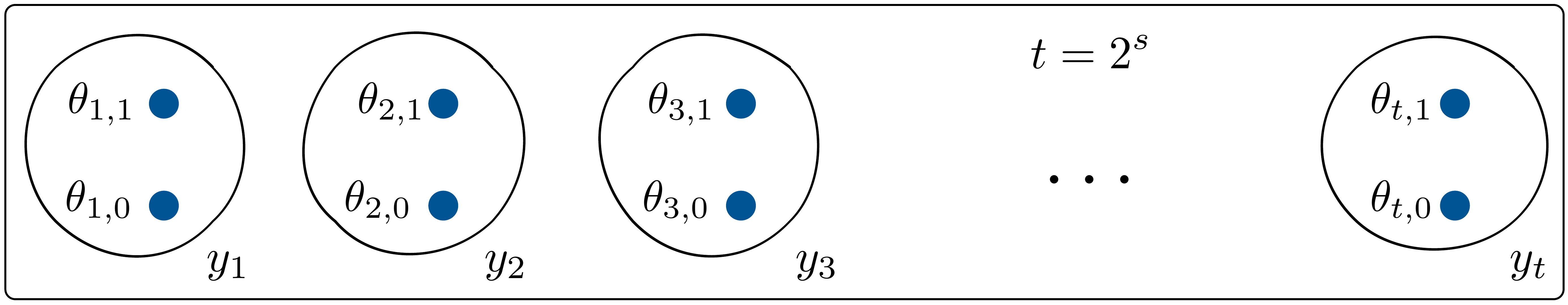}
      \label{fig:illu_example_mis}
    }
    \subfigure[Test set 1]{%
      \includegraphics[width=.7\textwidth]{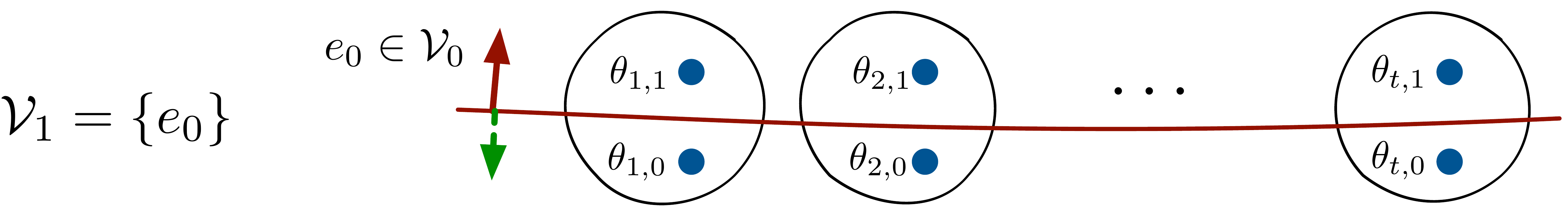}
      \label{fig:illu_example_mis_test1}
    }
    \subfigure[Test set 2]{%
      \includegraphics[width=.7\textwidth]{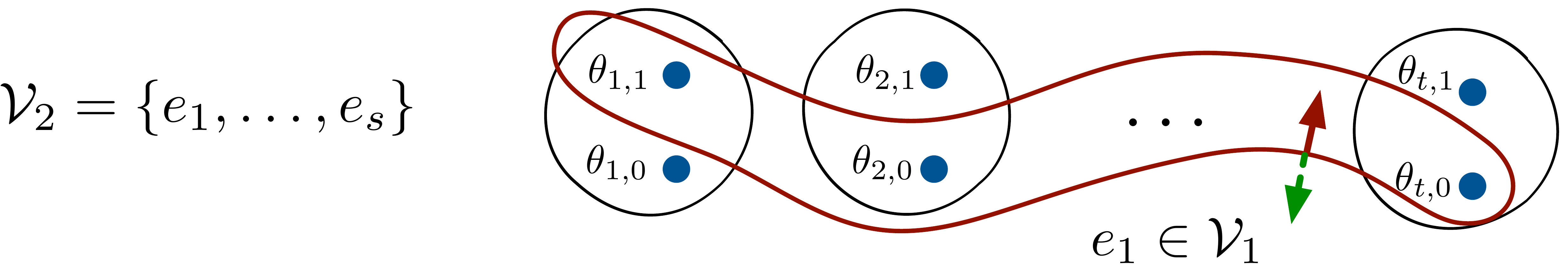}
      \label{fig:illu_example_mis_test2}
    }
    \subfigure[Test set 3]{%
      \includegraphics[width=.7\textwidth]{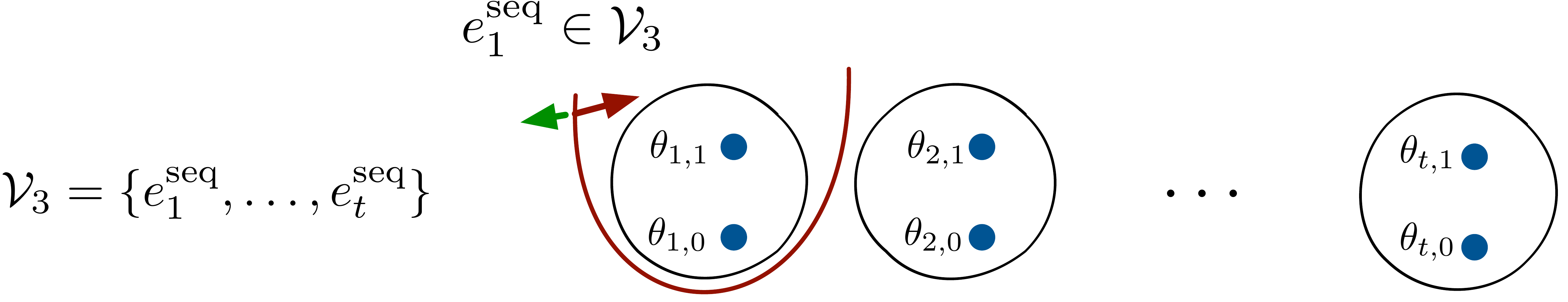}
      \label{fig:illu_example_mis_test3}
    }
  \end{center}
  \caption{A problem instance where the maximal informative policy, and the the myopic policy that greedily maximizes the reduction in the expected prediction error, perform significantly worse than \ECED (\ECT).}
\end{figure*}

Consider the problem instance in \figref{fig:illu_example_mis}. Fix $s > 0$ to be some integer, and let $\numtar = |\targetVarDom| = 2^s$. For each target value $\targetvar_i \in \targetVarDom$, there exists two root-causes, i.e., $\hiddenvar_{i,1}$, $\hiddenvar_{i,0}$, such that $r(\hiddenvar_{i,1}) = r(\hiddenvar_{i,0}) = \targetvar_i$. Denote a root-causes as $\hiddenvar_{i,o}$, if it belongs to target $i$ and is indexed by $o$. We assume a uniform prior over the root-causes: $\{\hiddenvar_{i,o}\}_{i\in\{1,\dots,t\}, o\in{0,1}}$.

Suppose we want to solve Problem~\eqrefcus{eq:drd} for $\delta = 1/3$. Similarly with \secref{sec:badexample_gbs}, the problem is equivalent to find a minimal cost policy $\pi$ that achieves 0 prediction error, because once the error probability drops below $1/3$, we will know precisely which target value is realized.

There are three set of tests, and all of them have binary outcomes and unit cost. The first set $\Testset_1 := \{e_0\}$  contains one test $e_0$, which tells us the value of $o$ of the underlying root-cause $\theta_{i,o}$. Hence for all $i$, $\Hiddenvar = \hiddenvar_{i,o} \Rightarrow X_{e_0} = o$ (see \figref{fig:illu_example_mis_test1}). The second set of tests are designed to help us quickly discover the index of the target value via binary search if we have already run $e_0$, but to offer no information whatsoever (in terms of expected reduction in the prediction error, or expected reduction in entropy of $\Targetvar$) if $e_0$ has not yet been run. There are a total number of $s$ tests in the second set $\Testset_2 := \{e_1, e_2, \dots, e_s\}$. For $z \in \{1,\dots, \numtar\}$, let $b_k(z)$ be the $k^{\text{th}}$ least-significant bit of the binary encoding of $z$, so that $z = \sum_{k=1}^{s} 2^{k-1} b_k(z)$. Then, if $\Hiddenvar = \hiddenvar_{i,o}$, then the outcome of test $e_k \in \Testset_2$ is $\Test_{e_k} = \unit{\phi_k(i) = o}$ (see \figref{fig:illu_example_mis_test2}). The third set of tests are designed to allow us to do a (comparatively slow) sequential search on the index of the the target values. Specifically, we have $\Testset_3 := \{e^{\text{seq}}_1, \dots, e^{\text{seq}}_\numtar\}$, such that $\Hiddenvar = \hiddenvar_{i,o} \Rightarrow X_{e^{\text{seq}}_k} = \unit{i = k}$ (\figref{fig:illu_example_mis_test3}).

Now consider running the maximal informative policy $\policy$ (the same analysis also applies to the value of information policy, which we omits from the paper). Note that in the beginning, no single test from $\Testset_1 \cup \Testset_2$ results in any change in the distribution over $\Targetvar$, as it remains uniform no matter with test is performed. Hence, the maximal informative policy only picks tests from $\Testset_3$, which have non-zero (positive) expected reduction in the posterior entropy of $\Targetvar$. In the likely event that the test chosen is not the index of $\Targetvar$, we are left with a residual problem in which tests in $\Testset_1 \cup \Testset_2$ still have no effect on the posterior. The only difference is that there is one less class, but the prior remains uniform. Hence our previous argument still applies, and $\policy$ will repeatedly select tests in $\Testset_3$, until a test has an outcome of 1. In expectation, the cost of $\policy$ is least $\cost(\policy) \geq \frac{1}{\numtar}\sum_{z=1}^\numtar z = \frac{\numtar + 1}{2}$.

On the other hand, a smarter policy $\policy^*$ will select test $e_0 \in \Testset_1$ first, and then performs a binary search by running test $e_1, \dots, e_s \in \Testset_2$ to determine $b_{k}(i)$ for all $1\leq k \leq s$ (and hence to determine the index $i$ of $\Targetvar$). Since the tests have unit cost, the cost of $\policy^*$ is
$\cost(\policy^*) = s + 1$.

Since $\numtar=2^s$, and $\numrc = 2 \numtar = 2^{s+1}$, we conclude that
\begin{align*}
  \cost(\policy) = \frac{\numtar + 1}{2} > \frac{\numtar}{2} = \frac{\numrc}{4} \frac{s+1}{\log \numrc} = \frac{\numrc}{4 \log(\numrc)} \cost(\policy^*).
\end{align*}

%%% Local Variables:
%%% mode: latex
%%% TeX-master: "noisyec2_main"
%%% End:

% !TEX root =  noisyec2_main.tex
\section{Case Study: Pool-based Active Learning for Classification}
\paragraph{Experimental setup.} To demonstrate the empirical performance of \ECED, we further conduct experiments on two pool-based binary active classification tasks. In the active learning application, we can sequentially query from a pool of data points, and the goal is to learn a binary classifier, which achieves some small prediction error on the unseen data points from the pool, with the smallest number of queries as possible.

\paragraph{Active Learning: Targets and Root-causes}
To discretize the hypotheses space, we use a noisy version of hit-and-run sampler as suggested in \citet{chen2013near}. Each hypothesis can be represented by a binary vector indicating the outcomes of all data points in the training set. Then, we construct an epsilon-net on the set of hypotheses (based on the Hamming distance between hypotheses). We obtain the equivalence classes for \ECED, by assigning each hypothesis to its closest center of epsilon-ball, measured by their Hamming distances. %\ak{need motivation}.
Note that the Hamming distance between two hypotheses reflects the difference of prediction error. Consider epsilon-net of fixed radius $\varepsilon$. By construction, hypotheses that lie in the some equivalence classes are at most $2\varepsilon$ away from each other; therefore the hypotheses which are within the epsilon-ball of the optimal hypotheses are considered to be near-optimal. Using the terminology in this paper, hypotheses correspond to root-causes, and the groups of hypothesis correspond to the target variable of interest. Running \ECED, ideally, will help us locate a near-optimal epsilon-ball as quickly as possible.

\paragraph{Baselines.} 
We compare \ECED with the popular uncertainty sampling heuristic (UNC-SVM), which sequentially queries the data points which are the closest to the decision boundary of a SVM classifier. We also compare with the \GBS algorithm, which sequentially queries the data points that maximally reduces the volume of the version space.

\begin{figure}[h]


  \centering
  \subfigure[{\ECT VS. \GBS}]{%
    \includegraphics[width=.33\textwidth]{results/al_ec2_vs_gbs.pdf}
    \label{fig:al_ec2vsgbs}} 
  \subfigure[\textsl{WDBC}]{%
    \includegraphics[width=.31\textwidth]{results/al_wdbc_1000hyp_errp02.pdf}
    \label{fig:al_wdbc_1000hyp_errp02}} 
  \subfigure[\textsl{Fourclass}]{%
    \includegraphics[width=.31\textwidth]{results/al_fourclass_1000hyp_errp1.pdf}
    \label{fig:al_fourclass_1000hyp_errp1}}
  \caption{Pool-based Active Learning for Classification}
  \label{fig:al_result}
\end{figure}

In \figref{fig:al_ec2vsgbs}, we demonstrate the different behaviors between \GBS and \ECT on a 2-d plane. In this simple example, there are 4 color-coded equivalence classes: we first sample hypotheses uniformly within the unit circle, and then generate equivalence classes, by constructing an epsilon-net over the sampled hypotheses as previously described. % \ak{very unclear}.
\figref{fig:al_ec2vsgbs} illustrates two tests (i.e., the gray lines intersecting the circles) selected by \ECED and \GBS, respectively. \ECED primarily selects tests that best disambiguate the clusters, while \GBS focuses on disambiguate individual hypotheses.

\paragraph{Results.}

We evaluate \ECED and the baseline algorithms on the {\sl UCI WDBC} dataset (569 instances, 32-d) and {\sl Fourclass} dataset (862 instances, 2-d). % \ak{$n,d?$}
For \ECED and \GBS, we sample a fixed number of 1000 hypotheses in each random trial. For both instances we assume a constant error rate $\epsilon = 0.02$ for all tests. \figref{fig:al_wdbc_1000hyp_errp02} and \figref{fig:al_fourclass_1000hyp_errp1} demonstrate that \ECED is competitive with the baselines. Such results suggests that grouping of hypotheses could be beneficial when learning under noisy data.

% % \begin{wrapfigure}{r}{0.45\textwidth}
% %   \vspace{-20pt}
% %   \begin{center}
% %     \includegraphics[width=.45\textwidth]{results/al_ec2_vs_gbs.pdf}
% %   \end{center}
% %   \vspace{-20pt}
% %   \caption{ \ECT vs. \GBS for active learning}
% %   \vspace{-10pt}
% % \end{wrapfigure}

%%% Local Variables:
%%% mode: latex
%%% TeX-master: "noisyec2_main"
%%% End:

\end{document}